\documentclass[runningheads]{llncs}
\usepackage{textcomp}
\usepackage[T1]{fontenc}
\usepackage{graphicx}
\usepackage{amsthm}
\usepackage{amsmath}
\newtheorem{theoremrep}{Theorem}

\newcommand{\R}{\mathbb{R}}

\newcommand{\uu}{\mathbf{u}}
\newcommand{\vv}{\mathbf{v}}

\usepackage{amssymb}
\usepackage{multirow}
\usepackage{graphicx}
\usepackage{subcaption}
\usepackage{dsfont}
\usepackage{wrapfig}
\usepackage{pdflscape}

\usepackage{caption}
\usepackage{float}
\usepackage{capt-of}

\newcommand{\twoplotblock}[4]{%
  \begin{minipage}[t]{0.46\textwidth}
    \centering
    \includegraphics[width=\linewidth]{#1}\par\vspace{0.1em}
    \includegraphics[width=\linewidth]{#2}\par\vspace{0.1em}
    \refstepcounter{subfigure}%
    \text{(\thesubfigure) #3}%
    \label{#4}
  \end{minipage}
}

\usepackage[colorinlistoftodos]{todonotes}

\usepackage[utf8]{inputenc}

\usepackage{mathtools}
\usepackage{tikz}
\usetikzlibrary{automata,positioning,angles,quotes}
\usetikzlibrary{tikzmark}
\usetikzlibrary{arrows.meta}

\usepackage{wrapfig}

\usepackage{booktabs}  
\usepackage{array}     

\newcommand{\citep}[1]{{\cite{#1}}}
\usepackage[english]{babel}
\usepackage{upgreek}
\usepackage{authblk}
\usepackage{ dsfont }
\usepackage{gensymb}
\usepackage[dvipsnames]{xcolor}
\usepackage{fontawesome5}
\usepackage{tikz}
\usetikzlibrary{automata,positioning,angles,quotes,scopes, calc, fadings}
\usetikzlibrary{arrows.meta}
\usepackage{pgfplots}
\usepgfplotslibrary{fillbetween}
\pgfplotsset{compat=newest}
\usepgfplotslibrary{colormaps} 

\usepackage{amssymb}
\usepackage{multirow}
\usepackage{graphicx}
\usepackage{adjustbox}
\usepackage{subcaption}
\usepackage[colorinlistoftodos]{todonotes}

\definecolor{lightred}{RGB}{241, 225, 222}
\definecolor{color1}{rgb}{0.1,0.498039215686275,0.9549019607843137}
\definecolor{alizarin}{rgb}{0.82, 0.1, 0.26}
\definecolor{antiquewhite}{rgb}{0.98, 0.92, 0.84}
\definecolor{azure}{rgb}{0.94, 1.0, 1.0}
\definecolor{offwhite}{rgb}{0.98, 0.95, 0.95}
\definecolor{pigment}{rgb}{0.2, 0.2, 0.6}

\usepackage{xcolor}

\definecolor{lightred}{RGB}{241, 225, 222}
\definecolor{color1}{rgb}{0.1,0.498039215686275,0.9549019607843137}
\definecolor{alizarin}{rgb}{0.82, 0.1, 0.26}
\definecolor{antiquewhite}{rgb}{0.98, 0.92, 0.84}
\definecolor{azure}{rgb}{0.94, 1.0, 1.0}
\definecolor{offwhite}{rgb}{0.98, 0.95, 0.95}
\definecolor{pigment}{rgb}{0.2, 0.2, 0.6}

\usepackage[utf8]{inputenc} 
\usepackage[T1]{fontenc}    
\usepackage{url}            
\usepackage{booktabs}       
\usepackage{amsfonts}       
\usepackage{nicefrac}       
\usepackage{microtype}      
\usepackage{stackengine}

\usepackage{tikz,pgfplots}

\usepgfplotslibrary{fillbetween}
\usepgfplotslibrary{colormaps}
\usepgfplotslibrary{patchplots}

\renewcommand{\xRightarrow}[2][]{\ext@arrow 0359\Rightarrowfill@{#1}{#2}}

\usepackage{xcolor}
\definecolor{color1}{rgb}{0.1,0.498039215686275,0.9549019607843137}
\definecolor{alizarin}{rgb}{0.82, 0.1, 0.26}
\definecolor{antiquewhite}{rgb}{0.98, 0.92, 0.84}
\definecolor{azure}{rgb}{0.94, 1.0, 1.0}
\definecolor{offwhite}{rgb}{0.98, 0.97, 0.97}
\definecolor{pigment}{rgb}{0.2, 0.2, 0.6}

\usepackage{bm}

\newcommand{\Paths}{\text{Paths}}

\providecommand*{\cupdot}{%
  \mathbin{%
    \mathpalette\@cupdot{}%
  }%
}
\newcommand*{\@cupdot}[2]{%
  \ooalign{%
    $\m@th#1\cup$\cr
    \sbox0{$#1\cup$}%
    \dimen@=\ht0 %
    \sbox0{$\m@th#1\cdot$}%
    \advance\dimen@ by -\ht0 %
    \dimen@=.5\dimen@
    \hidewidth\raise\dimen@\box0\hidewidth
  }%
}

\newcommand{\pdist}{\Delta_{\Theta}(S)}

\usepackage{colortbl}

\makeatletter
\RequirePackage[bookmarks,unicode,colorlinks=true]{hyperref}%
   \def\@citecolor{blue}%
   \def\@urlcolor{blue}%
   \def\@linkcolor{blue}%

\def\orcidID#1{\smash{\href{http://orcid.org/#1}{\protect\raisebox{-1.25pt}{\protect\includegraphics{ORCID_Color.eps}}}}}
\makeatother

\usepackage[framemethod=TikZ]{mdframed}

\newmdtheoremenv[
  innerleftmargin=5pt,
  innerrightmargin=5pt,
  innertopmargin=5pt,
  innerbottommargin=5pt,
  skipabove=10pt,
  skipbelow=10pt
]{myproblem}{Problem}

\usepackage{bbding}

\usepackage{color}

\usepackage{cite}

\begin{document}
\title{Robust Parameter Learning for Uncertain MDPs}
%
%
\author{Yannik Schnitzer, Alessandro Abate, David Parker}
\authorrunning{Y. Schnitzer et al.}
\institute{University of Oxford, Oxford, United Kingdom} 
%
\maketitle              

\begin{abstract}
Learning-based approaches to verifying unknown Markov decision processes (MDPs) often employ uncertain MDPs. These models use, for example, confidence intervals to capture transition uncertainty and allow synthesis of policies that are robust to this uncertainty.
However, this approach typically quantifies uncertainty independently for individual transition probabilities, ignoring dependencies due to shared latent quantities. 
We propose to learn such models using parametric MDPs (pMDPs), where transition probabilities are expressions over a set of parameters.
We project statistical uncertainty from empirical transition frequencies onto the pMDP's
parameter space, yielding a probably approximately correct (PAC) uncertainty model for the underlying MDP
that respects the algebraic dependencies between transitions.
The resulting models are algorithmically challenging to solve,
so we propose~a~hierarchy~of~sound~polytopic outer approximations of the induced confidence set.
We implement and evaluate our approach, demonstrating substantially tighter uncertainty estimates than classical interval-based~uncertain~MDP~learning~techniques. 
\end{abstract}
\section{Introduction}
\label{sec:intro}

Markov decision processes (MDPs) are the standard formalism for sequential decision-making under uncertainty. Their classical formulation assumes that all transition probabilities are known exactly, which is rarely realistic in practice. Two closely related model classes relax this assumption: \emph{uncertain MDPs} (UMDPs)~\cite{DBLP:journals/mor/WiesemannKR13,DBLP:journals/mor/Iyengar05,DBLP:journals/ior/NilimG05} and \emph{parametric MDPs} (pMDPs)~\cite{DBLP:journals/fac/LanotteMT07,DBLP:conf/nfm/HahnHZ11}. UMDPs represent \emph{epistemic} uncertainty by associating transition probabilities with \emph{uncertainty sets} of admissible values;
interval MDPs (IMDPs) are a common example.
pMDPs specify transition probabilities symbolically as expressions over parameters that range over an admissible parameter space. Both formalisms thus capture imprecise model knowledge by describing not a single precise model, but a family of models, each representing a possible true underlying system.

Parametric MDPs have found applications in model repair~\cite{DBLP:conf/tacas/BartocciGKRS11,DBLP:conf/popl/LongR16}, sensitivity analysis~\cite{DBLP:conf/cav/BadingsJMTJ23}, and quality-of-service control~\cite{DBLP:journals/tr/CalinescuGJPRT16}, predominantly via the problem of \emph{parameter synthesis}~\cite{DBLP:journals/acta/CeskaDPKB17,DBLP:conf/atva/QuatmannD0JK16,DBLP:journals/fmsd/JungesAHJKQV24,DBLP:journals/tac/CubuktepeJJKT22,DBLP:conf/tacas/Cubuktepe0JKPPT17}, which partitions the parameter space into 
two regions, corresponding to parameter valuations for which the model respectively satisfies or violates a given property.
Uncertain MDPs have emerged as a standard model for reasoning about \emph{robustness}: they support the synthesis of \emph{robust} policies, i.e., policies that are optimal under the assumption that epistemic uncertainty is resolved adversarially by the environment, selecting the worst-case admissible distributions with respect to the agent's desired objective~\cite{DBLP:journals/mor/WiesemannKR13,DBLP:journals/ior/NilimG05,DBLP:journals/mor/Iyengar05}.

UMDPs are therefore well suited to \emph{learning-based} methods for the verification of MDPs whose transition probabilities are unknown.
Typically, uncertainty sets for transition probabilities are learned from interactions with the true but unknown system, for instance as statistically sound confidence sets such as confidence intervals~\cite{DBLP:conf/icml/StrehlL05,DBLP:conf/nips/SuilenS0022,DBLP:conf/cav/AshokKW19,DBLP:conf/qestformats/MeggendorferWW25,DBLP:journals/aaai/SchnitzerAP26}. The resulting learned UMDP then contains the true system with high confidence. Consequently, any robust policy synthesised on this UMDP is guaranteed, with the same confidence, to achieve on the true system at least the robust value certified by the learned~model.

Existing approaches to UMDP learning typically treat transition probabilities across the model as independent, learning separate confidence sets for each of them~\cite{DBLP:conf/birthday/SuilenBB0025,DBLP:conf/nips/SuilenS0022,DBLP:conf/aaai/BadingsA00PS22}. In practice, however, these distributions are often coupled through shared latent quantities, such as common failure probabilities, shared reliability parameters or global environmental conditions that remain fixed 
and affect multiple transitions. Such dependencies are often structurally known and are conveniently represented by pMDPs,
even though the precise values of the underlying quantities are unknown and cannot be directly observed.

This work addresses the problem of UMDP learning under known structural parametric dependencies that are captured by a pMDP. We assume a known pMDP for which an unknown parameter instantiation induces the true underlying system. From sampled interactions with this system, we first derive statistical confidence sets for the individual transition probabilities and then exploit the parametric structure of the pMDP to project these sets into the parameter space. This yields a confidence region in parameter space that contains the true underlying parameter instantiation with high confidence.

\vspace{5pt}
\noindent \textbf{Example.}
We illustrate this idea using an example: a Mars-rover navigation domain, inspired by the semi-autonomous vehicle environment in~\cite{DBLP:conf/tacas/Junges0DTK16,DBLP:journals/jfr/StucklerSSTB16}. A controller communicates with the rover via two lossy satellite channels while the rover attempts to collect samples and return home. The probability of successful communication depends on the rover's position through known coefficients, but on unknown channel reliabilities.
At each position, it is given by
$
v^\top \theta \;=\; v_1\theta_1 + v_2\theta_2,
$
where $v=(v_1,v_2)$ is position-dependent and $\theta=(\theta_1,\theta_2)$ are the unknown reliabilities. The domain is therefore naturally~modelled~as~a~pMDP~as~shown~in~Figure~\ref{fig:pipesub1}.

\begin{figure}[t]
\centering
\makebox[\textwidth][c]{%
\begin{subfigure}[t]{0.39\textwidth}
  \vspace{0pt}
  \centering
  \resizebox{\linewidth}{!}{%
  \begin{tikzpicture}[on grid, auto, very thick, >=latex, scale=0.60, transform shape]
    \tikzstyle{state}=[circle, draw, minimum size=1.2cm, align=center,
      fill=lightred, fill opacity=0.8, text opacity=1]
    \tikzstyle{split}=[draw=black, circle, fill=BrickRed, fill opacity=0.5, inner sep=3.0pt]

    \node[state] (S0) at (0,0) {\Large$\mathbf{s_0}$};
    \node[state] (S1) at (4,0) {\Large$\mathbf{s_1}$};
    \node[state] (S2) at (0,-4) {\Large$\mathbf{s_2}$};
    \node[state] (S3) at (4,-4) {\Large$\mathbf{s_3}$};

    \coordinate (S1a2) at ($(S1.east)+(1.4,0)$);
    \draw[path fading=east] (S1.east) -- (S1a2);
    \node[above, font=\large] at ($(S1.east)+(0.55,0)$) {$a$};
    \node[right] at (S1a2) {$\cdots$};

    \node[split] (Da) at ($(S0.east)+(1.0,0)$) {};
    \draw[->] (S0) -- node[above, font=\large,yshift=2pt] {$a$} (Da);
    \draw[->] (Da) -- node[above, font=\large] {$u^\top\theta$} (S1);
    \draw[->] (Da) to[bend right=12] node[right,xshift=2pt, font=\large] {$1-u^\top\theta$} (S2);

    \node[split] (Db0) at ($(S0.south)+(0,-1.0)$) {};
    \draw[->] (S0) -- node[left, font=\large,xshift=-2pt] {$b$} (Db0);
    \draw[->] (Db0) -- node[left, font=\large] {$u^\top\theta$} (S2);
    \draw[->] (Db0) to[bend left=65, looseness=1.15] node[left, font=\large,xshift=-2pt] {$1-u^\top\theta$} (S0);

    \node[split] (Db) at ($(S1.south)+(0,-1.0)$) {};
    \draw[->] (S1) -- node[right, font=\large,xshift=2pt] {$b$} (Db);
    \draw[->] (Db) -- node[right, font=\large] {$1-v^\top\theta$} (S3);
    \draw[->] (Db) to[bend left=12] node[above,xshift=-8pt,yshift=5pt, font=\large] {$v^\top\theta$} (S2);

    \node[split] (Da2) at ($(S2.east)+(1.0,0)$) {};
    \draw[->] (S2) -- node[below, font=\large,yshift=-2pt] {$a$} (Da2);
    \draw[->] (Da2) -- node[below, font=\large,yshift=-2pt] {$w^\top\theta$} (S3);
    \draw[->] (Da2) to[bend left=60, looseness=1.15] node[below, font=\large,yshift=-4pt] {$1-w^\top\theta$} (S2);

    \coordinate (S3aR) at ($(S3.east)+(1.4,0)$);
    \draw[path fading=east] (S3.east) -- (S3aR);
    \node[above, font=\large] at ($(S3.east)+(0.55,0)$) {$a$};
    \node[right] at (S3aR) {$\cdots$};

    \coordinate (S3bD) at ($(S3.south)+(0,-1.4)$);
    \draw[path fading=south] (S3.south) -- (S3bD);
    \node[right, font=\large] at ($(S3.south)+(0,-0.55)$) {$b$};
    \node[below] at (S3bD) {$\vdots$};

    \coordinate (S2bD) at ($(S2.south)+(0,-1.4)$);
    \draw[path fading=south] (S2.south) -- (S2bD);
    \node[left, font=\large] at ($(S2.south)+(0,-0.55)$) {$b$};
    \node[below] at (S2bD) {$\vdots$};
  \end{tikzpicture}%
  }
  \caption{Parametric MDP}
  \label{fig:pipesub1}
\end{subfigure}
\hspace{-0.03\textwidth}%
\begin{subfigure}[t]{0.39\textwidth}
  \vspace{0pt}
  \centering
  \resizebox{\linewidth}{!}{%
  \begin{tikzpicture}[on grid, auto, very thick, >=latex, scale=0.60, transform shape, xshift=-5pt]
    \tikzstyle{state}=[circle, draw, minimum size=1.2cm, align=center,
      fill=lightred, fill opacity=0.8, text opacity=1]
    \tikzstyle{split}=[draw=black, circle, fill=BrickRed, fill opacity=0.5, inner sep=3.0pt]

    \node[state] (S0) at (0,0) {\Large$\mathbf{s_0}$};
    \node[state] (S1) at (4,0) {\Large$\mathbf{s_1}$};
    \node[state] (S2) at (0,-4) {\Large$\mathbf{s_2}$};
    \node[state] (S3) at (4,-4) {\Large$\mathbf{s_3}$};

    \coordinate (S1a2) at ($(S1.east)+(1.4,0)$);
    \draw[path fading=east] (S1.east) -- (S1a2);
    \node[above, font=\large] at ($(S1.east)+(0.55,0)$) {$a$};
    \node[right] at (S1a2) {$\cdots$};

    \node[split] (Da) at ($(S0.east)+(1.0,0)$) {};
    \draw[->] (S0) -- node[above, font=\large,yshift=2pt] {$a$} (Da);
    \draw[->] (Da) -- node[above, font=\large] {$[0.3,0.5]$} (S1);
    \draw[->] (Da) to[bend right=12] node[right,xshift=2pt, font=\large] {$[0.5,0.7]$} (S2);

    \node[split] (Db0) at ($(S0.south)+(0,-1.0)$) {};
    \draw[->] (S0) -- node[left, font=\large,xshift=-2pt] {$b$} (Db0);
    \draw[->] (Db0) -- node[left, font=\large] {$[0.2,0.4]$} (S2);
    \draw[->] (Db0) to[bend left=65, looseness=1.15] node[left, font=\large,xshift=-2pt] {$[0.6,0.8]$} (S0);

    \node[split] (Db) at ($(S1.south)+(0,-1.0)$) {};
    \draw[->] (S1) -- node[right, font=\large,xshift=2pt] {$b$} (Db);
    \draw[->] (Db) -- node[right, font=\large] {$[0.5,0.8]$} (S3);
    \draw[->] (Db) to[bend left=12] node[above,xshift=-8pt,yshift=5pt, font=\large] {$[0.2,0.5]$} (S2);

    \node[split] (Da2) at ($(S2.east)+(1.0,0)$) {};
    \draw[->] (S2) -- node[below, font=\large,yshift=-2pt] {$a$} (Da2);
    \draw[->] (Da2) -- node[below, font=\large,yshift=-2pt] {$[0.3,0.6]$} (S3);
    \draw[->] (Da2) to[bend left=60, looseness=1.15] node[below, font=\large,yshift=-4pt] {$[0.4,0.7]$} (S2);

    \coordinate (S3aR) at ($(S3.east)+(1.4,0)$);
    \draw[path fading=east] (S3.east) -- (S3aR);
    \node[above, font=\large] at ($(S3.east)+(0.55,0)$) {$a$};
    \node[right] at (S3aR) {$\cdots$};

    \coordinate (S3bD) at ($(S3.south)+(0,-1.4)$);
    \draw[path fading=south] (S3.south) -- (S3bD);
    \node[right, font=\large] at ($(S3.south)+(0,-0.55)$) {$b$};
    \node[below] at (S3bD) {$\vdots$};

    \coordinate (S2bD) at ($(S2.south)+(0,-1.4)$);
    \draw[path fading=south] (S2.south) -- (S2bD);
    \node[left, font=\large] at ($(S2.south)+(0,-0.55)$) {$b$};
    \node[below] at (S2bD) {$\vdots$};
  \end{tikzpicture}%
  }
\phantomsubcaption\label{fig:threepanel_polytope}
\makebox[\linewidth][c]{\hspace*{-8mm}\raisebox{-0.8mm}{(b)\hspace{0.5em}Learned IMDP}}
  \label{fig:pipesub2}
\end{subfigure}
\hspace{-0.01\textwidth}
\begin{subfigure}[t]{0.25\textwidth}
  \vspace{0pt}
  \centering
  \resizebox{\linewidth}{!}{%
  \begin{tikzpicture}[scale=0.85, xshift=4mm, yshift=16mm]
    \shade[shading=axis,
           bottom color=white, top color=lightred!30,
           shading angle=90,
           opacity=0.75] (0,0) rectangle (3.2,3.2);
    \shade[shading=axis,
           left color=white, right color=lightred!30,
           shading angle=0,
           opacity=0.75] (0,0) rectangle (3.2,3.2);

    \draw[-{Latex[length=4mm,width=3mm]}, line width=1.6pt] (0,0) -- (3.2,0)
      node[midway, below] {$\theta_1$};
    \draw[-{Latex[length=4mm,width=3mm]}, line width=1.6pt] (0,0) -- (0,3.2)
      node[midway, left] {$\theta_2$};

    \tikzfading[name=bothends,
      left color=transparent!100,
      right color=transparent!100,
      middle color=transparent!0]

    \draw[path fading=bothends, fading angle=90, opacity=0.55, line width=1.4pt] (0.95,0.30) -- (0.95,2.80);
    \draw[path fading=bothends, fading angle=90, opacity=0.55, line width=1.4pt] (2.10,0.30) -- (2.10,2.80);

    \draw[path fading=bothends, opacity=0.55, line width=1.4pt] (0.15,1.00) -- (2.25,3.10);
    \draw[path fading=bothends, opacity=0.55, line width=1.4pt] (1.05,0.30) -- (2.85,2.10);

    \draw[path fading=bothends, opacity=0.55, line width=1.4pt] (0.10,1.95) -- (2.0,0.05);
    \draw[path fading=bothends, opacity=0.55, line width=1.4pt] (0.85,2.85) -- (2.95,0.75);

    \node[font=\scriptsize, anchor=west] at (2.05,2.45) {$u^\top\theta$};
    \node[font=\scriptsize, anchor=west] at (2.10,3.20) {$v^\top\theta$};
    \node[font=\scriptsize, anchor=west] at (2.25,0.50) {$w^\top\theta$};

    \coordinate (P1) at (0.95,1.10);
    \coordinate (P2) at (0.95,1.80);
    \coordinate (P3) at (1.425,2.275);
    \coordinate (P4) at (2.10,1.60);
    \coordinate (P5) at (2.10,1.35);
    \coordinate (P6) at (1.40,0.65);

    \fill[BrickRed, opacity=0.5] (P1)--(P2)--(P3)--(P4)--(P5)--(P6)--cycle;
    \draw[BrickRed!70!black, line width=0.9pt] (P1)--(P2)--(P3)--(P4)--(P5)--(P6)--cycle;
    \node at (1.50,1.50) {\large $\mathcal{U}$};
  \end{tikzpicture}%
  }
\phantomsubcaption\label{fig:threepanel_polytope}
\makebox[\linewidth][c]{\hspace*{-6mm}\raisebox{-1.8mm}{(c)\hspace{0.5em}Confidence Region}}
  \label{fig:threepanel_polytope}
\end{subfigure}%
}
\caption{Construction of the parameter confidence region $\mathcal{U}$ from a learned IMDP.}
\label{fig:overview_threepanel_pipeline}
\vspace{-6pt}
\end{figure}

From sampled transitions of the rover, the standard approach learns an IMDP by assigning a confidence interval to each transition probability; see Figure~\ref{fig:pipesub2}. This yields a robust model with formal confidence guarantees on inclusion of the true unknown system, but treats transition probabilities as independent. In our example, however, they are coupled through the shared parameter instantiation~$\theta$.

Our approach exploits this structure by projecting the learned transition-wise intervals through the parametric structure of the pMDP into the~parameter~space. Each learned transition confidence interval $[l,u]$ induces constraints of the form
\[
l \leq v_1\theta_1 + v_2\theta_2 \leq u.
\]
Collecting these constraints yields a parameter confidence region $\mathcal{U}$, as shown in Figure~\ref{fig:threepanel_polytope}. With high confidence, the true parameter instantiation lies in $\mathcal{U}$.

The confidence region $\mathcal{U}$, together with the pMDP,
induces a UMDP $U$ that contains the true MDP with high confidence.
This is tighter than a UMDP constructed from the confidence sets for individual transitions
(e.g., the IMDP in Fig.~\ref{fig:pipesub2})
since it rules out transition functions that are inconsistent with any of the constraints induced by the parametric expressions and the learned intervals, leading to less conservative robust policies and~tighter~guarantees~from~the~same~data.

However, because of its (non-rectangular) structure, the UMDP $U$ is computationally very challenging to solve. 
We tackle this problem by proposing a hierarchy of relaxations of the induced uncertainty set,
yielding UMDPs that can be solved more efficiently to generate robust policies.
All relaxations are sound and thus produce \emph{probably approximately correct} (PAC)~\cite{DBLP:conf/stoc/Valiant84} performance guarantees. 
This hierarchy illustrates a clear trade-off between the tightness and efficiency of the relaxations.
We implement our methods and evaluate them on a range of established benchmarks.
The results demonstrate substantially tighter uncertainty estimates
than classical interval-based uncertain MDP~learning~techniques.

\section{Preliminaries}
\label{sec:prelim}

For a finite set $X$, let $\Delta(X) = \{\mu \colon X \to [0,1] \mid \sum_{x \in X} \mu(x) = 1\}$ denote the set of probability distributions over $X$. 

\begin{definition}[Markov decision process]
A (finite) MDP is a tuple
$
M=(S,A,s_0,P),
$
where $S$ is a finite set of states, $A$ is a finite set of actions, $s_0\in S$ is an initial state, and
$P:S\times A \to \Delta(S)$ is a transition kernel. We write $P(s,a)(s')$ (or $P(s,a,s')$) for the
probability to move from state $s$ to state $s'$ under action $a$.
\end{definition}

A \emph{policy} (or \emph{strategy}) is a function
$\pi \colon S \to A$ that selects an action $\pi(s)$ in each state $s$.
An (infinite) \emph{path} is a sequence $\rho = s^0 a^0 s^1 a^1 \dots \in (S\times A)^\omega$ such that
$P(s^t,a^t)(s^{t+1})>0$ for all $t\ge 0$.
We write $\Paths(s)$ for the set of paths starting in $s$ 
and $\Pi$ for the set of policies.
An \emph{objective} is a measurable mapping $R \colon \Paths(s) \to \mathbb{R}$ assigning a return to each path.
Typical objectives are based on satisfaction of temporal-logic properties~\cite{BdA95}
or accumulation of rewards~\cite{DBLP:books/wi/Puterman94}. 
A policy $\pi\in\Pi$ executed in an MDP $M$ induces a probability measure over $\Paths(s)$~\cite{DBLP:books/wi/Puterman94}. We define the \emph{value} of $\pi$ in~$s$~as~$
V^\pi_M(s) \coloneqq \mathbb{E}^{\pi}_{M,s}[R]\, .
$

\subsection{Parametric and Uncertain MDPs}

Let $\Theta=\{\theta_1,\dots,\theta_\ell\}$ be an ordered set of parameters.
A \emph{parameter instantiation} is a vector $\mathbf{u} \in \mathbb{R}^\ell$. We assume that all parameters are bounded, i.e., $l_i \leq \mathbf{u}_i \leq u_i$ for some $l_i,u_i \in \mathbb{R}$, and refer to $\mathcal{D} \coloneqq \prod_{i=1}^{\ell} [l_i,u_i]$ as the \emph{parameter space}.
Let $\mathbb{Q}[\Theta]$ denote the ring of polynomials over $\Theta$ with rational coefficients.
Each $f\in\mathbb{Q}[\Theta]$ induces a function $f:\mathbb{R}^\ell\to\mathbb{R}$, where $f[\mathbf{u}]$ denotes the polynomial $f$ evaluated by substituting every occurrence of $\theta_i$ with its instantiation $\mathbf{u}_i$. We denote by \[\Delta_{\Theta}(X) = \{f \colon X \to \mathbb{Q}[\Theta] \mid \forall \uu \in \mathcal{D}. \; f(x)[\uu] \in [0,1], \sum_{x\in X} f(x)[\uu] = 1\}\] the set of all \emph{parametric probability distributions} over a finite set $X$, i.e., the set of all parametric functions that induce a valid probability distribution over $X$ for any parameter instantiation $\uu \in \mathcal{D}$.

Parametric Markov decision processes extend standard MDPs by allowing transition probabilities to be specified as parametric probability distributions, where fixing a parameter instantiation yields an ordinary MDP.
\begin{definition}[Parametric MDP]
\label{def:pmdp}
A \emph{parametric MDP} (pMDP) is a tuple
$M_\Theta=(S,A,s_0,\Theta,P_\Theta)$, where $S,A,s_0$ are as for MDPs, $\Theta$ is a parameter set, and
$
P_\Theta: S\times A \to \pdist
$
is a \emph{parametric} transition probability function.
For a parameter instantiation $\uu\in\mathbb{R}^l$, we obtain the instantiated MDP $M[\uu]=(S,A,s_0,P[\uu])$ with
$
P[\uu](s,a)(s') := P_{\Theta}(s,a)(s')[\uu]$, for all $s,s'\in S,\ a\in A$.
\end{definition}

In this work, we assume there is a true but unknown MDP $M$, obtained by instantiating a \emph{known} pMDP $M_\Theta$ with an \emph{unknown} parameter instantiation $\uu$.
The objective is to learn, from sampled interactions, an approximation of the true dynamics with quantified uncertainty, and to synthesise policies that are robust to the residual uncertainty. 
To represent partial knowledge, 
we adopt the framework of UMDPs.
A UMDP is an MDP in which transition probabilities are not fixed, but are only required to lie in~specified~\emph{uncertainty sets}.

\begin{definition}[Uncertain MDP]
\label{def:umdp}
A UMDP is a tuple
$
U=(S,A,s_0,\mathcal{P}),
$
where $S,A,s_0$ are as for MDPs and
$
\mathcal{P} \subseteq (\Delta(S))^{S \times A}
$
is an \emph{uncertainty set} of admissible transition probability functions $P \in \mathcal{P}$. The UMDP is \emph{(s,a)-rectangular} if $\mathcal{P}$
factorises as the product of local uncertainty sets, i.e.,
$
\mathcal{P}
\;=\;
\prod_{(s,a)\in S\times A}\mathcal{P}(s,a),
$
where $\mathcal{P}(s,a) \subseteq \Delta(S)$.
Otherwise, the uncertainty is \emph{coupled} (non-rectangular).

\end{definition}

There is a close relationship between pMDPs and UMDPs.
Given a pMDP and a set $\mathcal{U}$ of admissible parameter instantiations, we obtain a UMDP by letting the uncertainty set contain exactly the successor distributions induced by instantiations $\uu \in \mathcal{U}$.
Conversely, a UMDP can be represented as a pMDP together with a set of admissible parameter instantiations inducing precisely the uncertainty set.
We formalise this correspondence for completeness in Appendix~\ref{app:pmdpumdp}.

In this work, the known pMDP serves as a structural model, capturing dependencies across transitions.
From sampled interactions with the unknown system, we infer a set of plausible parameter instantiations $\mathcal{U}$ with quantified confidence and, equivalently, a UMDP whose uncertainty sets contain the transition distributions consistent with $\mathcal{U}$.
This UMDP is then used for robust policy synthesis under the remaining uncertainty about the true dynamics of $M$.

\vspace{5pt}
\noindent \textbf{Interval MDPs.}
We recall a standard class of UMDPs that we use throughout. 
An \emph{interval MDP} (IMDP) specifies, for each $(s,a,s')\in S\times A\times S$, componentwise lower and upper bounds
$\ell_{(s,a,s')},u_{(s,a,s')}\in[0,1]$.
The admissible successor distributions from $(s,a)$ then are those whose components respect these bounds:
\[
\mathcal{P}(s,a)
\;=\;
\Bigl\{\mu \in \Delta(S)\ \Big|\ \forall s'\in S:\ \ell_{(s,a,s')} \le \mu(s') \le u_{(s,a,s')} \Bigr\}.
\]

\subsection{Solving Uncertain MDPs}
\label{sec:solving_umdp}

For a UMDP, a \emph{robust} policy is one that optimises worst-case performance over all admissible transition kernels.
Following the standard robust MDP interpretation~\cite{DBLP:journals/ior/NilimG05,DBLP:journals/mor/Iyengar05,DBLP:journals/mor/WiesemannKR13},
we view uncertainty as being resolved adversarially by an \emph{environment} (or \emph{nature}) that selects admissible transition probabilities.
Formally, an agent policy $\pi$ and an admissible transition kernel $P \in \mathcal{P}$ induce a probability measure over paths~\cite{DBLP:conf/cdc/WolffTM12}. We denote the corresponding expectation by $\mathbb{E}^{\pi}_{P,s}[R]$ and define the value of state $s$ under $(\pi,P)$ as
$
V^{\pi}_{P}(s) \coloneqq \mathbb{E}^{\pi}_{P,s}[R].
$
The \emph{robust value} is the maximal expected return under the worst-case nature:
\begin{equation}
V^*_U(s)
\;\coloneqq\;
\sup_{\pi \in \Pi}\ \inf_{\raisebox{-0.5ex}{$\scriptstyle P \in \mathcal{P}$}}\ V^{\pi}_P(s),
\label{eq:robust_value_def}
\end{equation}
and a \emph{robust-optimal policy} is any
$
\pi^*\in \text{argsup}_{\pi}\inf_{P} V^{\pi}_P(s).
$
In particular, $\pi^*$ guarantees to achieve at least $V^*_U(s)$ for any admissible transition kernel $P\in\mathcal{P}$.

Computing~\eqref{eq:robust_value_def} can be challenging in general, in particular for coupled uncertainty sets~\cite{DBLP:journals/mor/WiesemannKR13}.
For \emph{rectangular} uncertainty (i.e.\ $\mathcal{P}=\prod_{(s,a)}\mathcal{P}(s,a)$), the inner minimisation decomposes into local optimisations at each state--action pair, and the robust value satisfies a Bellman-type optimality equation.
For example, for reachability with target set $T\subseteq S$, the robust Bellman equations take the form
\begin{equation}
V^*_U(s)
=
\max_{a\in A}\ \underbrace{\min_{\mu \in \mathcal{P}(s,a)}\ \sum_{s'\in S} \mu(s')\, V^*_U(s')}_{\text{Inner Optimisation}},
\quad s\in S\setminus T,
\label{eq:robust_bellman_rect}
\end{equation}
together with boundary conditions $V^*_U(s)=1$ for all $s\in T$.
Analogous robust Bellman equations can be given, e.g., for expected or discounted reward objectives~\cite{DBLP:journals/ior/NilimG05,DBLP:journals/mor/Iyengar05}.
These equations can be solved with \emph{robust value iteration}, i.e., by repeatedly applying the robust Bellman operator until convergence~\cite{DBLP:journals/ior/NilimG05,DBLP:journals/mor/Iyengar05}. 

\subsection{Learning Uncertain MDPs}
\label{sec:learn-umdp}

We review the basics of learning an UMDP from data generated by an unknown MDP $M$.
We assume access to a dataset of transition samples
$
C=\{(s_i,a_i,s_i')\}_{i=1}^n,
$
obtained either (i) by collecting trajectories in the MDP, i.e., episodic sampling with restarts from the initial state, or continuing interaction, or
(ii) via a \emph{generative model} that allows direct sampling of next states for state--action pairs~\cite{DBLP:conf/icml/StrehlL05,DBLP:conf/birthday/SuilenBB0025}.

From $C$ we extract the empirical counts
$\#(s,a)$ and $\#(s,a,s')$ for all $s,s'\in S,\ a\in A$,
where $\#(s,a)$ is the number of occurrences of $(s,a)$ and $\#(s,a,s')$ the number of observed transitions from $s$ to $s'$ under action $a$.
Whenever $\#(s,a)>0$, the empirical estimate of the transition probability is
$
\tilde P(s,a,s') \ :=\ \frac{\#(s,a,s')}{\#(s,a)} .
$

A standard UMDP learning technique is to construct an IMDP that over-approximates the unknown transition probabilities by confidence intervals. For each $(s,a,s')$, we view $\#(s,a,s')$ as a binomial observation with $\#(s,a)$ trials and success probability $P(s,a,s')$, and lift the empirical estimate $\tilde P(s,a,s')$ to a confidence interval
$
I_\gamma(s,a,s') \ :=\ [\,\ell(s,a,s'),\,u(s,a,s')\,],
$
such that
$
\Pr\bigl[P(s,a,s')\in I_\gamma(s,a,s')\bigr]\ \ge\ 1-\gamma,
$
for $\gamma \in (0,1)$. Suitable choices for $I_\gamma$ include standard binomial confidence intervals such as the Clopper--Pearson interval~\cite{ClopperPearson1934,DBLP:conf/qestformats/MeggendorferWW25}. If $\#(s,a)=0$, no information is available for that state--action pair, and we therefore use the trivial interval $I_\gamma(s,a,s')=[0,1]$ for all $s'\in S$. 

For the objectives considered in this paper, such as reachability probabilities, intervals containing both zero and non-zero values can be handled directly by the robust value iteration in Eq.~\eqref{eq:robust_bellman_rect}. For more general temporal-logic objectives, however, satisfaction may depend on the graph structure of the UMDP. In that case, intervals containing both zero and non-zero values allow the adversarial nature to alter the graph structure itself, which complicates robust synthesis. This issue is not specific to our approach, but inherent to temporal-logic reasoning over UMDPs in general. Existing techniques therefore often assume \emph{constant support}, i.e., that the set of successors is known a priori~\cite{DBLP:conf/cdc/WolffTM12,DBLP:conf/nips/SuilenS0022,DBLP:conf/birthday/SuilenBB0025}. Under this assumption, one may replace any lower bound $0$ in the learned intervals by a sufficiently small $\varepsilon>0$ in order to preserve the known graph structure. 

To obtain an overall confidence level $1-\delta$, with $\delta \in (0,1)$, we distribute the confidence budget across all transition probabilities and set
$\gamma := \delta / |S\times A \times S|$.
Hence, with probability at least $1-\delta$, the unknown MDP is contained in the learned IMDP
(e.g.,~\cite{DBLP:conf/nips/SuilenS0022,DBLP:journals/aaai/SchnitzerAP26}).
Consequently, the value of a policy on the learned IMDP 
yields a ($1{-}\delta$)-valid lower bound on the performance of the policy in $M$.
Other common constructions of uncertainty sets $\mathcal{P}(s,a)$ with formal confidence guarantees include, e.g., $L_1$-norm balls based on the Weissman inequality~\cite{DBLP:conf/icml/StrehlL05,WeissmanEtAl2003}.

\subsection{Parameter Tying}
\label{sec:parametertying}

An established method for improving IMDP learning in the presence of known parametric dependencies is \emph{parameter tying}~\cite{DBLP:conf/tacas/SchnitzerAP25,DBLP:conf/qest/PolgreenWHA16,DBLP:journals/corr/abs-2507-15532}. Consider two transitions $(s,a,s')$ and $(t,b,t')$ in $M_\Theta$ with $s \neq t$ and $P_\Theta(s,a,s') = P_\Theta(t,b,t')$, i.e., both transition probabilities are described by the same expression.
Then the samples in $C$ obtained for these transitions correspond to the same Bernoulli experiment and can be \emph{pooled} to derive a single interval, which is then assigned to both transitions.
Moreover, the number of components to be learned over which we distribute the overall confidence budget $\delta$ reduces to the number of distinct parametric expressions in $M_\Theta$. 
Details on parameter tying are provided in Appendix~\ref{app:parametertying}.

Parameter tying applies only when parametric expressions are \emph{exactly} equivalent. 
Our approach instead aggregates information from the parametric structure and the collected samples by projecting the learned constraints through the algebraic structure of $M_\Theta$ into the parameter space. This jointly accounts for observations across all transitions and can tighten uncertainty even for transitions whose expressions have not been observed in any sample in $C$.
We use parameter tying as a pre-processing step: we pool the observed counts for equivalent parametric expressions and then apply our technique to the pooled data. Let
\begin{equation}
   \Lambda \;=\; \{\,P_{\Theta}(s,a,s') \mid s,s' \in S,\ a \in A\,\} 
\end{equation}
be the set of distinct parametric expressions occurring in $M_{\Theta}$. We distribute the confidence budget only across those expressions that are not constant polynomials:
$
\Lambda_{\mathsf{unk}} \;:=\; \{\, f \in \Lambda \mid f \notin \mathbb{Q}\,\}.
$
For each $f \in \Lambda_{\mathsf{unk}}$, we compute from the pooled counts a confidence interval $[l_f,u_f]$ as per Section~\ref{sec:learn-umdp} such that, under the true but unknown parameter instantiation $\uu$,
$
\Pr\bigl[\,l_f \le f[\uu] \le u_f\,\bigr] \;\ge\; 1 - \frac{\delta}{|\Lambda_{\mathsf{unk}}|}.
$
For constant expressions $f\in\Lambda\setminus\Lambda_{\mathsf{unk}}$, we set $l_f=u_f=f$.
Substituting each occurrence of $f$ in $M_\Theta$ by the corresponding interval $[l_f,u_f]$ yields an IMDP that contains the true (unknown) MDP $M$ with probability at least $1-\delta$.

\section{Robust Parametric UMDP Learning}
\label{sec:learning}

\noindent
IMDP learning 
inherently assumes that all transition probabilities are \emph{independent} and learned as individual confidence intervals. 
We now introduce our approach which leverages dependencies specified by an underlying pMDP model. 
From the observed transitions, we infer a set of plausible instantiations $\mathcal{U}$ with quantified confidence, exploiting the dependencies captured by shared~parameters. 

Throughout, we consider a fixed pMDP $M_\Theta$ and an unknown MDP $M$ induced by $M_\Theta$ under an unknown parameter instantiation $\uu$. We further fix a data set of i.i.d.\ transition samples $C$ obtained from $M$, as described in Section~\ref{sec:learn-umdp}. We focus initially on the case where the per-transition uncertainty sets are intervals, i.e., we learn an IMDP approximation of $M$ and then project it into the parameter space. The same ideas extend to other classes of uncertainty sets, such as $L_1$-balls that preserve per-state coupling~\cite{DBLP:conf/icml/StrehlL05}, as we shall discuss in Section~\ref{sec:otherclasses}.

Given the learned intervals $[l_f,u_f]$ for $f \in \Lambda$, we define the \emph{uncertainty region}
\begin{equation}
\label{eqn:constraintsys}
\mathcal{U}
\;=\;
\bigl\{\,\vv \in \mathcal{D} \ \big|\ \forall f \in \Lambda:\ l_f \leq f[\vv] \leq u_f \,\bigr\}.
\end{equation}
The region $\mathcal{U}$ contains exactly those parameter instantiations that satisfy all learned interval constraints simultaneously. Together with the pMDP $M_\Theta$, it induces a UMDP
$
U = (S, A, s_0, \mathcal{P}_{\mathcal{U}}),
$
where the uncertainty set consists of all transition kernels obtained by instantiating $M_\Theta$ with a parameter vector in $\mathcal{U}$:
\[
\mathcal{P}_{\mathcal{U}}
\;\coloneqq\;
\bigl\{\, P[\vv] \in (\Delta(S))^{S \times A} \ \big|\ \vv \in \mathcal{U} \,\bigr\}.
\]

\begin{theorem}[Soundness of $\mathcal{U}$]
\label{thm:soundness}
With probability at least $1-\delta$, the true parameter instantiation $\uu$ satisfies $\uu \in \mathcal{U}$. Equivalently,
\[
\Pr\bigl[\;P[\uu] \in \mathcal{P}_{\mathcal{U}}\;\bigr] \;\ge\; 1-\delta.
\]
Consequently, for any policy $\pi \in \Pi$ and state $s \in S$, with probability at least $1-\delta$ the robust value under the induced UMDP lower-bounds the true value in $M$, i.e.,
\[
\Pr\bigl[\;V^\pi_{U}(s) \leq V^{\pi}_{M}(s)\;\bigr] \;\ge\; 1-\delta.\tag*{\qed}
\]
\end{theorem}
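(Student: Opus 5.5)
The proof reduces the statement to a union bound over the per-expression confidence intervals. Two steps follow from it: the equivalence between $\uu\in\mathcal{U}$ and $P[\uu]\in\mathcal{P}_{\mathcal{U}}$, and the fact that a robust value cannot exceed the value under any admissible kernel.

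\emph{Step 1: parameter containment.} Define the failure events $E_f := \{ f[\uu]\notin[l_f,u_f]\}$ for $f\in\Lambda$. For constant $f\in\Lambda\setminus\Lambda_{\mathsf{unk}}$ we have $l_f=u_f=f=f[\uu]$, so $E_f=\emptyset$. For $f\in\Lambda_{\mathsf{unk}}$, the construction in Section~\ref{sec:parametertying} gives $\Pr[E_f]\le \delta/|\Lambda_{\mathsf{unk}}|$. This uses the pooled counts for $f$, which form a binomial sample with success probability exactly $f[\uu]$, since every pooled transition has the same expression and hence the same true probability. A union bound gives $\Pr[\bigcup_f E_f]\le\delta$. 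On the complement, every constraint in~\eqref{eqn:constraintsys} holds at $\uu$. Since $\uu\in\mathcal{D}$ by the standing assumption that the true MDP is an instantiation from the parameter space, we get $\uu\in\mathcal{U}$.

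The step I expect to need the most care is the validity of each interval when the samples come from trajectories rather than a generative model, because then $\#(s,a)$ is random and the samples are not a fixed-size i.i.d.\ batch. I will take as given the guarantee stated in Section~\ref{sec:learn-umdp}, that each per-expression interval has coverage at least $1-\delta/|\Lambda_{\mathsf{unk}}|$ conditional on the counts, together with the paper's i.i.d.\ sampling assumption. Conditioning on the counts and then taking expectations preserves the bound, and the union bound needs no independence between the events.

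\emph{Step 2: kernel containment.} If $\uu\in\mathcal{U}$, then $P[\uu]\in\mathcal{P}_{\mathcal{U}}$ directly by the definition of $\mathcal{P}_{\mathcal{U}}$, so $\Pr[P[\uu]\in\mathcal{P}_{\mathcal{U}}]\ge\Pr[\uu\in\mathcal{U}]\ge 1-\delta$. For the converse direction of the stated equivalence, if $P[\uu]=P[\vv]$ for some $\vv\in\mathcal{U}$, then $f[\uu]=f[\vv]\in[l_f,u_f]$ for every $f\in\Lambda$, so $\uu\in\mathcal{U}$. The two events therefore coincide.

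\emph{Step 3: value bound.} Interpret $V^\pi_U(s):=\inf_{P\in\mathcal{P}_{\mathcal{U}}}V^\pi_P(s)$. On the event $P[\uu]\in\mathcal{P}_{\mathcal{U}}$, the infimum is at most its value at the particular kernel $P[\uu]$, and $V^\pi_{P[\uu]}(s)=V^\pi_M(s)$ because $M=M[\uu]$ and the path measure is determined by $\pi$ and the kernel. Hence $V^\pi_U(s)\le V^\pi_M(s)$ on an event of probability at least $1-\delta$. This holds simultaneously for all $\pi$ and $s$, because the event does not depend on them.
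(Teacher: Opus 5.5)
Your proposal is correct and follows the paper's proof step for step. Both use a union bound over $\Lambda_{\mathsf{unk}}$, with the constant expressions holding deterministically. Both identify the event $\uu\in\mathcal{U}$ with $P[\uu]\in\mathcal{P}_{\mathcal{U}}$, and both note that the infimum defining $V^\pi_U(s)$ is at most its value at the admissible kernel $P[\uu]$. You also explicitly verify the converse direction ($P[\uu]=P[\vv]$ with $\vv\in\mathcal{U}$ forces $\uu\in\mathcal{U}$) and that $\uu\in\mathcal{D}$, which tightens the paper's ``by definition'' step.
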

The induced UMDP $U$ also yields a robust policy $\pi^*$.
By Theorem~\ref{thm:soundness}, $\pi^*$ achieves, with probability at least $1-\delta$, at least its robust value $V^\pi_{U}(s)$ on the unknown MDP $M[\uu]$.
However, computing the robust value and policy from $U$ remains very challenging.
We summarise the key issues below and then describe in more detail how we address them in the remainder of this section. 

\begin{enumerate}
    \item \textbf{Non-rectangularity and tractability (Sec.~\ref{sec:rectrelax}).} 
    The uncertainty set $\mathcal{P}_{\mathcal{U}}$ for $U$ is in general \emph{non-rectangular}: shared parameters couple uncertainty across states and actions.
    As a consequence, the robust optimisation problem in~\eqref{eq:robust_value_def} becomes very difficult and robust policy synthesis is NP-hard in general~\cite{DBLP:journals/mor/WiesemannKR13}. We therefore introduce a family of \emph{rectangular} relaxations that over-approximate $\mathcal{P}_{\mathcal{U}}$. These relaxations preserve the soundness of Theorem~\ref{thm:soundness},  while enabling efficient synthesis. For each relaxation, we analyse (i) the cost of constructing it, (ii) the resulting complexity of robust policy synthesis, and (iii) its tightness. Further, we establish an inclusion hierarchy among the uncertainty sets, making the tightness--efficiency~trade-off~explicit.

    \item \textbf{Polynomial constraints and linearisation (Sec.~\ref{sec:mccormick}).}
    The uncertainty set $\mathcal{P}_{\mathcal{U}}$ is in general defined by polynomial constraints, since transition probabilities may be polynomial functions of the parameters. As a result, even after rectangularisation, the inner optimisation in~\eqref{eq:robust_bellman_rect} can remain challenging, as it requires optimisation under polynomial constraints. We therefore leverage \emph{linear relaxations} that transform these polynomial constraints into linear ones. Combined with rectangular relaxations, we show that this yields tractable inner problems,  while retaining tight uncertainty sets.

    \item \textbf{Possible emptiness of $\mathcal{P}_{\mathcal{U}}$ (Sec.~\ref{sec:empty-R}).}
    Finally, we show that, unlike in standard IMDP learning, the induced uncertainty set $\mathcal{P}_{\mathcal{U}}$ may be empty. We give a statistical interpretation of this edge case, derive an upper bound on the probability that it occurs, and provide a principled fallback mechanism.
    
\end{enumerate}

\vspace*{-1em}
\subsection{Rectangular Relaxations}
\label{sec:rectrelax}

Robust policy synthesis over $\mathcal{P}_{\mathcal{U}}$ is intractable in general, since $\mathcal{P}_{\mathcal{U}}$ may be non-rectangular. We therefore introduce a range of rectangular relaxations, i.e., uncertainty sets that over-approximate $\mathcal{P}_{\mathcal{U}}$. 
These relaxations preserve the 
inclusion guarantee for the unknown MDP $M[\uu]$ while enabling tractable synthesis. For each relaxation, we describe the additional effort required to construct it,
as well as the resulting complexity of policy synthesis, measured by the cost of the inner minimisation in~\eqref{eq:robust_bellman_rect} in each iteration of robust value iteration. We organise the relaxations into an inclusion hierarchy relative to $\mathcal{P}_{\mathcal{U}}$ and to the baseline uncertainty set $\mathcal{P}_{\mathrm{I}}$ induced solely by the intervals, as depicted in Figure~\ref{fig:tightness-diagram}.

\begin{figure}[t]
\centering
\resizebox{\textwidth}{!}{%
\begin{tikzpicture}[node distance=1.35cm, >={Stealth[length=3.5mm,width=2.8mm]}, line width=1.8pt]
  \tikzstyle{setnode}=[draw, rounded corners, inner sep=6pt, fill=lightred!70, align=center, minimum width=3.4cm, minimum height=1.6cm]
  \tikzstyle{legendbox}=[draw, rounded corners, line width=0.8pt, inner sep=5pt, align=left, fill=lightred!28]

  \node[setnode] (imdpt) {{\Large$\boldsymbol{\mathcal{P}}_{\boldsymbol{\mathrm{I}}}$}\\[2pt]\textbf{Pre:} --\\\textbf{VI:} Bisection};
  \node[setnode, right=of imdpt] (expr) {{\Large$\boldsymbol{\mathcal{P}}_{\boldsymbol{\mathrm{\Lambda}(\mathcal{U})}}$}\\[2pt]\textbf{Pre:} $\mathcal{O}(|\Lambda|)$ LPs\\\textbf{VI:} Bisection};
  \node[setnode, right=of expr] (rect) {{\Large$\boldsymbol{\mathcal{P}}_{\boldsymbol{\mathrm{R}(\mathcal{U})}}$}\\[2pt]\textbf{Pre:} -- $\bm{/}$ $V$-precomp.\\\textbf{VI:} LP $\bm{/}$ $V$-scan};
  \node[setnode, right=of rect] (R) {{\Large$\boldsymbol{\mathcal{P}}_{\boldsymbol{\mathcal{U}}}$}\\[2pt] Non-rectangular\\\textbf{NP-hard}};

  \node[setnode, below=1.7cm of expr, xshift=-2.3cm] (box) {{\Large$\boldsymbol{\mathcal{P}}_{\boldsymbol{\Theta}(\boldsymbol{\mathcal{U}})}$}\\[2pt]\textbf{Pre:} $\mathcal{O}(|\Theta|)$ LPs\\\textbf{VI:} Bisection};

  \node at ($(imdpt)!0.5!(expr)$) {\huge$\supseteq$};
  \node at ($(expr)!0.5!(rect)$)  {\huge$\supseteq$};
  \node at ($(rect)!0.5!(R)$)     {\huge$\supseteq$};

  \path (box) -- node[midway,sloped] {\huge$\supseteq$} (expr);

  \draw[densely dashed] ([xshift=5mm]imdpt.south) -- node[above,sloped,pos=0.6]{incomparable} ([xshift=-5mm]box.north);

  \node[legendbox, anchor=north west] at ($(rect.south east)+(-0.7cm,-0.62cm)$) {%
    \begin{tabular}{@{}r@{\hspace{0.35em}}c@{\hspace{0.45em}}l@{}}
      \multicolumn{3}{@{}l@{}}{\textbf{Uncertainty Sets}}\\[3pt]
      $\mathcal{P}_{\mathcal{U}}$ & : & Induced Uncertainty Set\\
      $\mathcal{P}_{\mathrm{R}(\mathcal{U})}$ & : & Rectangular Relaxation\\
      $\mathcal{P}_{\Lambda(\mathcal{U})}$ & : & Expression-wise Projection\\
      $\mathcal{P}_{\Theta(\mathcal{U})}$ & : & Parameter-wise Projection\\
      $\mathcal{P}_{\mathrm{I}}$ & : & Parameter Tying
    \end{tabular}
  };
\end{tikzpicture}
}%
\caption{Inclusion hierarchy of the considered uncertainty sets. For each set, we report the computational~effort required to construct the relaxation (\textbf{Pre}), in addition to computing the confidence intervals and constructing the model, and the effort per inner minimisation during robust value iteration (\textbf{VI}).}
\label{fig:tightness-diagram}
\vspace{-10pt}
\end{figure}

From here on, for ease of presentation, we assume that each parametric expression $f \in \Lambda$ is linear in the parameters. This comes without loss of generality, as we will discuss in Section~\ref{sec:mccormick} how to relax polynomial constraints into linear ones while preserving soundness of the resulting uncertainty sets. Recall that we consider the case where the \emph{original} uncertainty sets derived from data are confidence intervals. As we shall argue in Section~\ref{sec:otherclasses}, our results on rectangular relaxations extend to other uncertainty classes, such as $L_1$-balls and ellipsoids. 

\vspace*{0.3em}
\noindent \textbf{Rectangularisation via local projections.}
The most natural way to relax a non-rectangular uncertainty set into a rectangular one is to allow the adversarial environment to choose its worst-case transition probabilities independently in each state--action pair. Formally, given the coupled uncertainty set $\mathcal{P}_{\mathcal{U}}$, its \emph{rectangular relaxation} $\mathcal{P}_{R(\mathcal{U})}$ is the product of its local projections:
\[
\mathcal{P}_{R(\mathcal{U})}
\;:=\;
\prod_{(s,a)\in S\times A} \mathcal{P}_{R(\mathcal{U})}(s,a),
\quad\text{where}\quad
\mathcal{P}_{R(\mathcal{U})}(s,a)
\;\coloneqq\;
\bigl\{\, P[\vv](s,a)\ \big|\ \vv \in \mathcal{U} \,\bigr\}.
\]
Rather than requiring a single instantiation $\vv\in\mathcal{U}$ to be used consistently throughout the pMDP, the environment can select an instantiation from $\mathcal{U}$ separately for each state--action pair. Essentially, this corresponds to renaming each shared parameter, thus eliminating dependencies~\cite{DBLP:conf/atva/QuatmannD0JK16,DBLP:conf/atva/HeckQSKJ25}. This is a relaxation as it grants the environment more power: it may choose a different worst-case instantiation from $\mathcal{U}$ for each state--action pair, which can only decrease~the~robust~value.

Since each parametric expression $f \in \Lambda$ is linear and constrained by interval bounds $l_f \leq f[\vv] \leq u_f$, the uncertainty region $\mathcal{U}$ is a polytope. Hence, for a state--action pair $(s,a)$, the inner minimisation in~\eqref{eq:robust_bellman_rect} reduces to a linear program:
\[
\min_{\vv \in \mathcal{U}} \ \sum_{s'\in S} P[\vv](s,a,s') \, V^*_{U}(s').
\]

\vspace*{-1em}
\noindent
or, equivalently:
\[
\begin{aligned}
\min_{\vv \in \mathcal{D}} \quad & \sum_{s'\in S} P[\vv](s,a,s') \, V^*_{U}(s') \quad
\text{s.t.}\quad l_f \leq f[\vv] \leq u_f, \; \forall f \in \Lambda.
\end{aligned}
\]
Hence, 
policy synthesis via robust value iteration requires solving a linear program for each state--action pair in every iteration. In practice, this can often still be efficient for two reasons. First, the feasible region is fixed: the constraints defining $\mathcal{U}$ depend only on the parametric expressions and the learned bounds $[l_f,u_f]$, and thus can be constructed once and reused throughout value iteration. 
Second, successive iterations induce only small changes in the objective, since the coefficients $P[\vv](s,a,s')$ are fixed and the value estimates converge. 
Using a simplex-based LP solver~\cite{DBLP:books/daglib/0095265,DBLP:books/daglib/0014645}, we can \emph{warm-start} from the vertex that was optimal in the previous iteration. Since the changes in the objective
become smaller as value iteration converges, the previously optimal vertex is often still optimal, or close to optimal, so only few steps are needed to update the solution.

Alternatively, since each inner optimisation is over the same feasible region $\mathcal{U}$, which remains fixed, we can precompute a vertex representation of $\mathcal{U}$, i.e., its set of extreme points. This is closely related to the enumeration approach used in parameter synthesis when the parameter space is a hyperrectangle~\cite{DBLP:conf/atva/QuatmannD0JK16,DBLP:conf/atva/HeckQSKJ25}. The inner minimisation then reduces from solving an LP to evaluating the objective at the vertices.
However, converting a polytope 
to vertex representation can be expensive in general, and the number of vertices may grow exponentially with the dimension of the parameter space. This approach is therefore only suitable for low-dimensional parameter spaces, as we examine in the evaluation in Section~\ref{sec:experiments}.

\vspace{10pt}
\noindent \textbf{Expression-wise projections.}
We next consider a relaxation that projects $\mathcal{U}$ back to interval bounds over the parametric expressions. 
This aggregates information from all transitions jointly through $\mathcal{U}$ and can yield substantially tighter intervals than those obtained directly
through parameter tying. The resulting model is an IMDP, whose inner optimisation in robust value iteration can be solved efficiently via bisection~\cite{DBLP:journals/ior/NilimG05,DBLP:journals/mor/Iyengar05}.

For each expression $f \in \Lambda$, we derive refined bounds $l^\Lambda_f$ and $u^\Lambda_f$ as the minimum and maximum values that $f[\vv]$ can attain over $\vv \in \mathcal{U}$. Concretely, we solve
\[
\begin{aligned}
l^\Lambda_f \;:=\; \min_{\vv \in \mathcal{U}} \ f[\vv]
\qquad\qquad
u^\Lambda_f \;:=\; \max_{\vv \in \mathcal{U}} \ f[\vv].
\end{aligned}
\]
We define the corresponding rectangular interval uncertainty set $\mathcal{P}_{\Lambda(\mathcal{U})}$ as
\[
\mathcal{P}_{\Lambda(\mathcal{U})}(s,a)
\;\coloneqq\;
\Bigl\{\mu\in\Delta(S)\ \Big|\ \forall s'\in S:\ l^\Lambda_{P_\Theta(s,a,s')} \leq \mu(s') \leq  u^\Lambda_{P_\Theta(s,a,s')}\Bigr\}.
\]
Computing $\mathcal{P}_{\Lambda(\mathcal{U})}$ requires solving $2|\Lambda|$ linear programs. 
Robust value iteration on the resulting IMDP then has the same per-iteration cost as the standard interval model,
but can yield significantly tighter uncertainty sets since the bounds reflect information aggregated across all transitions through $\mathcal{U}$. Moreover, the individual LPs are independent and can be solved in parallel when constructing the model.

\vspace{10pt}
\noindent \textbf{Parameter-wise projections.}
Expression-wise projections can be expensive when the number of distinct expressions $|\Lambda|$ is large. As an alternative, we project $\mathcal{U}$ to interval constraints on the parameters $\Theta$ themselves. Concretely, this yields an axis-aligned over-approximation of the polytope $\mathcal{U}$ by projecting onto each parameter dimension and forming the corresponding enclosing hyperrectangle.

For each parameter $\theta_i \in \Theta$, $1 \leq i \leq \ell$, we compute its  lower and upper bounds over the uncertainty region $\mathcal{U}$ by
$
\underline{\theta}_i \;\coloneqq\; \min_{\vv\in\mathcal{U}} \vv_i
$
and
$
\overline{\theta}_i \;\coloneqq\; \max_{\vv\in\mathcal{U}} \vv_i.
$
These define the smallest hyperrectangle that contains $\mathcal{U}$,
$
B(\mathcal{U})
\;\coloneqq\;
\prod_{i=1}^{\ell} [\underline{\theta}_i,\overline{\theta}_i]
\;\supseteq\;
\mathcal{U}.
$
We then derive bounds for each expression $f\in\Lambda$ by evaluating it over $B(\mathcal{U})$, 
\[
l^\Theta_f \;\coloneqq\; \min_{\vv \in B(\mathcal{U})} f[\vv],
\qquad
u^\Theta_f \;\coloneqq\; \max_{\vv \in B(\mathcal{U})} f[\vv].
\]
Since each $f$ is linear, these extrema are attained at a vertex of $B(\mathcal{U})$. Thus, computing $l^\Theta_f$ and $u^\Theta_f$ is simpler than the expression-wise projection bounds, as it does not require solving an LP over $\mathcal{U}$, but only selecting, for each parameter $\theta_i$, either $\underline{\theta}_i$ or $\overline{\theta}_i$ according to the sign of its coefficient in $f$.
The resulting interval uncertainty set $\mathcal{P}_{\Theta(\mathcal{U})}$ is then defined analogously to $\mathcal{P}_{\Lambda(\mathcal{U})}$.

Computing $\mathcal{P}_{\Theta(\mathcal{U})}$ requires solving $2|\Theta|$ linear programs over $\mathcal{U}$, which can be significantly cheaper than solving two LPs for every expression in $\Lambda$. However, the hyperrectangle $B(\mathcal{U})$ discards dependencies between parameters and may therefore yield considerably looser bounds than expression-wise projection. Moreover, the induced intervals need not be tighter than the original intervals obtained directly from the samples: the two constructions are, in general, incomparable, and either can produce tighter bounds. We give counterexamples for both directions in the proof of Theorem~\ref{thm:inclusion} in Appendix~\ref{app:proofs}. To recover inclusion of the original intervals, one can simply intersect the resulting bounds.

We now state the inclusion hierarchy of the discussed uncertainty sets. We denote by $\mathcal{P}_I$ the original interval uncertainty set obtained from the samples using parameter tying as described in Section~\ref{sec:parametertying}.

\begin{theorem}[Inclusion hierarchy]
\label{thm:inclusion}
The following inclusions hold:
\[
\mathcal{P}_I
\;\supseteq\;
\mathcal{P}_{\Lambda(\mathcal{U})}
\;\supseteq\;
\mathcal{P}_{R(\mathcal{U})}
\;\supseteq\;
\mathcal{P}_{\mathcal{U}}, \quad \text{and}
\quad
\mathcal{P}_{\Theta(\mathcal{U})}
\;\supseteq\;
\mathcal{P}_{\Lambda(\mathcal{U})}.
\]
Moreover, $\mathcal{P}_{\Theta(\mathcal{U})}$ is in general incomparable to $\mathcal{P}_I$. \qed
\end{theorem}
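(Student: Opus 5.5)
Each inclusion will be proved by elementwise containment, comparing at each state--action pair $(s,a)$. For $\mathcal{P}_{R(\mathcal{U})}\supseteq\mathcal{P}_{\mathcal{U}}$, take $P[\vv]\in\mathcal{P}_{\mathcal{U}}$ with $\vv\in\mathcal{U}$. For every $(s,a)$ we have $P[\vv](s,a)\in\mathcal{P}_{R(\mathcal{U})}(s,a)$ by definition, witnessed by the same $\vv$. Hence the kernel lies in the product.

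For $\mathcal{P}_{\Lambda(\mathcal{U})}\supseteq\mathcal{P}_{R(\mathcal{U})}$, take $\mu=P[\vv](s,a)$ with $\vv\in\mathcal{U}$. Then $\mu$ is a distribution because $\vv\in\mathcal{D}$ and $P_\Theta(s,a)\in\pdist$. For each $s'$, with $f=P_\Theta(s,a,s')$, we have $\mu(s')=f[\vv]\in[\min_{\mathcal{U}}f,\max_{\mathcal{U}}f]=[l^\Lambda_f,u^\Lambda_f]$. Products of such local inclusions give the global inclusion.

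For $\mathcal{P}_I\supseteq\mathcal{P}_{\Lambda(\mathcal{U})}$, note that every $\vv\in\mathcal{U}$ satisfies $l_f\le f[\vv]\le u_f$ by the definition \eqref{eqn:constraintsys}. Hence $l_f\le l^\Lambda_f$ and $u^\Lambda_f\le u_f$, so the intervals shrink and the IMDP sets shrink. For $\mathcal{P}_{\Theta(\mathcal{U})}\supseteq\mathcal{P}_{\Lambda(\mathcal{U})}$, use $B(\mathcal{U})\supseteq\mathcal{U}$. Minimising over a larger set gives $l^\Theta_f\le l^\Lambda_f$, and maximising gives $u^\Theta_f\ge u^\Lambda_f$. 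Here I must handle the degenerate case $\mathcal{U}=\emptyset$. Then $\mathcal{P}_{\mathcal{U}}$ and $\mathcal{P}_{R(\mathcal{U})}$ are empty, and I will adopt the convention that an infeasible min/max yields $+\infty/-\infty$, which makes the projected sets empty. Every inclusion then holds trivially, with the fallback deferred to Sec.~\ref{sec:empty-R}.

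For incomparability I will give two small pMDPs with explicit intervals; the main work is choosing them.

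\textbf{$\mathcal{P}_{\Theta(\mathcal{U})}\not\subseteq\mathcal{P}_I$.} Take $\Theta=\{\theta_1,\theta_2\}$ and $\mathcal{D}=[0,1/2]^2$, with expressions $\theta_1,\theta_2,\theta_1+\theta_2$ (complements included so the distributions are valid). Suppose the learned intervals are $\theta_1\in[0,0.4]$, $\theta_2\in[0,0.4]$ and $\theta_1+\theta_2\in[0.3,0.4]$. Then $\mathcal{U}$ projects to $B(\mathcal{U})=[0,0.4]^2$. This box gives $\theta_1+\theta_2\in[0,0.8]$, which is looser than $[0.3,0.4]$.

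\textbf{$\mathcal{P}_I\not\subseteq\mathcal{P}_{\Theta(\mathcal{U})}$.} Take a single parameter with expression $\theta_1$, interval $[0.1,0.9]$, and a second expression $\theta_1$ scaled, e.g. $\theta_1/2$ with interval $[0,0.1]$. Then $\mathcal{U}\subseteq[0.1,0.2]$, so the $\Theta$-bounds on $\theta_1$ are $[0.1,0.2]$, strictly tighter than $[0.1,0.9]$.

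The main obstacle is these counterexamples: the intervals must be consistent, realisable as confidence outputs, and the expressions must form valid parametric distributions over $\mathcal{D}$. I will pad each with complement transitions $1-f$, whose induced intervals behave symmetrically and do not break the argument.
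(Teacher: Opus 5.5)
Your proposal is correct and follows the paper's proof essentially step by step. The four inclusions use the same elementwise min/max comparisons. Your counterexamples use the same two mechanisms as the paper's: the box $B(\mathcal{U})$ forgets the correlation imposed by a tight interval on a sum of parameters, and several expressions in one parameter jointly constrain it more tightly than any single learned interval.

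Your explicit convention for $\mathcal{U}=\emptyset$ and the complement padding that makes the parametric distributions valid are small gains in rigour that the paper's proof leaves implicit.
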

The inclusion hierarchy of Theorem~\ref{thm:inclusion} implies that every over-approximating relaxation inherits the same high-confidence inclusion guarantee for the true, unknown MDP as $\mathcal{P}_{\mathcal{U}}$ in Theorem~\ref{thm:soundness}. Consequently, all such relaxations can be used for robust policy synthesis with high-confidence performance guarantees.

\subsection{Linear Relaxations for Polynomial Constraints}
\label{sec:mccormick}

Thus far, we assumed that the parametric expressions in $\Lambda$ are linear, and hence that $\mathcal{U}$ is a polytope in the parameter space. In general, transition probability expressions in a pMDP may be \emph{polynomials} in the parameters, which makes the projection constraints $l_f \le f[\vv] \le u_f$ non-linear. As a consequence, the rectangular relaxations introduced in Section~\ref{sec:rectrelax} no longer reduce to linear programs. To retain tractability while preserving soundness, we construct a linear outer approximation of the feasible region using \emph{McCormick envelopes}~\cite{DBLP:journals/mp/McCormick76,DBLP:journals/jgo/ScottSB11}.

McCormick envelopes are linear inequalities that describe the convex hull of a bilinear term $z = xy$ with upper and lower bounds $\underline{x}\le x\le \overline{x}$ and $\underline{y}\le y\le \overline{y}$~\cite{DBLP:journals/mp/McCormick76}. Concretely, they are given by:
\begin{equation}
\label{eq:mccormick-bilin}
\begin{aligned}
z &\ge \underline{y}\,x + \underline{x}\,y - \underline{x}\,\underline{y},
\qquad
z \le \underline{y}\,x + \overline{x}\,y - \overline{x}\,\underline{y},
\\
z &\ge \overline{y}\,x + \overline{x}\,y - \overline{x}\,\overline{y},
\qquad
z \le \overline{y}\,x + \underline{x}\,y - \underline{x}\,\overline{y}.
\end{aligned}
\end{equation}
While McCormick envelopes provide the exact convex-hull relaxation of a single bilinear term, they can be applied compositionally to obtain a linear outer approximation of an entire polynomial constraint. For instance, for the cubic monomial $\theta_1\theta_2\theta_3$ we introduce auxiliary variables $z_{12}=\theta_1\theta_2$ and $z_{123}=z_{12}\theta_3$ for the intermediate products, and add the McCormick envelopes in~\eqref{eq:mccormick-bilin} for both bilinear terms. Applying this construction to all monomials in the constraint system defining $\mathcal{U}$ in~\eqref{eqn:constraintsys} yields a system of linear constraints that over-approximates the original polynomial relations. As an outer approximation, it preserves soundness and can be used within our rectangular relaxations.
Formal details on the construction and the resulting inequality system, are provided in Appendix~\ref{app:mccormick}.

\vspace{10pt}
\noindent \textbf{Optimisation-based bound tightening.}
The McCormick inequalities in~\eqref{eq:mccormick-bilin} require finite lower and upper bounds for every variable. 
We obtain initial bounds by solving two LPs per variable over the current linear relaxation, and then iteratively refine them: tighter bounds yield tighter McCormick envelopes, which in turn can imply further bound improvements. This iterative procedure is known as optimisation-based bound tightening (OBBT) and is a standard domain-reduction technique in global optimisation~\cite{DBLP:journals/jgo/GleixnerBMW17}. In our implementation, we apply OBBT as a preprocessing step to construct a tight linear relaxation of the uncertainty region $\mathcal{U}$. In the experimental evaluation in Section~\ref{sec:experiments}, we show that this approach yields tight relaxations and resulting robust guarantees.

\subsection{Joint Feasibility and Possible Emptiness of $\mathcal{U}$}
\label{sec:empty-R}

Unlike the interval uncertainty sets obtained in standard IMDP learning, the
induced uncertainty region $\mathcal{U}$ may be empty. This is a consequence of
enforcing \emph{global} consistency with the underlying parametric structure:
while each learned interval constraint may be individually feasible, there need
not exist a single parameter instantiation that satisfies all of them
simultaneously. In that case, the induced uncertainty set
$\mathcal{P}_{\mathcal{U}}$ is empty as well, and so are its rectangular~relaxations.

If the pMDP $M_\Theta$ correctly specifies the underlying system from which the
transition samples are obtained, 
then with probability at least $1-\delta$ the true parameter instantiation $\uu$ satisfies
all constraints, and hence lies in $\mathcal{U}$. In particular,
\[
\Pr\left[\,{\mathcal{U}\,} \neq \emptyset\right]
\;\ge\;
\Pr\left[\,\uu \in \mathcal{U}\,\right]
\;\ge\;
1-\delta, \quad \text{or equivalently, } \quad\Pr\left[\,{\mathcal{U}\,} = \emptyset\right] \le \delta.
\]
Thus, under correct specification, emptiness can occur with probability at most~$\delta$. 

\vspace{10pt}
\noindent \textbf{Interpretation.}
If ${\mathcal{U}}$ is empty, then no admissible parameter
instantiation simultaneously satisfies all learned interval constraints.
Statistically, this means that the uncertainty statements obtained from the data
are jointly incompatible with the assumed parametric structure at confidence
level $1-\delta$. Hence, emptiness can be interpreted as evidence against the
chosen pMDP model
at significance~level~$\delta$.

This observation is not just an edge case but also an advantage of our approach.
By enforcing joint consistency across all learned constraints, it can reveal
model misspecification that would be hidden by purely local~interval~uncertainty~sets.

\vspace{10pt}
\noindent \textbf{Fallback to interval uncertainty.}
If ${\mathcal{U}}$ is empty, one can safely fall back to the
original interval uncertainty set learned directly from the samples. In contrast
to ${\mathcal{U}}$, this interval uncertainty set is always
non-empty, since for each state-action pair it contains at least the empirical
estimate of the corresponding successor distribution, and thus admits at least
one feasible distribution per local choice. 

\subsection{Extension to Other Classes of Uncertainty Sets}
\label{sec:otherclasses}

We have so far assumed that the base uncertainty sets mapped through the structure of the underlying pMDP into parameter space are intervals. 
However, our approach and the inclusion hierarchy of Theorem~\ref{thm:inclusion} extend beyond intervals. In the following, we discuss two uncertainty classes that illustrate~this~extension.

\vspace{10pt}
\noindent \textbf{$L_1$ uncertainty sets.}
One important class is given by $L_1$-balls~\cite{DBLP:conf/icml/StrehlL05,DBLP:conf/qestformats/MeggendorferWW25} based on the Weissman inequality~\cite{WeissmanEtAl2003}. For each state--action pair, this yields a high-confidence uncertainty set around the empirical estimate $\tilde{P}(s,a)$ in the form of an $L_1$ ball, for which we provide details in Appendix~\ref{app:ellipsoidal}.
Since $L_1$-balls are polytopes, they fit seamlessly with our approach. 
Concretely, given empirical estimates $\tilde{P}(s,a)$ and radii $\varepsilon(s,a)$ for all $(s,a)\in S\times A$, we define the uncertainty~region
\begin{equation}
\label{eqn:constraintsys-l1}
\mathcal{U}_{L_1}
\;=\;
\bigl\{\,\vv \in \mathcal{D} \ \big|\ \|P_\Theta[\vv](s,a)-\tilde{P}(s,a)\|_1 \le \varepsilon(s,a) \,\bigr\}.
\end{equation}
Since each $P_\Theta(s,a,s')$ is linear in the parameters, the $L_1$-constraints can be expressed as linear inequalities. Hence, $\mathcal{U}_{L_1}$ is again a polytope in parameter space, and the rectangular relaxations introduced in Section~\ref{sec:rectrelax} carry over~directly.

\vspace{10pt}
\noindent \textbf{Ellipsoidal uncertainty sets.}
Another relevant class is given by \emph{ellipsoids}. These may arise as local uncertainty sets, for instance from second-order approximations of log-likelihood functions~\cite{DBLP:journals/ior/NilimG05}, or directly in parameter space~\cite{ayoub2020model,zhou2021nearly}.
In both cases, the resulting uncertainty region 
is generally no longer a polytope.
Nevertheless, the rectangular relaxations of Section~\ref{sec:rectrelax}, as well as the inclusion hierarchy,~carry~over. 

In particular, for an ellipsoidal parameter region, the local rectangular relaxation can still be obtained by projecting to state--action-wise uncertainty sets, but the corresponding inner optimisation problems are no longer linear programs. Instead, optimising a linear Bellman objective over ellipsoidal constraints yields second-order cone programs (SOCPs)~\cite{DBLP:journals/laa/LoboVBL98}. Likewise, expression-wise projections can be obtained by solving two SOCPs per expression to derive lower and upper bounds, after which robust value iteration proceeds as in the interval case. 
Details on the construction of ellipsoidal uncertainty sets are provided in Appendix~\ref{app:ellipsoidal}.

\section{Experimental Evaluation}
\vspace{-5pt}
\label{sec:experiments}

We have integrated our new methods for UMDP learning into the PRISM model checker~\cite{DBLP:conf/cav/KwiatkowskaNP11}.
We evaluate uncertainty sets from the three relaxations introduced in Section~\ref{sec:learning}:
rectangular ($\mathcal{P}_{\mathrm{R}(\mathcal{U})}$),
expression-wise ($\mathcal{P}_{\Lambda(\mathcal{U})}$),
and parameter-wise ($\mathcal{P}_{\Theta(\mathcal{U})}$)
comparing to a baseline of standard interval learning
with parameter tying ($\mathcal{P}_{\mathrm{I}}$). A comparison to ellipsoidal uncertainty sets is provided for completeness in Appendix~\ref{app:experiments}.
We compare both the tightness of the learned uncertainty sets and the computational efficiency of robust solving across a range of established benchmarks with linear and polynomial parameter structures.
Detailed descriptions of all case studies are in Appendix~\ref{app:experiments}. Table~\ref{tab:stats_same_suite} summarises~the~salient~characteristics.

\vspace{10pt}
\noindent \textbf{Experimental setup.}
We evaluate our methods in two settings. First, we compare the tightness of the uncertainty sets and the computational efficiency of constructing and solving the corresponding models from the \emph{same} data, i.e., the same set of collected transition samples $C$. To this end, we sample a total of $10^5$ trajectories from the true model under a uniform sampling policy. 


\begin{wraptable}{r}{0.52\textwidth}
\vspace{-17pt}
\centering
\caption{Salient characteristics of the benchmark instances. $\mathbb{P}$ denotes a temporal-logic satisfaction property and $\mathbb{E}$ an expected-reward property.}
\vspace{-2pt}
\setlength{\tabcolsep}{2.5pt}
\footnotesize
\renewcommand{\arraystretch}{0.9}
\resizebox{0.52\textwidth}{!}{%
\begin{tabular}{>{\centering\arraybackslash}m{2.3cm} >{\centering\arraybackslash}m{1.45cm} >{\centering\arraybackslash}m{1.1cm} >{\centering\arraybackslash}m{1.1cm} >{\centering\arraybackslash}m{0.75cm} >{\centering\arraybackslash}m{0.9cm} >{\centering\arraybackslash}m{0.75cm}}
\toprule
\textbf{Benchmark} & \textbf{Instance} & \textbf{$\mathbf{|S|}$} & \textbf{$\mathbf{|T|}$} & \textbf{$\mathbf{|\Theta|}$} & \textbf{$\mathbf{|\Lambda|}$} & \textbf{Prop.} \\
\midrule
\multirow[c]{3}{*}{Aircraft}
& (50, 10)   & 5463    & 73033   & 6 & 721    & $\mathbb{P}$ \\
& (100, 20)  & 39028   & 551598  & 6 & 1441   & $\mathbb{P}$ \\
& (200, 40)  & 294458  & 4285228 & 6 & 2881   & $\mathbb{P}$ \\
\addlinespace[3pt]

\multirow[c]{3}{*}{Betting Game}
& (50)       & 13500   & 111678  & 2 & 2613   & $\mathbb{E}$ \\
& (100)      & 52000   & 448378  & 2 & 5283   & $\mathbb{E}$ \\
& (150)      & 93825  & 817943  & 2 & 7147   & $\mathbb{E}$ \\
\addlinespace[3pt]

\multirow[c]{3}{*}{Engagement}
& (100)      & 1994    & 5962    & 5 & 61     & $\mathbb{E}$ \\
& (300)      & 5994    & 17962   & 5 & 61     & $\mathbb{E}$ \\
& (1000)     & 19994   & 59962   & 5 & 61     & $\mathbb{E}$ \\
\addlinespace[3pt]

\multirow[c]{3}{*}{Mars Rover}
& (50, 50)   & 191868  & 1003861 & 6 & 643    & $\mathbb{P}$ \\
& (75, 75)   & 550447  & 2973287 & 6 & 943    & $\mathbb{P}$ \\
& (125, 125) & 1749083 & 9591771 & 6 & 1543   & $\mathbb{P}$ \\
\addlinespace[3pt]

\multirow[c]{3}{*}{Glider}
& (21, 17)   & 357     & 5209    & 4 & 777    & $\mathbb{E}$ \\
& (51, 47)   & 2397    & 37066   & 4 & 4977   & $\mathbb{E}$ \\
& (106, 99)  & 4757    & 74299   & 4 & 9768   & $\mathbb{E}$ \\
\addlinespace[3pt]

\multirow[c]{3}{*}{Parallel Betting}
& (5)        & 3051    & 15543   & 3 & 477    & $\mathbb{E}$ \\
& (10)       & 21006   & 187730  & 3 & 1077   & $\mathbb{E}$ \\
& (20)       & 151556  & 1846840 & 3 & 185209 & $\mathbb{E}$ \\
\bottomrule
\end{tabular}
}
\label{tab:stats_same_suite}
\vspace{-15pt}
\end{wraptable}

In the second setting, we evaluate the different approaches in an online learning scenario. Here, trajectories are sampled sequentially, the uncertain model is periodically updated from the data collected so far, and the resulting policy is used to guide further exploration. Concretely, we follow the \emph{optimism in the face of uncertainty} principle~\cite{DBLP:conf/nips/SuilenS0022,DBLP:conf/nips/AuerJO08,DBLP:conf/nips/FruitPLB17}: whenever the count $\#(s,a)$ doubles for some state--action pair $(s,a)$, we recompute the uncertain model from all trajectories collected so far and update the sampling policy to the \emph{optimistic} policy, i.e., the policy that is optimal for the current uncertain model under the assumption that the environment resolves uncertainty in the agent's favour. 
This is a standard mechanism for balancing exploration and exploitation in policy learning. We emphasise, however, that our approach is agnostic to the concrete sampling procedure.
Once an overall sampling budget is exhausted, we use all collected trajectories to construct the final uncertain model of the respective class and synthesise the corresponding \emph{robust} policy.
So, unlike the first setting, the different approaches are not evaluated on a fixed common data set. Instead, the data collection itself is guided by the evolving uncertainty structure, so tighter uncertainty sets can lead to more effective exploration and, ultimately, to improved final policies and guarantees. 
All uncertain models are built at PAC confidence level $1-\delta = 0.999$.

\begin{table}[t]
\centering
\caption{Comparison of the final bounds and runtimes obtained after sampling $10^5$ trajectories uniformly. For each method, $[\underline{V},\overline{V}]$ denotes the final lower and upper bound on the value of the optimal policy in the true model, and $\mathrm{Rel.\ gap}$ the interval width relative to the true value $V^\ast$. $\mathrm{TO}$ indicates timeout at 2 [h].}
\vspace{5pt}
\setlength{\tabcolsep}{0.5pt}
\small
\resizebox{\textwidth}{!}{%
\begin{tabular}{>{\centering\arraybackslash}m{2.35cm} >{\centering\arraybackslash}m{1.65cm} >{\centering\arraybackslash}m{2.0cm} >{\centering\arraybackslash}m{0.95cm} >{\centering\arraybackslash}m{0.95cm} @{\hspace{9pt}} >{\centering\arraybackslash}m{2.0cm} >{\centering\arraybackslash}m{0.95cm} >{\centering\arraybackslash}m{0.95cm} @{\hspace{9pt}} >{\centering\arraybackslash}m{2.0cm} >{\centering\arraybackslash}m{0.95cm} >{\centering\arraybackslash}m{0.95cm} @{\hspace{9pt}} >{\centering\arraybackslash}m{2.0cm} >{\centering\arraybackslash}m{0.95cm} >{\centering\arraybackslash}m{0.95cm}}
\toprule
\multirow[c]{2}{*}{\textbf{Benchmark}} & \multirow[c]{2}{*}{\textbf{Instance}} & \multicolumn{3}{c}{$\pmb{\mathcal{P}_I}$} & \multicolumn{3}{c}{$\pmb{\mathcal{P}_{\Theta(\mathcal{U})}}$} & \multicolumn{3}{c}{$\pmb{\mathcal{P}_{\Lambda(\mathcal{U})}}$} & \multicolumn{3}{c}{$\pmb{\mathcal{P}_{R(\mathcal{U})}}$} \\
\cmidrule(lr){3-5} \cmidrule(lr){6-8} \cmidrule(lr){9-11} \cmidrule(lr){12-14}
 &  & $\mathbf{[\underline{V},\overline{V}]}$ & \textbf{Rel. gap} & \textbf{Time [s]} & $\mathbf{[\underline{V},\overline{V}]}$ & \textbf{Rel. gap} & \textbf{Time [s]} & $\mathbf{[\underline{V},\overline{V}]}$ & \textbf{Rel. gap} & \textbf{Time [s]} & $\mathbf{[\underline{V},\overline{V}]}$ & \textbf{Rel. gap} & \textbf{Time [s]} \\
\midrule
\multirow[c]{3}{*}{\shortstack[c]{Aircraft}} 
 & (50, 10) & $[0.67,\,0.814]$ & 0.19 & 4.04 & $[0.618,\,0.828]$ & 0.28 & 4.62 & $[0.719,\,0.767]$ & 0.06 & 5.72 & $[0.725,\,0.761]$ & \textbf{0.05} & 47.12 \\
 & (100, 20) & $[0.745,\,0.906]$ & 0.19 & 30.45 & $[0.679,\,0.918]$ & 0.29 & 34.71 & $[0.805,\,0.865]$ & 0.07 & 39.54 & $[0.814,\,0.858]$ & \textbf{0.05} & 353.36 \\
 & (200, 40) & $[0.628,\,0.92]$ & 0.36 & 688.27 & $[0.55,\,0.928]$ & 0.47 & 878.04 & $[0.754,\,0.85]$ & 0.12 & 882.78 & $[0.769,\,0.838]$ & \textbf{0.09} & 4464.25 \\
\addlinespace[4pt]
\multirow[c]{3}{*}{\shortstack[c]{Betting Game}} & (50) & $[14.4,\,65.3]$ & 1.88 & 2.70 & $[25.8,\,28.7]$ & \textbf{0.10} & 6.49 & $[25.8,\,28.7]$ & \textbf{0.10} & 7.53 & $[25.8,\,28.7]$ & \textbf{0.10} & 2.29 \\
 & (100) & $[16.5,\,162]$ & 3.24 & 13.07 & $[41.8,\,47.7]$ & \textbf{0.13} & 12.86 & $[41.8,\,47.7]$ & \textbf{0.13} & 26.31 & $[41.8,\,47.7]$ & \textbf{0.13} & 17.05 \\
 & (150) & $[17.5,\,263]$ & 3.98 & 25.18 & $[56.5,\,65.5]$ & \textbf{0.15} & 37.98 & $[56.5,\,65.5]$ & \textbf{0.15} & 53.16 & $[56.5,\,65.5]$ & \textbf{0.15} & 38.34 \\
\addlinespace[4pt]
\multirow[c]{3}{*}{Engagement} & (100) & $[38.4,\,49.3]$ & 0.26 & 5.46 & $[16.6,\,5422]$ & 127.35 & 4.44 & $[39.8,\,45]$ & \textbf{0.12}& 1.00 & $[39.8,\,45]$ & \textbf{0.12} & 1.82 \\
 & (300) & $[38.6,\,46.8]$ & 0.19 & 16.55 & $[18.7,\,3615]$ & 84.74 & 21.30 & $[40.8,\,45.3]$ & \textbf{0.11} & 7.95 & $[40.8,\,45.3]$ & \textbf{0.11} & 7.27 \\
 & (1000) & $[39.1,\,45.4]$ & 0.15 & 70.02 & $[24,\,687]$ & 15.63 & 57.74 & $[41.3,\,43.8]$ & \textbf{0.06} & 65.80 & $[41.3,\,43.8]$ & \textbf{0.06} & 95.44 \\
\addlinespace[4pt]
\multirow[c]{3}{*}{\shortstack[c]{Mars Rover}}
 & (50, 50) & $[0,\,0.797]$ & 1.59 & 92.02 & $[0.177,\,0.833]$ & 1.31 & 136.87 & $[0.488,\,0.515]$ & \textbf{0.05} & 115.15 & -- & -- & TO \\
 & (75, 75) & $[0,\,0.895]$ & 1.42 & 521.67 & $[0.0904,\,0.9]$ & 1.28 & 576.92 & $[0.619,\,0.647]$ & \textbf{0.04} & 580.21 & -- & -- & TO \\
  & (125, 125) & $[0,\,0.877]$ & 1.42 & 2169.21 & $[0.0441,\,0.922]$ & 1.42 & 2553.84 & $[0.599,\,0.635]$ & \textbf{0.06} & 2306.29 & -- & -- & TO \\
  
  \addlinespace[4pt]
  \multirow[c]{3}{*}{Glider} & (21, 17) & $[41.1,\,44.6]$ & 0.08 & 0.25 & $[42.4,\,42.9]$ & 0.01 & 0.19 & $[42.5,\,42.8]$ & \textbf{0.01} & 0.71 & $[42.5,\,42.8]$ & \textbf{0.01} & 37.26 \\
 & (51, 47) & $[62.3,\,20001]$ & 293.07 & 28.92 & $[58.8,\,59.2]$ & 0.01 & 1.67 & $[58.8,\,59.1]$ & \textbf{0.01} & 17.16 & $[58.8,\,59.1]$ & \textbf{0.01} & 6268.23 \\
 & (106, 99) & $[66.4,\,35254]$ & 464.04 & 52.95 & $[75.6,\,76.1]$ & 0.01 & 3.88 & $[75.6,\,76]$ & \textbf{0.01} & 48.50 & -- & -- & TO \\
  \addlinespace[4pt]
 \multirow[c]{3}{*}{\shortstack[c]{Parallel Betting}} & (5) & $[24,\,29.4]$ & 0.20 & 0.31 & $[25.1,\,28.2]$ & 0.12 & 0.40 & $[26.1,\,27.2]$ & \textbf{0.04} & 0.80 & $[26.1,\,27.2]$ & \textbf{0.04} & 10.16 \\
 & (10) & $[26.4,\,40.6]$ & 0.44 & 2.86 & $[29.7,\,34.5]$ & 0.15 & 3.23 & $[31.1,\,32.9]$ & 0.06 & 6.42 & $[31.1,\,32.8]$ & \textbf{0.05} & 116.52 \\
 & (20) & $[13,\,110]$ & 3.39 & 21.23 & $[25.4,\,31.7]$ & \textbf{0.22} & 666.30 & -- & -- & TO & -- & -- & TO \\

\bottomrule
\end{tabular}
} 
\label{tab:results_udtmc_optpolicy_interval_relgap_total}
\vspace{-14pt}
\end{table}

\vspace{10pt}
\noindent \textbf{Results.}
Table~\ref{tab:results_udtmc_optpolicy_interval_relgap_total} reports the results for the first setting 
under uniform sampling. 
For each uncertainty structure, we evaluate how well the resulting UMDP can certify the performance of the policy that is optimal in the true MDP. We report the robust value $\underline{V}$ and the optimistic value $\overline{V}$ of this policy, corresponding to the worst- and best-case value that can be certified under the respective uncertainty structure. This allows a fair comparison of the different approximations. In addition, we report the \emph{relative gap}, i.e., the width of the interval $[\underline{V},\overline{V}]$ divided by the true value $V^*$, as well as the total runtime required to construct and solve the respective uncertain models. Appendix~\ref{app:experiments} provides extended results, including a breakdown of the runtime spent on model construction and solving.

Figure~\ref{fig:main} illustrates the online policy learning process for two representative benchmarks. For each uncertainty structure, we plot the number of processed trajectories against (1) the performance of the robust policy synthesised after that number of trajectories, evaluated in the true (hidden) MDP, shown as solid lines, and (2) the corresponding PAC performance guarantee, shown as dashed lines. Appendix~\ref{app:experiments} provides the extended results for all considered~case~studies.

\begin{figure}[t]
\setlength{\textfloatsep}{8pt plus 2pt minus 2pt}
\setlength{\floatsep}{6pt plus 2pt minus 2pt}
\setlength{\intextsep}{8pt plus 2pt minus 2pt}
    \centering

    \begin{subfigure}[b]{\textwidth}
        \centering
        \includegraphics[width=0.95\linewidth]{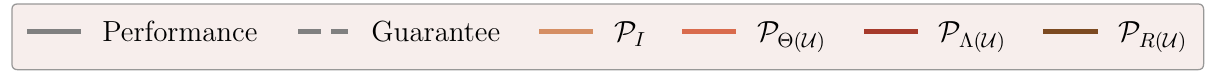}
        \label{fig:sub3}
    \end{subfigure}

    \vspace{0.4em}

    \begin{subfigure}[b]{0.46\textwidth}
        \centering
        \includegraphics[width=\linewidth]{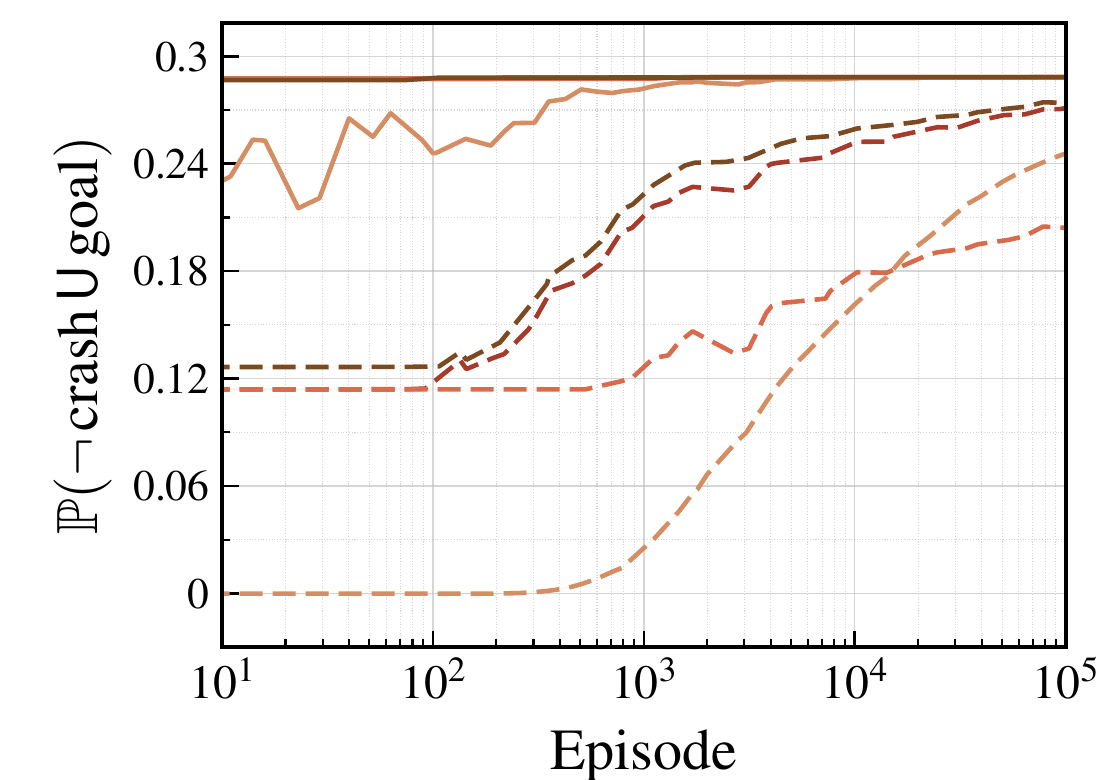}
        \caption{Aircraft}
        \label{fig:sub1}
    \end{subfigure}
    \hspace{4pt}
    \begin{subfigure}[b]{0.46\textwidth}
        \centering
        \includegraphics[width=\linewidth]{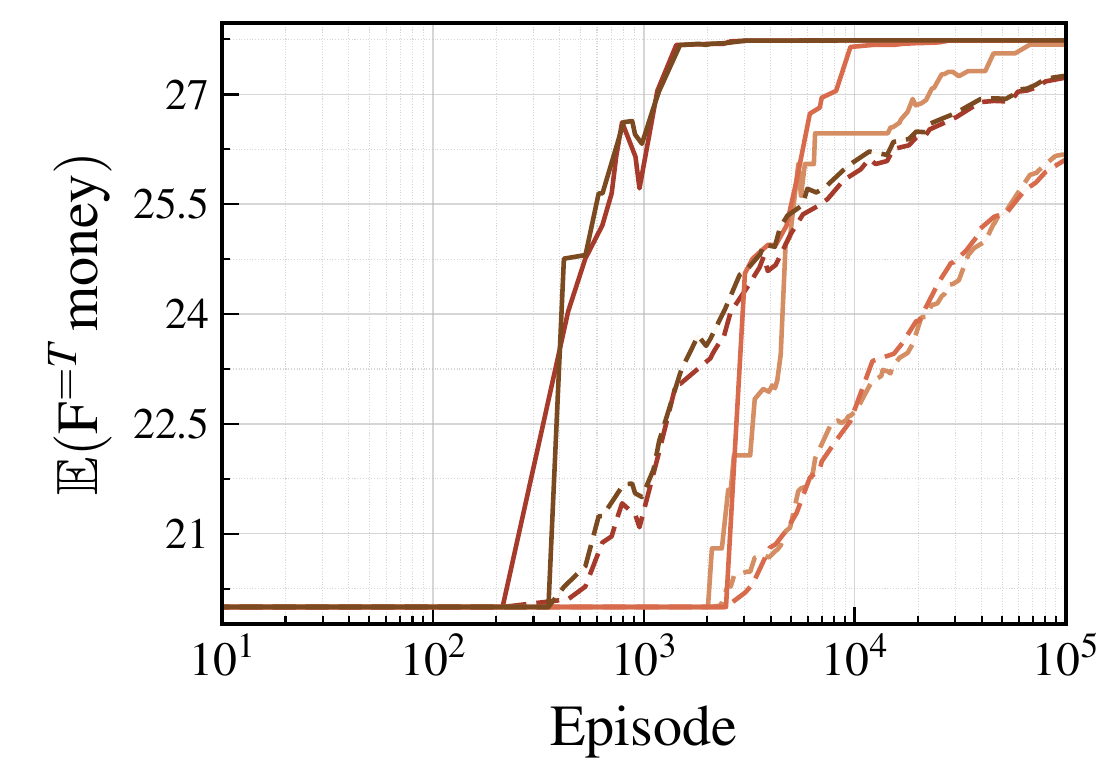}
        \caption{Parallel Betting}
        \label{fig:sub2}
    \end{subfigure}

\caption{Robust policy learning progress in the online setting. Solid lines show the performance of the current robust policy on the true model. Dashed lines show the corresponding PAC performance guarantee certified for the policy.}
    \label{fig:main}
    \vspace{-15pt}
\end{figure}

\vspace{10pt}
\noindent \textbf{Discussion.}
In the first setting, where all uncertainty sets are constructed from the same data, the results confirm the behaviour predicted by the inclusion hierarchy of Theorem~\ref{thm:inclusion}. As expected, the rectangular relaxation $\mathcal{P}_{R(\mathcal{U})}$ of the uncertainty region $\mathcal{U}$ yields the tightest approximation and, consequently, the sharpest certification of policy performance, but at the highest computational cost.
In low-dimensional domains such as the Betting Game, $\mathcal{P}_{R(\mathcal{U})}$ remains practically efficient via vertex enumeration, but this becomes ineffective~in~higher~dimensions.

Further, the results show that the expression-wise projection $\mathcal{P}_{\Lambda(\mathcal{U})}$ remains tight while retaining computational efficiency. In many cases, it achieves exactly the same values as $\mathcal{P}_{R(\mathcal{U})}$, but at a cost comparable to the substantially less tight
baseline interval uncertainty set $\mathcal{P}_{I}$ obtained from parameter tying alone. The main limitation arises when the number of distinct expressions $|\Lambda|$ is very large, since constructing $\mathcal{P}_{\Lambda(\mathcal{U})}$ then requires solving a correspondingly large number of LPs.
The experiments also illustrate the incomparability of $\mathcal{P}_{I}$ and $\mathcal{P}_{\Theta(\mathcal{U})}$: the latter can be tighter in some instances at low computational cost, but is overall dominated by the more consistently effective expression-wise projection.
Overall, $\mathcal{P}_{\Lambda(\mathcal{U})}$ appears to offer the best balance between computational efficiency and approximation tightness. By leveraging the parametric structure, it achieves substantially tighter approximations than the baseline $\mathcal{P}_{I}$ based solely on parameter tying, in several cases tightening the bounds by orders of magnitude.

We observe a similar pattern in the second setting, where robust policies are learned online from iteratively collected data and progressively refined uncertainty models. Learning with $\mathcal{P}_{R(\mathcal{U})}$ consistently yields the tightest and strongest performance guarantees throughout the learning process. In particular, it improves sample efficiency, measured by the certified performance guarantee achieved after a given number of processed trajectories, by up to an order of magnitude compared to the baseline approach based on $\mathcal{P}_{I}$. The expression-wise projection $\mathcal{P}_{\Lambda(\mathcal{U})}$ again closely tracks the guarantees obtained via $\mathcal{P}_{R(\mathcal{U})}$, while the results also empirically illustrate the incomparability of $\mathcal{P}_{\Theta(\mathcal{U})}$ and $\mathcal{P}_{I}$.

The evaluation demonstrates that exploiting the parametric structure can lead to substantial improvements in both the tightness of the learned uncertainty sets and the quality of the resulting robust policies and their certified performance guarantees, in several cases by orders of magnitude. 
Overall, this highlights the central benefit of our approach: by lifting statistical uncertainty from the transitions to the parameter space and reasoning about it jointly through the pMDP structure, we obtain substantially less conservative robust learning and synthesis than interval-based uncertain MDP learning alone, while maintaining formal PAC guarantees on performance and robustness.

\section{Conclusions}
\vspace{-5pt}
\label{sec:conclusion}

We have presented a framework for robust parameter learning in pMDPs that lifts statistical uncertainty from individual transitions into parameter space. By exploiting the algebraic dependencies induced by shared parameters, the approach yields tighter uncertainty sets than traditional UMDP learning while retaining PAC guarantees, and admits tractable relaxations for robust policy synthesis. Our experiments show that this leads to substantially stronger certified guarantees and more effective robust policies.

A promising direction for future work is to combine robust parameter learning with abstractions. Since the parametric model is available, symbolic probabilistic bisimulation techniques~\cite{DBLP:conf/popl/LarsenS89,DBLP:conf/csfw/Smith03} could be used to obtain smaller abstract models, enabling more efficient verification and policy synthesis. Data collected in the original system could then be mapped to the abstraction, where our learning procedure can be applied. Combining this with recent techniques for learning abstractions~\cite{DBLP:conf/cav/AbateGS24,DBLP:conf/cav/AbateGMS25,DBLP:conf/birthday/AbateGRS25} may enable robust policy synthesis with formal PAC performance guarantees for very large, and potentially infinite-state systems.

\bibliographystyle{plain}
\bibliography{references}

\appendix

\section{Proofs}
\label{app:proofs}
We provide the formal proofs for the Theorems~\ref{thm:soundness} and~\ref{thm:inclusion}.

\subsection{Proof of Theorem~\ref{thm:soundness}}

\begin{theoremrep}[Soundness of $\mathcal{U}$, restated]
With probability at least $1-\delta$, the true parameter instantiation $\uu$ satisfies $\uu \in \mathcal{U}$. Equivalently,
\[
\Pr\bigl[\;P[\uu] \in \mathcal{P}_{\mathcal{U}}\;\bigr] \;\ge\; 1-\delta.
\]
Consequently, for any policy $\pi \in \Pi$ and state $s \in S$, with probability at least $1-\delta$ the robust value under the induced UMDP lower-bounds the true value in $M$, i.e.,
\[
\Pr\bigl[\;V^\pi_{U}(s) \leq V^{\pi}_{M}(s)\;\bigr] \;\ge\; 1-\delta.
\]
\end{theoremrep}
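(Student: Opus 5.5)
The plan is a union bound over the learned intervals, followed by a monotonicity argument for the infimum defining the robust value.

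\textbf{Step 1: one good event.} For each non-constant expression $f \in \Lambda_{\mathsf{unk}}$, let $E_f$ be the event $l_f \le f[\uu] \le u_f$. By construction of the pooled confidence intervals (Section~\ref{sec:parametertying}),
\[
\Pr[E_f^c] \le \frac{\delta}{|\Lambda_{\mathsf{unk}}|}.
\]
For constant expressions $f \in \Lambda \setminus \Lambda_{\mathsf{unk}}$ we set $l_f = u_f = f$, so the constraint holds deterministically for every instantiation. A union bound then gives
\[
\Pr\Bigl[\textstyle\bigcap_{f \in \Lambda_{\mathsf{unk}}} E_f\Bigr] \;\ge\; 1 - |\Lambda_{\mathsf{unk}}| \cdot \frac{\delta}{|\Lambda_{\mathsf{unk}}|} \;=\; 1-\delta.
\]
No independence between the intervals is needed. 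This matters because pooled counts from the same trajectories may be dependent.

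\textbf{Step 2: the good event implies $\uu \in \mathcal{U}$.} The true instantiation lies in $\mathcal{D}$ by the standing boundedness assumption. On the event of Step 1 it also satisfies every constraint in \eqref{eqn:constraintsys}, so $\uu \in \mathcal{U}$. Since $\mathcal{P}_{\mathcal{U}}$ is the image of $\mathcal{U}$ under $\vv \mapsto P[\vv]$, we get $\uu \in \mathcal{U} \Rightarrow P[\uu] \in \mathcal{P}_{\mathcal{U}}$, which yields the displayed probability bound. The statement says ``equivalently''; the direction actually needed is the implication just given, and it suffices for the bound.

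\textbf{Step 3: value bound.} Define the robust value of a fixed policy as
\[
V^\pi_U(s) = \inf_{P \in \mathcal{P}_{\mathcal{U}}} V^\pi_P(s),
\]
the inner infimum in \eqref{eq:robust_value_def}. On the event $P[\uu] \in \mathcal{P}_{\mathcal{U}}$, the infimum is at most its value at the particular kernel $P[\uu]$. That value is $V^\pi_{M}(s)$, because $M = M[\uu]$ and the path measure induced by $\pi$ and $P[\uu]$ coincides with the one induced by $\pi$ in $M$. Hence $V^\pi_U(s) \le V^\pi_M(s)$ on an event of probability at least $1-\delta$. This event is uniform in $\pi$ and $s$, so the bound holds simultaneously for all policies, in particular for a robust-optimal $\pi^*$ chosen from the data.

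\textbf{Main obstacle.} The delicate point is Step 1: justifying that each pooled interval has the stated coverage marginally, even though $\#(s,a)$ is random and, in the trajectory setting, data-dependent. I would handle this by invoking the validity of the interval construction from Section~\ref{sec:learn-umdp} as given; for adaptively collected trajectories, the coverage guarantee is conditional on the counts, which is the standard argument in the cited works. I would also remark that if $\Lambda_{\mathsf{unk}} = \emptyset$, the claim is trivial.
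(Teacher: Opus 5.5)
Your proposal is correct and follows the paper's proof essentially step for step. It uses a union bound over $\Lambda_{\mathsf{unk}}$ with constant expressions holding deterministically, identifies the resulting good event with $\uu \in \mathcal{U}$ (hence $P[\uu] \in \mathcal{P}_{\mathcal{U}}$), and bounds the infimum defining $V^\pi_U(s)$ by its value at the admissible kernel $P[\uu]$.

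Your additions are valid refinements: you make $\uu \in \mathcal{D}$ explicit, and you note that the good event does not depend on $\pi$ or $s$, so the bound holds simultaneously for all policies, including data-dependent ones. Like the paper, you rightly take per-expression interval coverage as a premise. Your aside that under adaptive sampling this coverage holds ``conditionally on the counts'' is not true in general, but the proof does not need it.
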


\begin{proof}
For each non-constant expression $f \in \Lambda_{\mathsf{unk}}$, the learned interval satisfies
\[
\Pr\bigl[\, l_f \leq f[\uu] \leq u_f \,\bigr]
\;\ge\;
1-\frac{\delta}{|\Lambda_{\mathsf{unk}}|}.
\]
For every constant expression $f \in \Lambda \setminus \Lambda_{\mathsf{unk}}$, we set $l_f=u_f=f$, and hence $l_f \leq f[\uu] \leq u_f$ holds deterministically.y
Now consider the event
\[
E \;:=\; \bigl\{\, \forall f \in \Lambda:\ l_f \leq f[\uu] \leq u_f \,\bigr\}.
\]
If $E$ does not occur, then at least one of the intervals for some $f \in \Lambda_{\mathsf{unk}}$ fails to contain the true value $f[\uu]$. Thus
\[
E^c
\;\subseteq\;
\bigcup_{f \in \Lambda_{\mathsf{unk}}}
\bigl\{\, f[\uu] \notin [l_f,u_f] \,\bigr\},
\]
and by the union bound,
\[
\Pr(E^c)
\;\le\;
\sum_{f \in \Lambda_{\mathsf{unk}}}
\Pr\bigl[\, f[\uu] \notin [l_f,u_f] \,\bigr]
\;\le\;
\sum_{f \in \Lambda_{\mathsf{unk}}} \frac{\delta}{|\Lambda_{\mathsf{unk}}|}
\;=\;
\delta.
\]
Hence $\Pr(E) \ge 1-\delta$. By definition of $\mathcal{U}$, the event $E$ is exactly the event $\uu \in \mathcal{U}$.

For the second claim, note that by definition of $\mathcal{P}_{\mathcal{U}}$, we have $P[\uu] \in \mathcal{P}_{\mathcal{U}}$ if and only if $\uu \in \mathcal{U}$. Therefore,
\[
\Pr\bigl[\,P[\uu] \in \mathcal{P}_{\mathcal{U}}\,\bigr]
=
\Pr\bigl[\,\uu \in \mathcal{U}\,\bigr]
\;\ge\;
1-\delta.
\]

For the third claim, fix any policy $\pi \in \Pi$ and state $s \in S$. On the event $P[\uu] \in \mathcal{P}_{\mathcal{U}}$, the true kernel is admissible, and therefore
\[
V_U^\pi(s)
=
\inf_{P' \in \mathcal{P}_{\mathcal{U}}} V_{P'}^\pi(s)
\;\le\;
V_{P[\uu]}^\pi(s)
=
V_M^\pi(s).
\]
Hence the event $\{P[\uu] \in \mathcal{P}_{\mathcal{U}}\}$ implies the event $\{V_U^\pi(s) \le V_M^\pi(s)\}$, and so
\begin{align*}
\Pr\bigl[\,V_U^\pi(s) \leq V_M^\pi(s)\,\bigr]
&\;\ge\;
\Pr\bigl[\,P[\uu] \in \mathcal{P}_{\mathcal{U}}\,\bigr]
\;\ge\;
1-\delta.
\end{align*}
\end{proof}

\subsection{Proof of Theorem~\ref{thm:inclusion}}
\begin{theoremrep}[Inclusion hierarchy, restated]
\label{thm:inclusion-restated}
The following inclusions hold:
\[
\mathcal{P}_I
\;\supseteq\;
\mathcal{P}_{\Lambda(\mathcal{U})}
\;\supseteq\;
\mathcal{P}_{R(\mathcal{U})}
\;\supseteq\;
\mathcal{P}_{\mathcal{U}},
\qquad\text{and}\qquad
\mathcal{P}_{\Theta(\mathcal{U})}
\;\supseteq\;
\mathcal{P}_{\Lambda(\mathcal{U})}.
\]
Moreover, $\mathcal{P}_{\Theta(\mathcal{U})}$ is in general incomparable to $\mathcal{P}_I$.
\end{theoremrep}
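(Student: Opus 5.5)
The four inclusions follow from unfolding definitions and one monotonicity fact: minimising a linear function over a larger set gives a smaller minimum, and maximising gives a larger maximum. Incomparability needs two small explicit pMDPs. Throughout I use that $\mathcal{U}\subseteq\mathcal{D}$, so every $P[\vv]$ with $\vv\in\mathcal{U}$ is a valid kernel, since $P_\Theta(s,a)\in\pdist$. If $\mathcal{U}=\emptyset$, I adopt the convention $l^\Lambda_f=+\infty$, $u^\Lambda_f=-\infty$ (and likewise for $B(\mathcal{U})$). Then all sets built from $\mathcal{U}$ are empty and the inclusions among them, and into $\mathcal{P}_I$, hold trivially. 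So assume $\mathcal{U}\neq\emptyset$.

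\textbf{Step 1: $\mathcal{P}_{\mathcal{U}}\subseteq\mathcal{P}_{R(\mathcal{U})}$.} Take $P[\vv]$ with $\vv\in\mathcal{U}$. For every $(s,a)$ the component $P[\vv](s,a)$ lies in $\mathcal{P}_{R(\mathcal{U})}(s,a)$ by definition, so $P[\vv]$ lies in the product.

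\textbf{Step 2: $\mathcal{P}_{R(\mathcal{U})}\subseteq\mathcal{P}_{\Lambda(\mathcal{U})}$.} Both sets are rectangular, so it suffices to compare them at each $(s,a)$. Let $\mu=P[\vv](s,a)$ with $\vv\in\mathcal{U}$, so $\mu\in\Delta(S)$. For each $s'$, set $f=P_\Theta(s,a,s')\in\Lambda$. Then $\mu(s')=f[\vv]$, which lies in $[\min_{\mathcal{U}}f,\max_{\mathcal{U}}f]=[l^\Lambda_f,u^\Lambda_f]$.

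\textbf{Step 3: $\mathcal{P}_{\Lambda(\mathcal{U})}\subseteq\mathcal{P}_I$.} Every $\vv\in\mathcal{U}$ satisfies $l_f\le f[\vv]\le u_f$, so $l_f\le l^\Lambda_f$ and $u^\Lambda_f\le u_f$. The interval sets are therefore nested at each $(s,a)$. The same argument applies to constant $f$, where $l_f=u_f=f$.

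\textbf{Step 4: $\mathcal{P}_{\Lambda(\mathcal{U})}\subseteq\mathcal{P}_{\Theta(\mathcal{U})}$.} Since $\mathcal{U}\subseteq B(\mathcal{U})$, we get $l^\Theta_f\le l^\Lambda_f$ and $u^\Lambda_f\le u^\Theta_f$, and the local interval sets are again nested.

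\textbf{Step 5: incomparability.} This is the only step needing ingenuity. I would take two-successor states with outgoing expressions $f$ and $1-f$, so that a looser interval on $f$ directly yields an admissible distribution outside the other set.

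\emph{$\mathcal{P}_I\not\subseteq\mathcal{P}_{\Theta(\mathcal{U})}$.} Take one parameter with $\mathcal{D}=[0,\tfrac12]$ and expressions $\theta$ and $2\theta$ (with complements) at two different states. Suppose the learned intervals are $[0.1,0.4]$ and $[0.4,0.6]$. Then $\mathcal{U}=[0.2,0.3]=B(\mathcal{U})$, so $\mathcal{P}_{\Theta(\mathcal{U})}$ restricts $\theta$ to $[0.2,0.3]$. The distribution $(0.1,0.9)$ is in $\mathcal{P}_I$ but not in $\mathcal{P}_{\Theta(\mathcal{U})}$.

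\emph{$\mathcal{P}_{\Theta(\mathcal{U})}\not\subseteq\mathcal{P}_I$.} Take two parameters with $\mathcal{D}=[0,1]^2$, and expressions $g=\tfrac12(\theta_1+\theta_2)$ and $h=\tfrac12(\theta_1-\theta_2+1)$ at different states, each with its complement. Suppose the learned intervals are $g,h\in[0.45,0.55]$. Then $\mathcal{U}$ is a diamond around $(\tfrac12,\tfrac12)$ and $B(\mathcal{U})=[0.4,0.6]^2$. On this box $h$ ranges over $[0.4,0.6]$, so $\mathcal{P}_{\Theta(\mathcal{U})}$ admits $(0.4,0.6)$ at the $h$-state, which $\mathcal{P}_I$ excludes.

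The remaining work is to verify these numbers and that all expressions are valid parametric distributions on $\mathcal{D}$. Both are routine.
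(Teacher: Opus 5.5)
Your proposal is correct and follows the paper's proof step for step. The four inclusions are proved by the same unfolding of definitions plus monotonicity of $\min$/$\max$ under set inclusion, and incomparability by two small explicit counterexamples. Your counterexamples are different instances from the paper's, and pairing each expression with its complement at a two-successor state handles the sum-to-one constraint that the paper's one-parameter $\theta$/$1-\theta$ example tacitly ignores; your convention for $\mathcal{U}=\emptyset$ likewise covers an edge case the paper leaves implicit.
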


\begin{proof}
We prove each inclusion individually.

\paragraph{1. $\mathcal{P}_{R(\mathcal{U})} \supseteq \mathcal{P}_{\mathcal{U}}$.}
Let $P[\vv] \in \mathcal{P}_{\mathcal{U}}$ for some $\vv \in \mathcal{U}$. Then, by definition of the local projections, we have $P[\vv](s,a) \in \mathcal{P}_{R(\mathcal{U})}(s,a)$ for every $(s,a)\in S\times A$. Hence $P[\vv] \in \mathcal{P}_{R(\mathcal{U})}$, which shows $\mathcal{P}_{\mathcal{U}} \subseteq \mathcal{P}_{R(\mathcal{U})}$.

\paragraph{2. $\mathcal{P}_{\Lambda(\mathcal{U})} \supseteq \mathcal{P}_{R(\mathcal{U})}$.}
Fix a state--action pair $(s,a)$ and let $\mu \in \mathcal{P}_{R(\mathcal{U})}(s,a)$. By definition, there exists $\vv \in \mathcal{U}$ such that $\mu = P[\vv](s,a)$. For each successor $s' \in S$, let $f = P_\Theta(s,a,s') \in \Lambda$. Since $l_f^\Lambda$ and $u_f^\Lambda$ are defined as the minimum and maximum of $f[\vv']$ over all $\vv' \in \mathcal{U}$, we have $l_f^\Lambda \le f[\vv] = \mu(s') \le u_f^\Lambda$. Thus $\mu \in \mathcal{P}_{\Lambda(\mathcal{U})}(s,a)$. Since $(s,a)$ was arbitrary, it follows that $\mathcal{P}_{R(\mathcal{U})} \subseteq \mathcal{P}_{\Lambda(\mathcal{U})}$.

\paragraph{3. $\mathcal{P}_{I} \supseteq \mathcal{P}_{\Lambda(\mathcal{U})}$.}
Recall that $\mathcal{U}$ is defined by the original learned bounds, i.e.,
\[
\mathcal{U}
=
\bigl\{\,\vv \in \mathcal{D} \mid \forall f\in\Lambda:\ l_f \le f[\vv] \le u_f\,\bigr\}.
\]
Hence, for every $f\in\Lambda$, all values of $f[\vv]$ with $\vv\in\mathcal{U}$ lie in the interval $[l_f,u_f]$. Since $l_f^\Lambda = \min_{\vv\in\mathcal{U}} f[\vv]$ and $u_f^\Lambda = \max_{\vv\in\mathcal{U}} f[\vv]$, it follows that
\[
l_f \le l_f^\Lambda \le u_f^\Lambda \le u_f.
\]
Therefore, for every state--action pair $(s,a)$ and successor $s'\in S$, the interval induced by expression-wise projection is contained in the original interval. Thus $\mathcal{P}_{\Lambda(\mathcal{U})}(s,a) \subseteq \mathcal{P}_{I}(s,a)$ for all $(s,a)$, and consequently $\mathcal{P}_{\Lambda(\mathcal{U})} \subseteq \mathcal{P}_{I}$.

\paragraph{4. $\mathcal{P}_{\Theta(\mathcal{U})} \supseteq \mathcal{P}_{\Lambda(\mathcal{U})}$.}
Since $B(\mathcal{U})$ is the smallest axis-aligned hyperrectangle containing $\mathcal{U}$, we have $\mathcal{U} \subseteq B(\mathcal{U})$. Therefore, for every expression $f\in\Lambda$, minimizing or maximizing over the larger set $B(\mathcal{U})$ can only decrease the lower bound and increase the upper bound. In other words, $l_f^\Theta \le l_f^\Lambda$ and $u_f^\Theta \ge u_f^\Lambda$. Thus the interval induced by parameter-wise projection contains the corresponding interval induced by expression-wise projection, and so $\mathcal{P}_{\Lambda(\mathcal{U})}(s,a) \subseteq \mathcal{P}_{\Theta(\mathcal{U})}(s,a)$ for all $(s,a)$. Hence $\mathcal{P}_{\Lambda(\mathcal{U})} \subseteq \mathcal{P}_{\Theta(\mathcal{U})}$.

\paragraph{5. Incomparability of $\mathcal{P}_{\Theta(\mathcal{U})}$ and $\mathcal{P}_{I}$.}
We give two counterexamples showing that neither inclusion holds in general.

\smallskip
\noindent\emph{$\mathcal{P}_{\Theta(\mathcal{U})}\not\subseteq \mathcal{P}_I$.}
Consider two parameters $\theta_1,\theta_2\in[0,1]$ and three transition expressions
\[
f_1(\theta)=\theta_1,\qquad
f_2(\theta)=\theta_2,\qquad
f_3(\theta)=\tfrac12(\theta_1+\theta_2),
\]
arising from distinct state--action pairs. Suppose the learned intervals are $f_1 \in [0,1]$, $f_2 \in [0,1]$, and $f_3 \in [\tfrac12,\tfrac12]$. Then the corresponding uncertainty region is
\[
\mathcal{U}
=
\bigl\{\,(\theta_1,\theta_2)\in[0,1]^2 \mid \theta_1+\theta_2=1\,\bigr\},
\]
and thus $B(\mathcal{U})=[0,1]^2$. For the expression $f_3$, expression-wise projection yields the exact interval $[l^\Lambda_{f_3},u^\Lambda_{f_3}] = [\tfrac12,\tfrac12]$, whereas parameter-wise projection over $B(\mathcal{U})$ yields $[l^\Theta_{f_3},u^\Theta_{f_3}] = [0,1]$. Hence $\mathcal{P}_{\Theta(\mathcal{U})}$ admits kernels whose $f_3$-component lies outside the original interval, showing that $\mathcal{P}_{\Theta(\mathcal{U})}\not\subseteq \mathcal{P}_I$.

\smallskip
\noindent\emph{$\mathcal{P}_{I}\not\subseteq \mathcal{P}_{\Theta(\mathcal{U})}$.}
Consider now a pMDP with one parameter $\theta\in[0,1]$ and two distinct state--action pairs with transition expressions $f_1(\theta)=\theta$ and $f_2(\theta)=1-\theta$. Suppose the learned intervals are $f_1 \in [0.4,0.7]$ and $f_2 \in [0.4,0.7]$. Then
\[
\mathcal{U}
=
\bigl\{\,\theta\in[0,1] \mid 0.4\le\theta\le0.7,\; 0.4\le 1-\theta\le0.7\,\bigr\}
=
[0.4,0.6].
\]
Thus $B(\mathcal{U})=[0.4,0.6]$, and parameter-wise projection yields the refined intervals $f_1,f_2 \in [0.4,0.6]$. In contrast, the original interval uncertainty set $\mathcal{P}_I$ still permits the independent choices $f_1=0.7$ and $f_2=0.7$ at the two distinct state--action pairs. Such a kernel belongs to $\mathcal{P}_I$ but not to $\mathcal{P}_{\Theta(\mathcal{U})}$. Hence $\mathcal{P}_{I}\not\subseteq \mathcal{P}_{\Theta(\mathcal{U})}$.
\end{proof}

\section{Correspondence between pMDPs and UMDPs}
\label{app:pmdpumdp}

We formalise the relationship between parametric MDPs and uncertain MDPs.

\subsection{From pMDPs to UMDPs}

Let $M_\Theta = (S, A, s_0, \Theta, P_\Theta)$ be a pMDP and let $\mathcal{U} \subseteq \mathbb{R}^{|\Theta|}$ be a set of admissible parameter instantiations.
Each $\mathbf{u} \in \mathcal{U}$ induces a concrete transition kernel $P[\mathbf{u}] \in (\Delta(S))^{S \times A}$.
We define the UMDP induced by $M_\Theta$ and $\mathcal{U}$ as
\[
U_{\mathcal{U}} \coloneqq (S, A, s_0, \mathcal{P}_{\mathcal{U}}),
\]
where the uncertainty set is given by
\[
\mathcal{P}_{\mathcal{U}}
\;\coloneqq\;
\bigl\{\, P[\mathbf{u}] \in (\Delta(S))^{S \times A} \mid \mathbf{u} \in \mathcal{U} \,\bigr\}.
\]

The structure of the uncertainty set $\mathcal{P}_{\mathcal{U}}$ is determined by the parameter dependencies in the transition function $P_\Theta$ and by the shape of the admissible instantiation set $\mathcal{U}$.
In particular, shared parameters induce coupling between transition probabilities at different state--action pairs.

\vspace{10pt}
\noindent \textbf{Coupled uncertainty.}
If there exists a parameter $\theta \in \Theta$ that occurs in the transition polynomials of two distinct state--action pairs $(s,a)$ and $(s',a')$, then instantiating $v$ simultaneously constrains both $P(s,a)$ and $P(s',a')$.
In this case, the resulting uncertainty set $\mathcal{P}_{\mathcal{U}}$ is, in general, non-rectangular.

\vspace{10pt}
\noindent \textbf{Rectangular uncertainty.}
Suppose that the parameter set $\Theta$ can be partitioned as $\Theta = \biguplus_{(s,a)\in S\times A} \Theta_{s,a}$ such that $P(s,a)$ depends only on parameters in $\Theta_{s,a}$, and that the admissible instantiation set factorises as
\[
\mathcal{U} = \prod_{(s,a)\in S\times A} \mathcal{U}_{s,a},
\qquad
\mathcal{U}_{s,a} \subseteq \mathbb{R}^{|\Theta_{s,a}|}.
\]
Then the induced uncertainty set is $(s,a)$-rectangular and can be written as
\[
\mathcal{P}_{\mathcal{U}}
=
\prod_{(s,a)\in S\times A}
\bigl\{\, P(s,a)[\mathbf{u}_{s,a}] \mid \mathbf{u}_{s,a} \in \mathcal{U}_{s,a} \,\bigr\},
\]
where $\mathbf{u}_{s,a}$ denotes the restriction of $\mathbf{u}$ to $\Theta_{s,a}$.

\subsection{From UMDPs to pMDPs}

Conversely, any UMDP can be represented as a pMDP together with a suitable admissible parameter set.
Let $U = (S, A, s_0, \mathcal{P})$ be a UMDP, where $\mathcal{P} \subseteq (\Delta(S))^{S \times A}$ is a non-empty set of admissible transition kernels.

\vspace{10pt}
\noindent \textbf{Rectangular case.}
Assume that the uncertainty set $\mathcal{P}$ is $(s,a)$-rectangular, i.e.,
\[
\mathcal{P}
=
\prod_{(s,a)\in S\times A} \mathcal{P}(s,a)
\qquad\text{with}\qquad
\mathcal{P}(s,a) \subseteq \Delta(S).
\]

We construct an equivalent pMDP by introducing explicit parameters for the transition probabilities.
For each $(s,a)\in S\times A$ and each $s'\in S$, let $\theta_{s,a,s'}$ be a parameter representing the probability of transitioning from $(s,a)$ to $s'$.
Let
\[
\Theta \coloneqq \{\, \theta_{s,a,s'} \mid (s,a)\in S\times A,\ s'\in S \,\}.
\]
Define the parametric transition function $P_\Theta$ by
\[
P_\Theta(s,a)(s') \coloneqq \theta_{s,a,s'}.
\]
The admissible parameter instantiations are restricted to
\[
\mathcal{U}
\;\coloneqq\;
\Bigl\{\, \mathbf{u} \in \mathbb{R}^{|\Theta|}
\ \Big|\
\bigl( u(\theta_{s,a,s'}) \bigr)_{s'\in S} \in \mathcal{P}(s,a)
\ \text{for all } (s,a)\in S\times A
\Bigr\}.
\]

By construction, each $\mathbf{u}\in\mathcal{U}$ induces a transition kernel $P[\mathbf{u}] \in \mathcal{P}$, and every kernel in $\mathcal{P}$ arises from a unique instantiation $\mathbf{u}\in\mathcal{U}$.
Hence, the pMDP together with $\mathcal{U}$ induces exactly the UMDP $U$.

\vspace{10pt}
\noindent \textbf{Coupled case.}
Assume that the uncertainty set $\mathcal{P} \subseteq (\Delta(S))^{S\times A}$ is non-rectangular.
We again construct a pMDP by introducing explicit parameters for the transition probabilities, but without imposing any independence assumptions.

For each $(s,a)\in S\times A$ and $s'\in S$, let $\theta_{s,a,s'}$ be a parameter representing the probability of transitioning from $(s,a)$ to $s'$.
Let
\[
\Theta \coloneqq \{\, \theta_{s,a,s'} \mid (s,a)\in S\times A,\ s'\in S \,\}.
\]
Define the parametric transition function $P$ by
\[
P_\Theta(s,a)(s') \coloneqq \theta_{s,a,s'}.
\]
The admissible parameter instantiations are restricted to
\[
\mathcal{U}
\;\coloneqq\;
\Bigl\{\, \mathbf{u} \in \mathbb{R}^{|\Theta|}
\ \Big|\
P[\mathbf{u}] \in \mathcal{P}
\Bigr\}.
\]

By construction, $\mathcal{U}$ encodes all global constraints and dependencies between transition probabilities captured by $\mathcal{P}$.
Moreover, the pMDP together with $\mathcal{U}$ induces exactly the uncertainty set $\mathcal{P}$, and is non-rectangular whenever $\mathcal{P}$ is.

Together, the constructions show that pMDPs and UMDPs are equivalent modelling formalisms when paired with a suitable admissible parameter set.

\section{Parameter Tying}
\label{app:parametertying}
This appendix gives the formal details of the parameter-tying preprocessing step used in Section~\ref{sec:parametertying}. The key idea is that transitions labelled by the same parametric expression correspond to the same unknown probability under any parameter instantiation and can therefore be learned from pooled counts.
We assume throughout that no state--action distribution assigns the same parametric expression to two distinct successors, i.e.,
\[
P_\Theta(s,a,s') = P_\Theta(s,a,s'')
\quad\Longrightarrow\quad
s'=s''.
\]
This assumption is made only for notational and technical convenience. If the same expression labels two distinct successors of a fixed state--action pair, then these occurrences do not give rise to separate Bernoulli events, but are coupled as part of the same successor distribution. Such cases can be eliminated without loss of generality: if two distinct successors $s',s''$ of the same state--action pair carry the same expression, we introduce an intermediate state that is reached with the shared probability, followed by a fixed distribution over the original successors. This preserves the relevant parametric structure and probabilities while ensuring that each expression labels at most one successor per state--action~pair~\cite{DBLP:journals/corr/abs-2507-15532}.

For an expression $f\in\Lambda$, we define its occurrence set by
\[
\mathrm{Occ}(f)
\;:=\;
\bigl\{\, (s,a,s') \in S\times A\times S \mid P_\Theta(s,a,s') = f \,\bigr\}.
\]
All transitions in $\mathrm{Occ}(f)$ share the same parametric expression and hence the true probability value $f[\uu]$. Consequently, they correspond to repeated observations of the same Bernoulli event and their counts can be pooled. We therefore define the pooled trial count and pooled success count for $f$ by
\[
N_f
\;:=\;
\sum_{(s,a,s')\in \mathrm{Occ}(f)} \#(s,a),
\qquad
K_f
\;:=\;
\sum_{(s,a,s')\in \mathrm{Occ}(f)} \#(s,a,s').
\]
Under the true instantiation $\uu$, the random variable $K_f$ is binomially distributed with parameters $N_f$ and $f[\uu]$. Hence, any binomial confidence-interval construction from Section~\ref{sec:learn-umdp} can be applied directly to the pooled counts $(K_f,N_f)$.
We distribute the confidence budget only across the non-constant expressions,
\[
\Lambda_{\mathsf{unk}}
\;:=\;
\{\, f\in\Lambda \mid f \notin \mathbb{Q}\,\},
\qquad
\gamma \;:=\; \frac{\delta}{|\Lambda_{\mathsf{unk}}|}.
\]
For each $f\in\Lambda_{\mathsf{unk}}$, we compute from a confidence interval $[l_f,u_f]$ such that
\[
\Pr\bigl[\, l_f \leq f[\uu] \leq u_f \,\bigr]
\;\ge\;
1-\gamma.
\]
For constant expressions $f\in\Lambda\setminus\Lambda_{\mathsf{unk}}$, we set $l_f=u_f=f$ exactly.
The resulting tied interval uncertainty set is obtained by assigning to each transition $(s,a,s')$ the interval associated with its expression:
\[
I(s,a,s')
\;:=\;
\bigl[l_{P_\Theta(s,a,s')},\,u_{P_\Theta(s,a,s')}\bigr].
\]
Equivalently, the parameter-tied IMDP is the uncertainty set
\[
\mathcal{P}_I(s,a)
\;:=\;
\Bigl\{\mu\in\Delta(S)\ \Big|\ \forall s'\in S:\ \mu(s') \in I(s,a,s')\Bigr\}.
\]
Since all occurrences of a fixed expression share the same pooled interval, parameter tying enforces exact equality between transitions labelled by the same expression.
Compared to the standard interval construction that learns a separate interval for every transition probability, parameter tying improves statistical efficiency in two ways. First, pooling increases the effective sample size from the local count $\#(s,a)$ to the aggregated count $N_f$, which typically yields narrower intervals. Second, the confidence budget is split across $|\Lambda_{\mathsf{unk}}|$ unknown expressions rather than across all non-constant transitions, which tightens the intervals further whenever the same expression occurs multiple times.

Parameter tying applies only to \emph{exact} equality of expressions. If two transitions depend on the same parameters but are labelled by different expressions, then their success probabilities need not coincide under the true instantiation, and the corresponding samples do not form repeated observations of the same Bernoulli event. In that case, pooling is not statistically justified. This is precisely the gap addressed by our parameter-space approach in Section~\ref{sec:learning}, which reasons jointly over all learned constraints instead of only over exactly matching expressions.

\section{Linear Relaxations for Polynomial Constraints}
\label{app:mccormick}
This appendix provides further details on the linear relaxation of polynomial constraints. The goal is to replace the nonlinear constraint system defining $\mathcal{U}$ by a polyhedral outer approximation that preserves soundness and can therefore be used within the rectangular relaxations of Section~\ref{sec:rectrelax}.

\vspace{10pt}
\noindent \textbf{Rewriting of monomials into bilinear terms.}
We view each polynomial expression $f\in\Lambda$ as a factorable expression built from constants, parameters, additions, and products. Since sums are already linear, only products require relaxation. Concretely, every monomial
\[
m(\theta) \;=\; c \prod_{j=1}^{k} \theta_{i_j}
\]
is rewritten by introducing auxiliary variables for intermediate products. We introduce auxiliary variables $z_1,\dots,z_{k-1}$ and enforce the chain
\[
z_1=\theta_{i_1}\theta_{i_2},
\qquad
z_r=z_{r-1}\theta_{i_{r+1}}
\quad\text{for } r=2,\dots,k-1,
\]
with $m(\theta)=c\,z_{k-1}$. Applying this construction to all monomials in all expressions yields an equivalent lifted representation of the nonlinear system in terms of linear equalities and bilinear equalities.

\vspace{10pt}
\noindent \textbf{McCormick relaxation of bilinear terms.}
Consider a bilinear equality $z=xy$ with bounds
$\underline{x}\le x\le \overline{x}$ and $\underline{y}\le y\le \overline{y}$.
Its McCormick relaxation is given by the four inequalities
\begin{equation}
\label{eq:mccormick-bilin-app}
\begin{aligned}
z &\ge \underline{y}\,x + \underline{x}\,y - \underline{x}\,\underline{y},
\qquad
z \le \underline{y}\,x + \overline{x}\,y - \overline{x}\,\underline{y},
\\
z &\ge \overline{y}\,x + \overline{x}\,y - \overline{x}\,\overline{y},
\qquad
z \le \overline{y}\,x + \underline{x}\,y - \underline{x}\,\overline{y}.
\end{aligned}
\end{equation}
Over the box $[\underline{x},\overline{x}] \times [\underline{y},\overline{y}]$, these inequalities describe the convex hull of the graph of $z=xy$~\cite{DBLP:journals/mp/McCormick76}. Hence, replacing every bilinear equality in the lifted system by~\eqref{eq:mccormick-bilin-app} yields a linear outer approximation of the original polynomial constraints.

\vspace{10pt}
\noindent \textbf{Global linear relaxation.}
Let $x$ collect all base parameters and all auxiliary variables introduced for intermediate products. The original nonlinear uncertainty region is defined by the constraints
\[
l_f \le f[\vv] \le u_f
\qquad\text{for all } f\in\Lambda,
\]
together with the box constraints $\vv\in\mathcal{D}$. After lifting and replacing each bilinear equality by its McCormick envelope, we obtain a polyhedron
\begin{equation}
\label{eq:global-mccormick-relaxation}
\mathcal{R}
\;=\;
\{\,x \mid A x \le b\,\},
\end{equation}
where $Ax\le b$ contains:
(i) the original linear constraints,
(ii) all McCormick inequalities for bilinear terms,
and
(iii) the variable bounds used in these inequalities.
By construction, every feasible point of the original nonlinear system extends to a feasible point of $\mathcal{R}$. Hence $\mathcal{R}$ is an outer approximation of the true feasible region, and any robust lower bound computed over $\mathcal{R}$ remains sound.

\vspace{10pt}
\noindent \textbf{Optimisation-based bound tightening.}
The quality of the McCormick relaxation depends crucially on the bounds used in~\eqref{eq:mccormick-bilin-app}. We therefore apply optimisation-based bound tightening (OBBT), which iteratively improves these bounds by solving LPs over the current relaxation. Concretely, for every variable $w$ in the lifted system, including both base parameters and auxiliary variables, we compute
\[
\underline{w}^{\,*} \;=\; \min \{\, w \mid Ax \le b \,\},
\qquad
\overline{w}^{\,*} \;=\; \max \{\, w \mid Ax \le b \,\}.
\]
These optimised bounds replace the previous interval bounds of $w$, after which all McCormick inequalities involving $w$ are rebuilt. This process is repeated until no bound improves beyond a prescribed tolerance or a maximum number of rounds.

Since tighter variable bounds yield tighter McCormick envelopes, OBBT can significantly strengthen the final linear relaxation. Soundness is preserved throughout, since each OBBT step optimises over the current outer approximation and therefore only removes values that are infeasible in that approximation.

\vspace{10pt}
\noindent \textbf{Remarks on tightness.}
McCormick relaxations are exact for a single bilinear term on a box, but for general factorable polynomials the resulting polyhedral relaxation depends on the chosen factorisation and on the variable bounds. In particular, different recursive decompositions of multilinear monomials can lead to different relaxations, and tighter bounds from OBBT typically improve the quality of the final relaxation substantially. This is well known in deterministic global optimisation and is one of the main motivations for combining McCormick relaxations with OBBT in practice~\cite{DBLP:journals/mp/McCormick76,DBLP:journals/jgo/ScottSB11,DBLP:journals/jgo/GleixnerBMW17}.

In our implementation, the construction of $\mathcal{R}$ and the OBBT procedure are performed once as a preprocessing step. The resulting polyhedral feasible region is then reused across all states and throughout robust value iteration, exactly as in the linear case considered in Section~\ref{sec:rectrelax}.

\section{Ellipsoidal Confidence Sets in Parameter Space}
\label{app:ellipsoidal}
For completeness, we also include an ellipsoidal baseline in parameter space.
This baseline is obtained by instantiating, in our pMDP setting, the
self-normalised least-squares confidence construction of
Abbasi-Yadkori et al.~\cite{DBLP:conf/nips/Abbasi-YadkoriPS11},
together with the linear-MDP adaptation discussed by
Ayoub et al.~(Appendix~C)~\cite{ayoub2020model}.
In contrast to the polytopic uncertainty sets used in the main body of the paper,
this yields an ellipsoidal confidence region over parameter
instantiations and therefore leads to conic optimisation in the robust Bellman
updates. 

Let $\mathcal{M}_{\Theta}=(S,A,s_0,P)$ be a pMDP with parameter space
$D \subseteq \R^\ell$, and let $\Lambda$ denote the set of distinct parametric
transition expressions occurring in $P$.
For each $(s,a,s') \in S \times A \times S$, we write
\[
P(s,a,s') = f_{s,a,s'} \in \Lambda.
\]
When all transition expressions are affine, each $f \in \Lambda$ can be written as
\[
f[\uu] = c_f + b_f^\top \uu,
\]
for some constant $c_f \in \R$, coefficients $b_f \in \R^\ell$, and
parameter instantiation~$\uu \in D$.

Ayoub et al.~\cite{ayoub2020model} derive confidence ellipsoids for finite-horizon linear MDPs by
combining least-squares regression with the self-normalised concentration bound
of Abbasi-Yadkori et al.~\cite{DBLP:conf/nips/Abbasi-YadkoriPS11}.
Their regression target is the overall value itself and tailored to episodic, finite horizon MDPs. In our case, the unknown object is the transition kernel induced by the
unknown parameter instantiation $\uu$, for infinite-horizon objectives. To extend this to our setting, we apply
the same concentration mechanism directly to the obtained one-step transition observations.

Concretely, for any fixed transition triple $(s,a,s')$, each visit to $(s,a)$
in the set of observed transitions $C$ contributes a binary observation
$Y \in \{0,1\}$ indicating whether the next state equals $s'$.
Its mean is precisely the transition probability $P[\uu](s,a,s')$.
Hence the centred noise term
\[
\eta := Y - P[\uu](s,a,s')
\]
is supported on an interval of length $1$, and is therefore
$\tfrac12$-sub-Gaussian by Hoeffding's lemma~\cite{DBLP:books/daglib/boucheron13}.
This is precisely the bounded-noise condition required by the
self-normalised least-squares bounds of
Abbasi-Yadkori et al.~\cite{DBLP:conf/nips/Abbasi-YadkoriPS11}.

Collecting these observations over the sample set $C$ yields, for each
expression $f \in \Lambda$, a family of Bernoulli trials with common mean
$f[\uu] = c_f + b_f^\top \uu$.
Since all transition triples in $\text{Occ}(f)$ are governed by the same symbolic
expression, they provide information about the same unknown linear quantity and
can therefore be pooled.
This gives a single count-based regression equation per distinct expression
rather than per concrete transition triple, which is exactly the least-squares
analogue of the parameter-tying mechanism used in our interval-based
constructions.
Aggregating these pooled equations over all $f \in \Lambda$ then yields the
following ridge least-squares system.

\[
V
   = \lambda I + \sum_{f \in \Lambda} N_f\, b_f b_f^\top,
\qquad
r
   = \sum_{f \in \Lambda} b_f \bigl(K_f - N_f c_f\bigr),
\]
with estimator
\[
\hat{\uu} = V^{-1} r.
\]
Here, $V$ is the regularised design matrix and $\lambda>0$ is the ridge
parameter.
The self-normalised concentration result of
Abbasi-Yadkori et al.~\cite{DBLP:conf/nips/Abbasi-YadkoriPS11}, instantiated in
the present count-based setting, yields the following sound confidence ellipsoid
\[
\mathcal{E}_\delta
   :=
   \left\{
      \vv \in \mathbb{R}^{\ell}
      \mid
      (\vv-\hat{\uu})^\top V (\vv-\hat{\uu}) \le \beta^2
   \right\},
\]
where
\[
\beta
   =
   R \sqrt{\log \frac{\det(V)}{\lambda^\ell}
           + 2 \log \frac{1}{\delta}}
   + \sqrt{\lambda}\, S,
\]
where $1 - \delta \in (0,1)$ is the overall confidence. With this construction, the unknown instantiation $\uu$ lies in $\mathcal{E}_\delta$ with probability at least
$1-\delta$ over the random sample set $C$~\cite{ayoub2020model}.
The constant $R$ is the sub-Gaussian noise bound, which for Bernoulli transition
observations is $R=\tfrac12$.
The constant $S$ is any a priori bound on $\|\uu\|_2$, for
$D \subseteq [0,1]^\ell$, the choice $S=\sqrt{\ell}$ is always sound, and $\lambda > 0$ is an arbitrary ridge parameter, which we choose as $\lambda = 10^{-2}$ to stabilise the regression. 

The ellipsoidal baseline induces, for each $(s,a)$, the uncertainty set
\[
\mathcal{P}_{R(\mathcal{E})}(s,a)
   :=
   \left\{
      \mu \in \Delta(S)
      \;\middle|\;
      \exists \vv \in \mathcal{E}_{\delta}
      :
      \mu(s') = P[\vv](s,a,s')
      \text{ for all } s' \in S
   \right\}.
\]
The inner optimisation in the robust Bellman update in Eq.~\eqref{eq:robust_bellman_rect} then becomes:
\[
\min_{\vv \in \mathcal{E}_{\delta}}
\sum_{s' \in S} P[\vv](s,a,s')\, V^*_U(s').
\]
Thus, robust synthesis under the ellipsoidal baseline requires solving a
convex conic optimisation problem for each Bellman backup.

As in the polytopic case, the relaxations of Section~\ref{sec:rectrelax} can also be applied to ellipsoidal uncertainty sets in order to obtain interval MDPs and thereby enable efficient robust synthesis. The difference lies in the optimisation problems used to construct these relaxations: instead of solving linear programmes for each expression in the expression-wise projection, or for each parameter in the parameter-wise projection, we solve second-order cone programmes over the ellipsoidal uncertainty region. The resulting interval model is again a sound rectangular over-approximation of the original ellipsoidal uncertainty set, and the inclusion hierarchy of Theorem~\ref{thm:inclusion} carries over accordingly.

Overall, this ellipsoidal construction is a natural adaption of the self-normalised least-squares confidence sets~\cite{ayoub2020model} to the pMDP setting. In the extended experimental evaluation in Appendix~\ref{app:experiments}, we compare this ellipsoidal baseline with the polytopic confidence regions and their corresponding relaxations across the considered case studies.

\section{Extended Experimental Evaluation}
\label{app:experiments}

In this appendix, we provide detailed descriptions of the considered case studies, as well as extended results complementing those in Section~\ref{sec:experiments}. In particular, we report a refined breakdown of runtime into time spent constructing and solving the respective uncertain models, and include additional results for further uncertainty classes, namely ellipsoids, as an additional baseline for comparison~\cite{zhou2021nearly,ayoub2020model}.

\subsection{Descriptions of Benchmark Environments}

\vspace{8pt}
\noindent \textbf{Aircraft collision avoidance.}
This benchmark models two aircraft moving towards each other on a discrete grid of size $N\times M$, where our agent controls one aircraft and aims to reach the far end of the grid without entering a collision zone around the adversarial aircraft~\cite{kochenderfer2015}. In each step, the agent chooses to ascend, descend, or maintain altitude while moving forward, and the adversarial aircraft simultaneously moves forward and may also change altitude. The transition probabilities are governed by four unknown mixture parameters $\Theta=\{\theta_1,\theta_2,\theta_3,\theta_4\}$, which combine four position-dependent motion modes. Concretely, the parameters determine both the success probability of the agent's vertical manoeuvre and the distribution of the adversary's vertical movement, with all probabilities varying affinely with the horizontal position along the grid. The resulting model therefore has a linear parametric structure with four dependent mixture weights. Our objective is to maximise the probability of reaching the goal without collision. We consider instances of sizes $(N,M)\in\{(50,10),(100,20),(200,40)\}$.

\vspace{8pt}
\noindent \textbf{Betting game.}
This benchmark models a sequential betting process in which the agent starts with 10 coins and, over a horizon of $n$ rounds, repeatedly chooses one of five actions corresponding to betting 0, 1, 2, 5, or 10 coins~\cite{DBLP:journals/mmor/BauerleO11}. After each non-zero bet, the agent either wins the wager and gains the bet amount, or loses it and forfeits the same amount. The transition probabilities are governed by two unknown parameters: a base win probability $\theta$ and a state-dependent term proportional to the current amount of money. In particular, for larger bets, the win probability is an affine function of both $\theta$ and the current capital, which yields a linear parametric structure with two parameters. The objective is to maximise the expected amount of money after $n$ betting rounds. We consider instances with horizons $n\in\{50,100,150\}$.

\vspace{8pt}
\noindent \textbf{Engagement.}
This benchmark models a customer-engagement process on a ladder of levels $\{0,\dots,L\}$, where level $0$ represents churn and level $L$ a successful conversion or purchase. At each decision step, the agent chooses one of three interventions, i.e., \emph{light}, \emph{medium}, or \emph{aggressive}, which may increase, decrease, or leave unchanged the current engagement level. The transition probabilities are governed by five unknown mixture parameters $\Theta=\{\theta_1,\theta_2,\theta_3,\theta_4,\theta_5\}$, with $\theta_5 = 1-(\theta_1+\theta_2+\theta_3+\theta_4)$, which combine five latent customer-response types. The effect of an intervention further depends on the current engagement zone (cold, warm, or hot), as well as on the previously chosen action and a short cooldown memory that penalises repeated aggressive interventions. The resulting model therefore has a linear parametric structure with five dependent mixture weights and non-trivial state-dependent dynamics. Rewards combine a per-step time cost, action-dependent intervention costs, and a penalty for churn. The objective is to minimise the expected accumulated cost until either churn or purchase is reached. We consider instances with $L\in\{100,300,1000\}$.

\vspace{8pt}
\noindent \textbf{Mars rover.}
This benchmark is a richer variant of the semi-autonomous vehicle domain of~\cite{DBLP:conf/tacas/Junges0DTK16}, and is already introduced in Section~\ref{sec:intro}. A rover moves on an $X\times Y$ grid and aims to reach a target location while maintaining communication with a remote controller via two unreliable channels. At each step, it may either move in one of the allowed directions or attempt communication over one of the two channels, with task failure occurring if too many moves are made without a successful transmission. The transition probabilities are governed by six unknown mixture parameters $\Theta=\{\theta_1,\theta_2,\theta_3,\theta_4,\theta_5,\theta_6\}$, which combine five position-dependent channel modes. In contrast to the original benchmark, our variant includes a larger grid, a richer spatial structure with zone-dependent scaling and local jamming regions, and a larger communication budget, yielding a more varied but still linear parametric structure. We consider instances of sizes $(X,Y)\in\{(50,50),(75,75),(125,125)\}$.

\vspace{8pt}
\noindent \textbf{Glider.}
This benchmark models an autonomous underwater glider navigating on a two-dimensional grid from a start location to a goal region in the presence of ocean currents~\cite{DBLP:conf/aaai/CostenRLH23}. At each location, the horizontal and vertical current components are known, but their effect on the glider's movement is governed by two unknown parameters, $\theta_h$ and $\theta_v$, capturing the sensitivity to horizontal and vertical drift, respectively. When the glider chooses a movement direction, the commanded motion may fail along the intended axis and may additionally drift along the orthogonal axis, yielding up to four possible outcomes per action. Because these two effects combine multiplicatively, the transition probabilities contain bilinear terms in $\theta_h$ and $\theta_v$, giving rise to a non-linear parametric structure. The objective is to minimise the expected time to reach the goal. We consider instances of sizes $(X,Y)\in\{(21,17), (51,47),(106,99)\}$.

\vspace{8pt}
\noindent \textbf{Parallel betting game.}
This benchmark is a parallel composition of two betting games that are played simultaneously over a common horizon of $n$ rounds. In each round, the agent chooses a single betting action, corresponding to wagering 0, 1, 2, 5, or 10 coins, and this action is applied independently to both games. Each game maintains its own capital and is governed by separate unknown parameters, determining its base win probability. As in the single-game benchmark, larger bets have outcome probabilities that additionally depend on the current amount of money. Since the overall state combines the capitals of both games and the final reward is their sum, the resulting model captures a coupled decision problem with two unknown parameters. Moreover, because the transition probabilities in the second game depend multiplicatively on $\theta_2$ and the current capital, the parallel composition yields a non-linear parametric structure. The objective is to maximise the expected total amount of money after $n$ rounds. We consider instances with horizons $n\in\{5,10,20\}$.

\subsection{Extended Results for Uniform Sampling}

Table~\ref{tab:results_udtmc_optpolicy_interval_relgap_split_runtime} reports the extended results for the first setting, in which $10^5$ trajectories are sampled uniformly and used to compare the tightness of the resulting uncertain models. In addition to the overall results, we separate the total runtime into the time spent constructing the uncertain model and the time spent solving it.

\begin{landscape}
\begin{table}[t]
\centering
\caption{Comparison of obtained bounds and runtimes for both building and solving the respective models obtained after sampling $10^5$ trajectories uniformly.}
\vspace{5pt}
\setlength{\tabcolsep}{0.5pt}
\small
\resizebox{1.58\textwidth}{!}{%
\begin{tabular}{>{\centering\arraybackslash}m{2.35cm} >{\centering\arraybackslash}m{1.65cm} >{\centering\arraybackslash}m{2.0cm} @{\hspace{5pt}} >{\centering\arraybackslash}m{0.95cm} @{\hspace{5pt}} >{\centering\arraybackslash}m{0.95cm} @{\hspace{5pt}} >{\centering\arraybackslash}m{0.95cm} @{\hspace{9pt}} >{\centering\arraybackslash}m{2.0cm} @{\hspace{5pt}} >{\centering\arraybackslash}m{0.95cm} @{\hspace{5pt}} >{\centering\arraybackslash}m{0.95cm} @{\hspace{5pt}} >{\centering\arraybackslash}m{0.95cm} @{\hspace{9pt}} >{\centering\arraybackslash}m{2.0cm} @{\hspace{5pt}} >{\centering\arraybackslash}m{0.95cm} @{\hspace{5pt}} >{\centering\arraybackslash}m{0.95cm} @{\hspace{5pt}} >{\centering\arraybackslash}m{0.95cm} @{\hspace{9pt}} >{\centering\arraybackslash}m{2.0cm} @{\hspace{5pt}} >{\centering\arraybackslash}m{0.95cm} @{\hspace{5pt}} >{\centering\arraybackslash}m{0.95cm} @{\hspace{5pt}} >{\centering\arraybackslash}m{0.95cm}}
\toprule
\multirow[c]{2}{*}{\textbf{Benchmark}} & \multirow[c]{2}{*}{\textbf{Instance}} & \multicolumn{4}{c}{$\pmb{\mathcal{P}_I}$} & \multicolumn{4}{c}{$\pmb{\mathcal{P}_{\Theta(\mathcal{U})}}$} & \multicolumn{4}{c}{$\pmb{\mathcal{P}_{\Lambda(\mathcal{U})}}$} & \multicolumn{4}{c}{$\pmb{\mathcal{P}_{R(\mathcal{U})}}$} \\
\cmidrule(lr){3-6} \cmidrule(lr){7-10} \cmidrule(lr){11-14} \cmidrule(lr){15-18}
 &  & $\mathbf{[\underline{V},\overline{V}]}$ & \textbf{Rel. gap} & \textbf{Build [s]} & \textbf{Solve [s]} & $\mathbf{[\underline{V},\overline{V}]}$ & \textbf{Rel. gap} & \textbf{Build [s]} & \textbf{Solve [s]} & $\mathbf{[\underline{V},\overline{V}]}$ & \textbf{Rel. gap} & \textbf{Build [s]} & \textbf{Solve [s]} & $\mathbf{[\underline{V},\overline{V}]}$ & \textbf{Rel. gap} & \textbf{Build [s]} & \textbf{Solve [s]} \\
\midrule
\multirow[c]{3}{*}{\shortstack[c]{Aircraft}} 
 & (50, 10) & $[0.67,\,0.814]$ & 0.19 & 0.77 & 3.28 & $[0.618,\,0.828]$ & 0.28 & 1.29 & 3.34 & $[0.719,\,0.767]$ & 0.06 & 2.17 & 3.55 & $[0.725,\,0.761]$ & \textbf{0.05} & 44.95 & 2.17 \\
 & (100, 20) & $[0.745,\,0.906]$ & 0.19 & 4.11 & 26.34 & $[0.679,\,0.918]$ & 0.29 & 8.77 & 25.94 & $[0.805,\,0.865]$ & 0.07 & 9.26 & 30.28 & $[0.814,\,0.858]$ & \textbf{0.05} & 309.67 & 43.69 \\
 & (200, 40) & $[0.628,\,0.92]$ & 0.36 & 43.23 & 645.05 & $[0.55,\,0.928]$ & 0.47 & 114.09 & 763.95 & $[0.754,\,0.85]$ & 0.12 & 128.50 & 754.28 & $[0.769,\,0.838]$ & \textbf{0.09} & 2175.51 & 2288.73 \\
\addlinespace[4pt]
\multirow[c]{3}{*}{\shortstack[c]{Betting Game}} & (50) & $[14.4,\,65.3]$ & 1.88 & 0.92 & 1.79 & $[25.8,\,28.7]$ & \textbf{0.10} & 1.57 & 4.92 & $[25.8,\,28.7]$ & \textbf{0.10} & 4.67 & 2.87 & $[25.8,\,28.7]$ & \textbf{0.10} & 1.28 & 1.01 \\
 & (100) & $[16.5,\,162]$ & 3.24 & 3.10 & 9.98 & $[41.8,\,47.7]$ & \textbf{0.13} & 4.99 & 7.87 & $[41.8,\,47.7]$ & \textbf{0.13} & 16.87 & 9.44 & $[41.8,\,47.7]$ & \textbf{0.13} & 8.79 & 8.26 \\
 & (150) & $[17.5,\,263]$ & 3.98 & 5.69 & 19.49 & $[56.5,\,65.5]$ & \textbf{0.15} & 19.36 & 18.61 & $[56.5,\,65.5]$ & \textbf{0.15} & 35.29 & 17.87 & $[56.5,\,65.5]$ & \textbf{0.15} & 13.10 & 25.24 \\
\addlinespace[4pt]
\multirow[c]{3}{*}{Engagement} & (100) & $[38.4,\,49.3]$ & 0.26 & 0.12 & 5.33 & $[16.6,\,5422]$ & 127.35 & 0.11 & 4.33 & $[39.8,\,45]$ & \textbf{0.12} & 0.19 & 0.82 & $[39.8,\,45]$ & \textbf{0.12} & 1.24 & 0.58 \\
 & (300) & $[38.6,\,46.8]$ & 0.19 & 0.32 & 16.23 & $[18.7,\,3615]$ & 84.74 & 0.31 & 20.98 & $[40.8,\,45.3]$ & \textbf{0.11} & 0.48 & 7.47 & $[40.8,\,45.3]$ & \textbf{0.11} & 1.41 & 5.86 \\
 & (1000) & $[39.1,\,45.4]$ & 0.15 & 0.98 & 69.04 & $[24,\,687]$ & 15.63 & 0.91 & 56.83 & $[41.3,\,43.8]$ & \textbf{0.06} & 1.26 & 64.54 & $[41.3,\,43.8]$ & \textbf{0.06} & 2.17 & 93.27 \\
\addlinespace[4pt]
\multirow[c]{3}{*}{\shortstack[c]{Mars Rover}} 
 & (50, 50) & $[0,\,0.797]$ & 1.59 & 9.38 & 82.64 & $[0.177,\,0.833]$ & 1.31 & 21.09 & 115.78 & $[0.488,\,0.515]$ & \textbf{0.05} & 27.74 & 87.41 & -- & -- & TO & TO \\
 & (75, 75) & $[0,\,0.895]$ & 1.42 & 15.96 & 505.71 & $[0.0904,\,0.9]$ & 1.28 & 47.80 & 529.12 & $[0.619,\,0.647]$ & \textbf{0.04} & 61.38 & 518.83 & -- & -- & TO & TO \\
 & (125, 125) & $[0,\,0.877]$ & 1.42 & 19.64 & 2149.57 & $[0.0441,\,0.922]$ & 1.42 & 97.87 & 2455.97 & $[0.599,\,0.635]$ & \textbf{0.06} & 111.17 & 2195.12 & -- & -- & TO & TO \\
 \addlinespace[4pt]
\multirow[c]{3}{*}{Glider} & (21, 17) & $[41.1,\,44.6]$ & 0.08 & 0.16 & 0.09 & $[42.4,\,42.9]$ & 0.01 & 0.16 & 0.03 & $[42.5,\,42.8]$ & 0.01 & 0.65 & 0.06 & $[42.5,\,42.8]$ & \textbf{0.01} & 37.20 & 0.06 \\
 & (51, 47) & $[62.3,\,20001]$ & 293.07 & 0.52 & 28.40 & $[58.8,\,59.2]$ & 0.01 & 1.12 & 0.55 & $[58.8,\,59.1]$ & 0.00 & 16.59 & 0.57 & $[67.9,\,68.1]$ & \textbf{0.00} & 9266.65 & 1.58 \\
 & (106, 99) & $[66.4,\,35254]$ & 464.04 & 0.52 & 52.43 & $[75.6,\,76.1]$ & 0.01 & 1.91 & 1.97 & $[75.6,\,76]$ & \textbf{0.00} & 46.57 & 1.94 & -- & -- & TO & TO \\
 \addlinespace[4pt]
\multirow[c]{3}{*}{\shortstack[c]{Parallel Betting}}
 & (5) & $[24.3,\,44.7]$ & 0.64 & 1.89 & 0.69 & $[25.1,\,28.2]$ & 0.10 & 0.38 & 0.02 & $[26.1,\,27.2]$ & 0.04 & 0.77 & 0.03 & $[26.1,\,27.2]$ & \textbf{0.03} & 10.13 & 0.03 \\
 & (10) & $[26.4,\,40.6]$ & 0.44 & 2.13 & 0.73 & $[29.7,\,34.5]$ & 0.15 & 2.83 & 0.40 & $[31.1,\,32.9]$ & 0.06 & 5.79 & 0.63 & $[31.1,\,32.8]$ & \textbf{0.05} & 116.11 & 0.41 \\
 & (20) & $[13,\,110]$ & 3.39 & 16.00 & 5.24 & $[25.4,\,31.7]$ & \textbf{0.22} & 659.37 & 6.94 & -- & -- & TO & TO & -- & -- & TO & TO \\
\bottomrule
\end{tabular}
} 
\label{tab:results_udtmc_optpolicy_interval_relgap_split_runtime}
\end{table}
\end{landscape}

\subsection{Extended Learning Results}

This section presents the full results for the online learning experiments. Table~\ref{tab:stats_same_suite_learning} summarises the salient characteristics of the benchmark instances used in this setting. Since the uncertain models must be rebuilt and resolved repeatedly in order to update the sampling policy, these instances are smaller than those used in the uniform-sampling setting, where each model is solved only once. Overall, the results show that exploiting the parametric structure consistently yields more effective policies and stronger certified performance guarantees from the same data and under the same high-confidence PAC guarantees.

\begin{table}[t]
\centering
\caption{Salient characteristics of the benchmarks for the online learning setting.}
\vspace{4pt}
\setlength{\tabcolsep}{2pt}
\footnotesize
\renewcommand{\arraystretch}{0.92}
\resizebox{0.9\textwidth}{!}{%
\begin{tabular}{>{\centering\arraybackslash}m{3.3cm} >{\centering\arraybackslash}m{1.8cm} >{\centering\arraybackslash}m{1.2cm} >{\centering\arraybackslash}m{1.35cm} >{\centering\arraybackslash}m{1.0cm} >{\centering\arraybackslash}m{1.0cm} >{\centering\arraybackslash}m{1.0cm}}
\toprule
\textbf{Benchmark} & \textbf{Instance} & \textbf{$|S|$} & \textbf{$|T|$} & \textbf{$|\Theta|$} & \textbf{$|\Lambda|$} & \textbf{Prop.} \\
\midrule
\multirow[c]{1}{*}{Aircraft} & (20, 10) & 1833 & 23743 & 6 & 271 & $\mathbb{P}$ \\
\addlinespace[2pt]
\multirow[c]{1}{*}{Betting Game} & (25, 0.55) & 3625 & 27703 & 2 & 1283 & $\mathbb{E}$ \\
\addlinespace[2pt]
\multirow[c]{1}{*}{Engagement} & (50, 0.30) & 994 & 2962 & 5 & 61 & $\mathbb{E}$ \\
\addlinespace[2pt]
\multirow[c]{1}{*}{Mars Rover} & (10, 10) & 5139 & 24515 & 6 & 145 & $\mathbb{P}$ \\
\addlinespace[2pt]
\multirow[c]{1}{*}{Glider} & (11, 9) & 99 & 1317 & 4 & 225 & $\mathbb{E}$ \\
\addlinespace[2pt]
\multirow[c]{1}{*}{Parallel Betting} & (6, 0.55) & 5046 & 30746 & 3 & 597 & $\mathbb{E}$ \\
\bottomrule
\end{tabular}
} 
\label{tab:stats_same_suite_learning}
\end{table}
\renewcommand{\arraystretch}{1}

\begin{center}
  \includegraphics[width=0.95\textwidth]{figures/learning/legend_border.pdf}
  \vspace{1em}


  \twoplotblock
{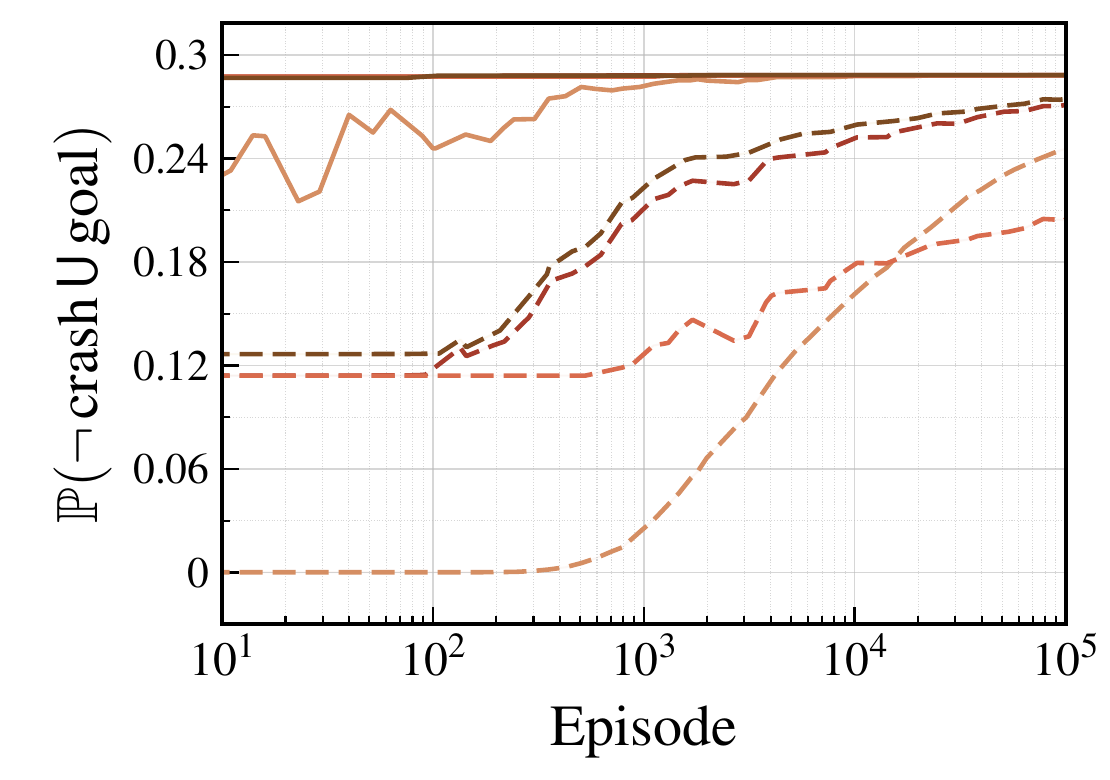}
{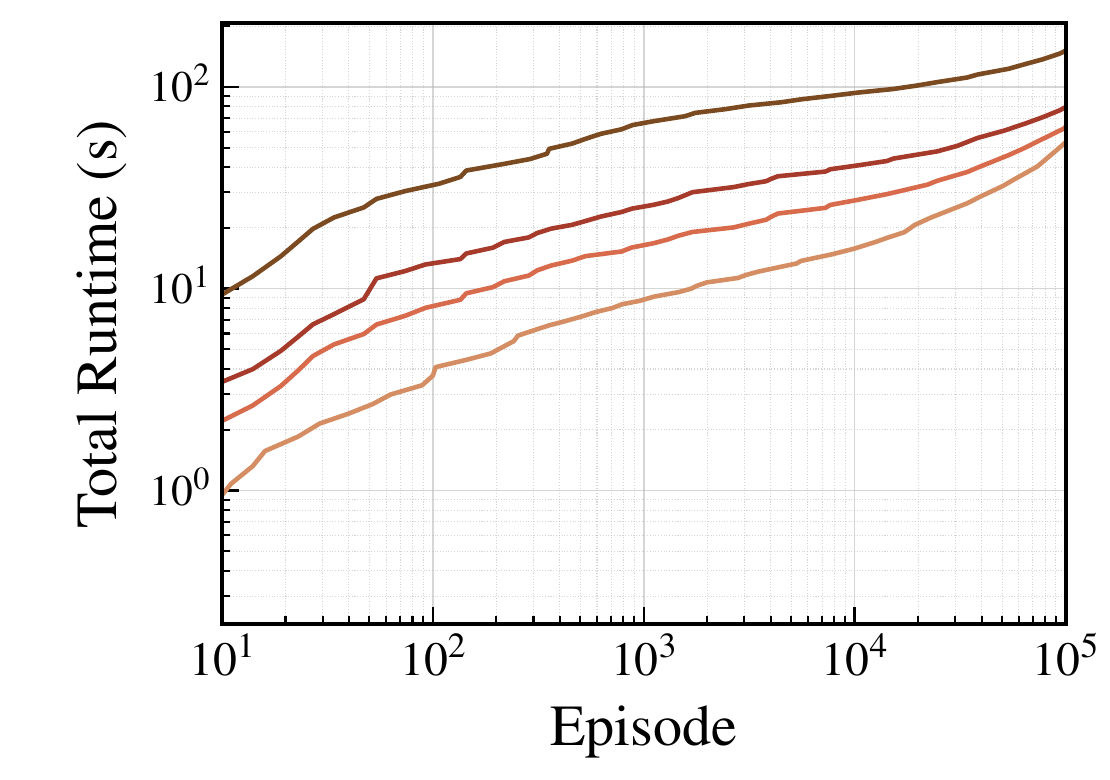}
    {Aircraft}
    {fig:aircraft}
  \hspace{2pt}
  \twoplotblock
{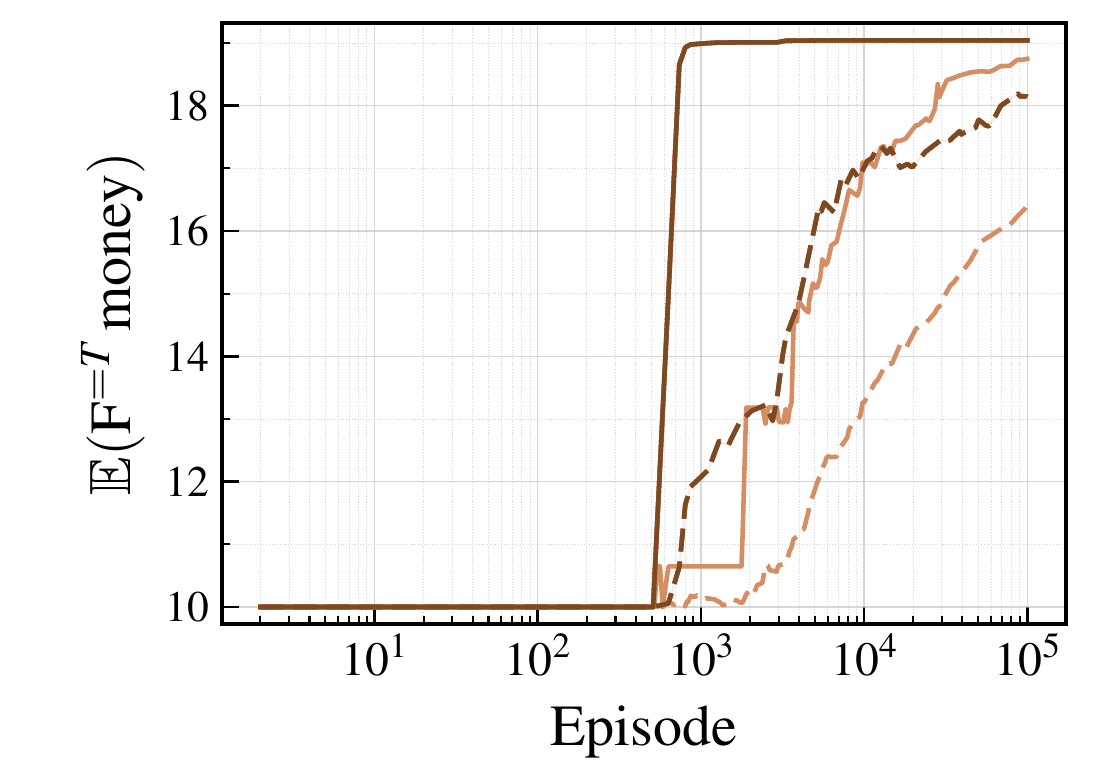}{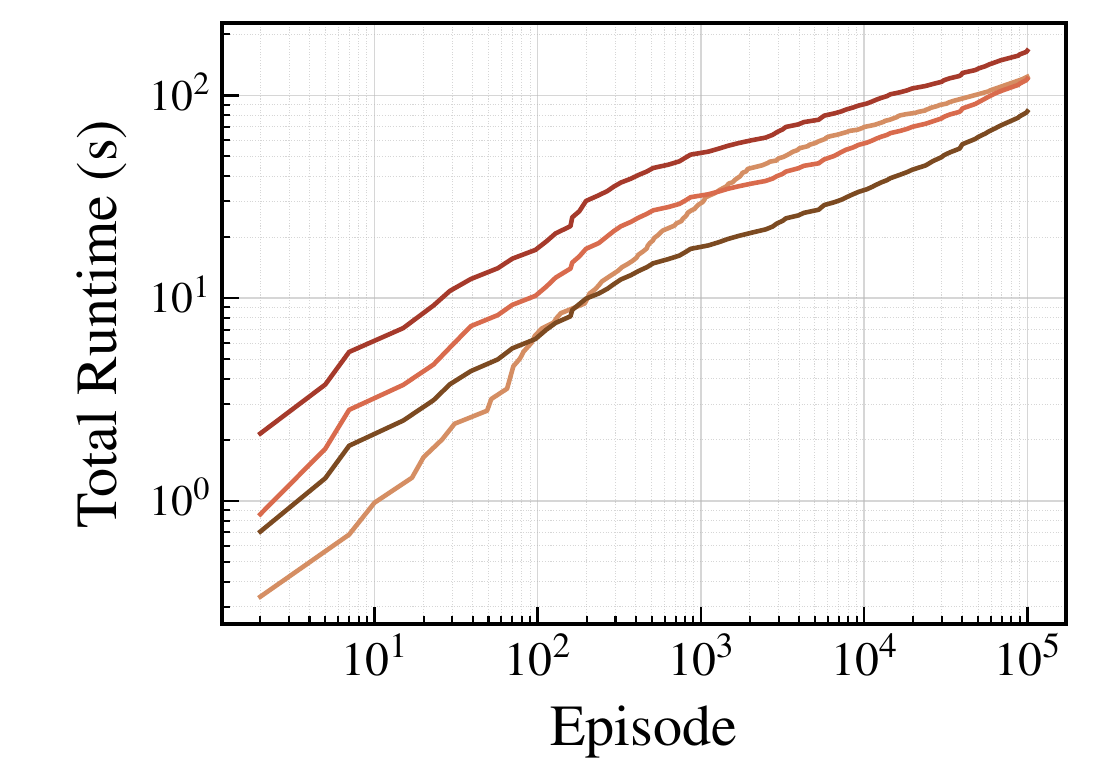}
    {Betting Game}
    {fig:betting}

  \vspace{1.1em}
  \twoplotblock{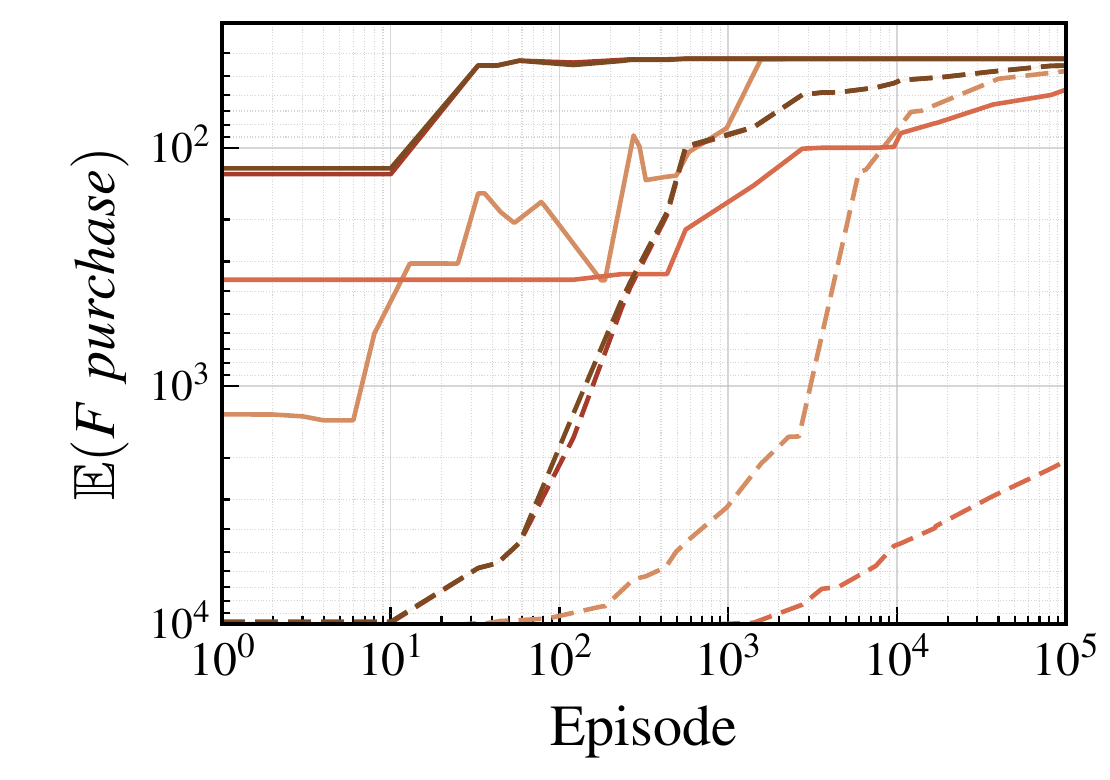}{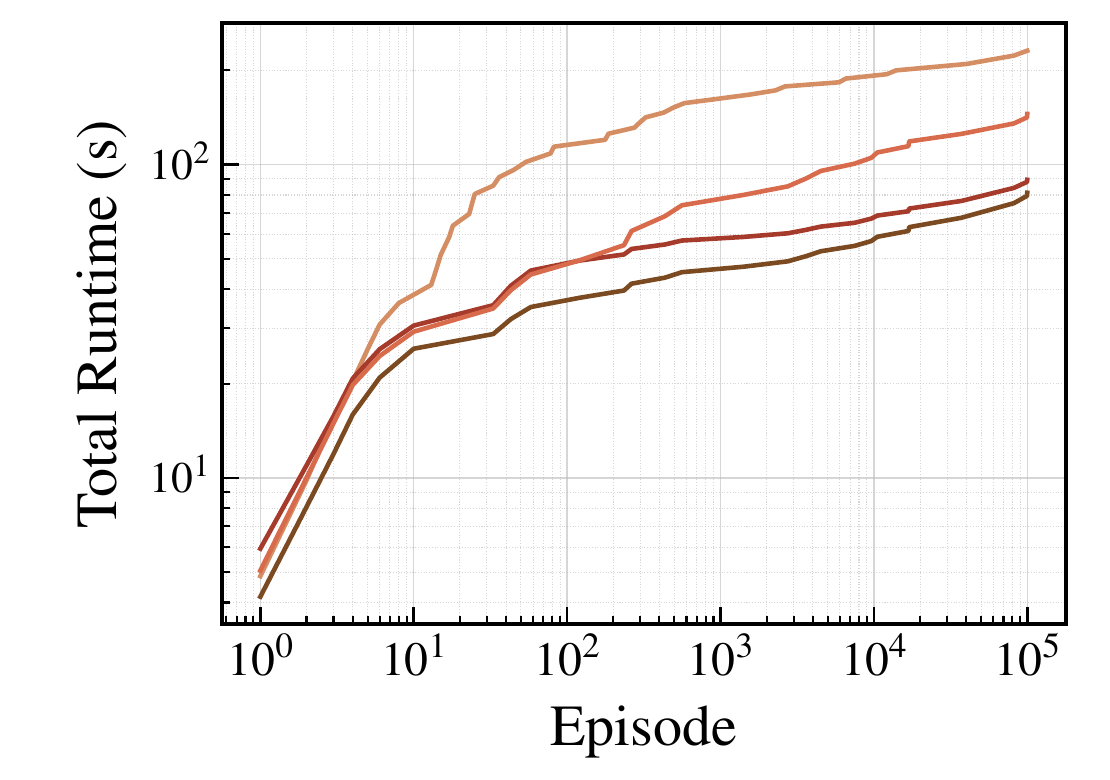}
    {Engagement}
    {fig:engagement}
  \hspace{2pt}
\twoplotblock{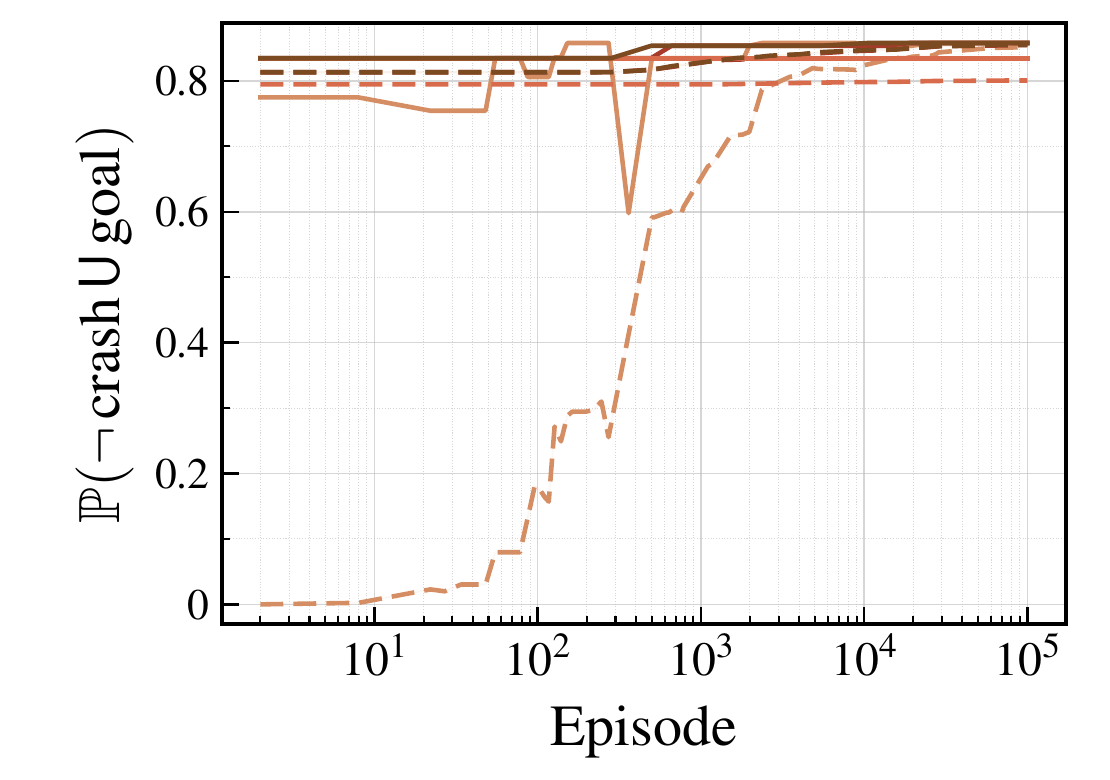}{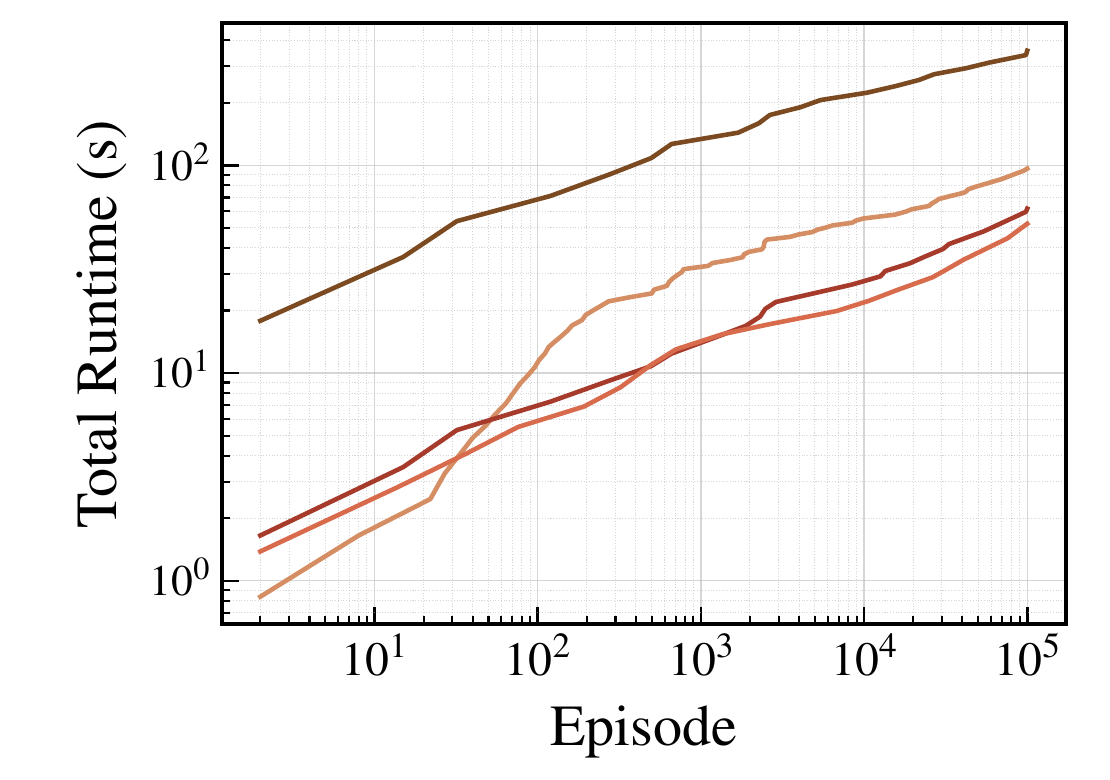}
    {Mars Rover}
    {fig:marsrover}

  \vspace{1.1em}
  \twoplotblock{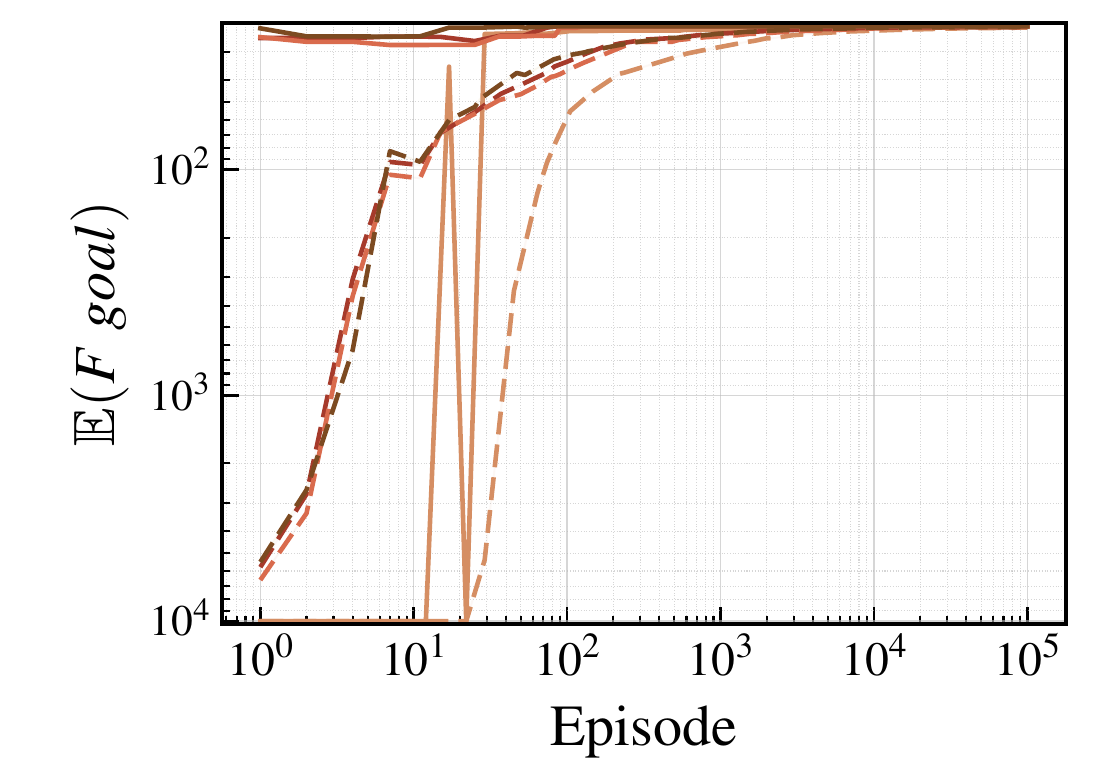}{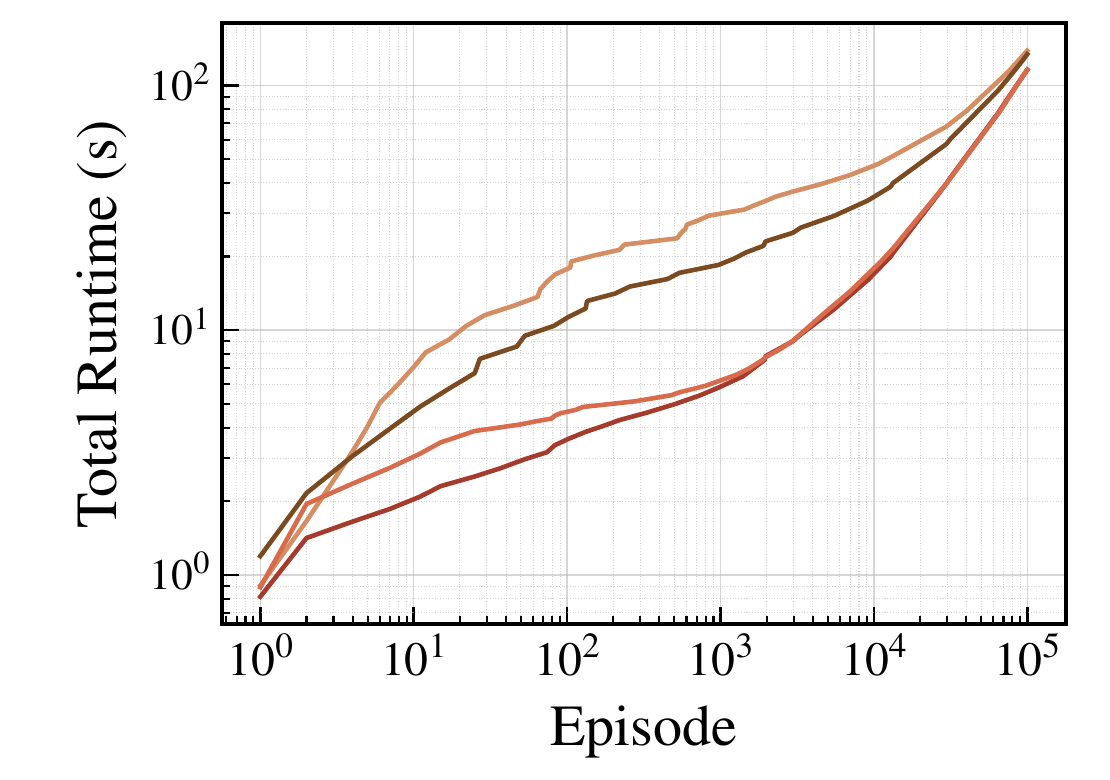}
    {Glider}
    {fig:glider}
  \hspace{2pt}
\twoplotblock{figures/plots/BETTING_GAME_PARALLEL/n=6,p_1=0.55,p_2=0.53/BETTING_GAME_PARALLEL_n=6,p_1=0.55,p_2=0.53.pdf}{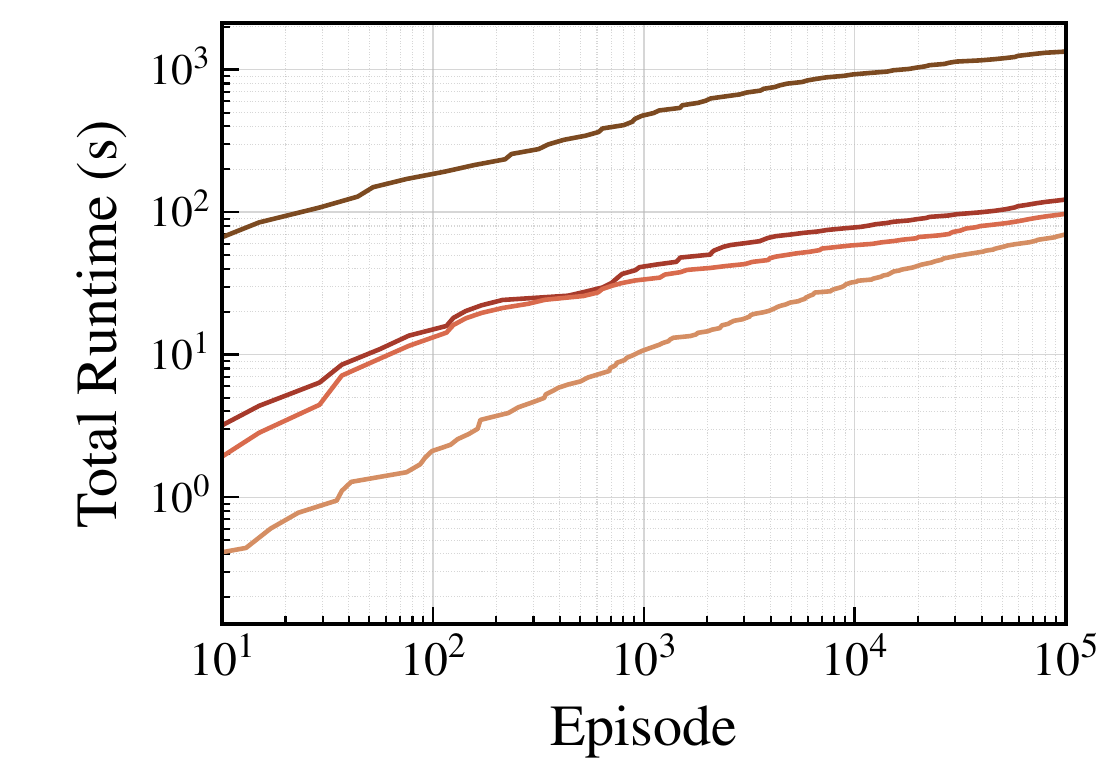}
    {Parallel Betting}
    {fig:bettingparallel}

      \captionof{figure}{Extended results for robust policy learning. For each benchmark, the upper plots compare performance and guarantees, and the lower plots show the total runtimes of each method.}
  \label{fig:learning_extended}
\end{center}

\subsection{Experimental Results with Ellipsoidal Uncertainty Sets}

This section reports the results obtained when instantiating our learning and solving framework with the ellipsoidal uncertainty sets discussed in Appendix~\ref{app:ellipsoidal}. Table~\ref{tab:bigtable} summarises the results for the uniform-sampling setting, comparing runtimes and the resulting certified guarantees across the different uncertainty sets and relaxations.
We omit the local rectangular relaxation $\mathcal{P}_{R(\mathcal{E})}$ from the table, as it timed out on all considered case studies. This is due to the high computational cost of solving a second-order cone programme for every state--action pair in every iteration of robust value iteration. This highlights the importance of the interval-based relaxations, which make ellipsoidal uncertainty practically usable by projecting it back to tractable interval uncertainty sets.

Overall, the results indicate that the approach based on projecting confidence intervals into the parameter space consistently yields tighter uncertainty sets and stronger guarantees than the ellipsoidal baselines. 

\begin{landscape}
\begin{table}[t]
\centering
\caption{Comparison of obtained bounds and runtimes for both building and solving the respective models obtained after sampling $10^5$ trajectories uniformly. This includes both polytopic and ellipsoidal uncertainty sets and their relaxations.}
\vspace{5pt}
\setlength{\tabcolsep}{0.5pt}
\small
\resizebox{1.58\textwidth}{!}{%
\def\PairSpacerColor{white}
\begin{tabular}{>{\centering\arraybackslash}m{2.35cm} >{\centering\arraybackslash}m{1.65cm} >{\centering\arraybackslash}m{2.0cm} @{\hspace{1.8pt}} >{\centering\arraybackslash}m{0.95cm} @{\hspace{1.8pt}} >{\centering\arraybackslash}m{0.95cm} @{\hspace{1.8pt}} >{\centering\arraybackslash}m{0.95cm} @{\hspace{3.2pt}} >{\columncolor{gray!10}\centering\arraybackslash}m{2.22cm} @{\color{\PairSpacerColor}\vrule width 1.8pt} >{\columncolor{gray!10}\centering\arraybackslash}m{1.20cm} @{\color{\PairSpacerColor}\vrule width 1.8pt} >{\columncolor{gray!10}\centering\arraybackslash}m{1.20cm} @{\color{\PairSpacerColor}\vrule width 1.8pt} >{\columncolor{gray!10}\centering\arraybackslash}m{1.20cm} @{\color{\PairSpacerColor}\vrule width 3.2pt} >{\columncolor{gray!10}\centering\arraybackslash}m{2.22cm} @{\color{\PairSpacerColor}\vrule width 1.8pt} >{\columncolor{gray!10}\centering\arraybackslash}m{1.20cm} @{\color{\PairSpacerColor}\vrule width 1.8pt} >{\columncolor{gray!10}\centering\arraybackslash}m{1.20cm} @{\color{\PairSpacerColor}\vrule width 1.8pt} >{\columncolor{gray!10}\centering\arraybackslash}m{1.20cm} @{\hspace{3.2pt}} >{\centering\arraybackslash}m{2.0cm} @{\hspace{1.8pt}} >{\centering\arraybackslash}m{0.95cm} @{\hspace{1.8pt}} >{\centering\arraybackslash}m{0.95cm} @{\hspace{1.8pt}} >{\centering\arraybackslash}m{0.95cm} @{\hspace{3.2pt}} >{\centering\arraybackslash}m{2.0cm} @{\hspace{1.8pt}} >{\centering\arraybackslash}m{0.95cm} @{\hspace{1.8pt}} >{\centering\arraybackslash}m{0.95cm} @{\hspace{1.8pt}} >{\centering\arraybackslash}m{0.95cm} @{\hspace{3.2pt}} >{\columncolor{gray!10}\centering\arraybackslash}m{2.22cm} @{\color{\PairSpacerColor}\vrule width 1.8pt} >{\columncolor{gray!10}\centering\arraybackslash}m{1.20cm} @{\color{\PairSpacerColor}\vrule width 1.8pt} >{\columncolor{gray!10}\centering\arraybackslash}m{1.20cm} @{\color{\PairSpacerColor}\vrule width 1.8pt} >{\columncolor{gray!10}\centering\arraybackslash}m{1.20cm}}
\toprule
\multirow[c]{2}{*}{\textbf{Benchmark}} & \multirow[c]{2}{*}{\textbf{Instance}} & \multicolumn{4}{c}{$\pmb{\mathcal{P}_I}$} & \multicolumn{4}{c}{$\pmb{\mathcal{P}_{\Theta(\mathcal{U})}}$} & \multicolumn{4}{c}{$\pmb{\mathcal{P}_{\Theta(\mathcal{E})}}$} & \multicolumn{4}{c}{$\pmb{\mathcal{P}_{\Lambda(\mathcal{U})}}$} & \multicolumn{4}{c}{$\pmb{\mathcal{P}_{\Lambda(\mathcal{E})}}$} & \multicolumn{4}{c}{$\pmb{\mathcal{P}_{R(\mathcal{U})}}$} \\
\cmidrule(lr){3-6} \cmidrule(lr){7-10} \cmidrule(lr){11-14} \cmidrule(lr){15-18} \cmidrule(lr){19-22} \cmidrule(lr){23-26}
 &  & $\mathbf{[\underline{V},\overline{V}]}$ & \textbf{Rel. gap} & \textbf{Build [s]} & \textbf{Solve [s]} & \cellcolor{white}$\mathbf{[\underline{V},\overline{V}]}$ & \cellcolor{white}\textbf{Rel. gap} & \cellcolor{white}\textbf{Build [s]} & \cellcolor{white}\textbf{Solve [s]} & \cellcolor{white}$\mathbf{[\underline{V},\overline{V}]}$ & \cellcolor{white}\textbf{Rel. gap} & \cellcolor{white}\textbf{Build [s]} & \cellcolor{white}\textbf{Solve [s]} & $\mathbf{[\underline{V},\overline{V}]}$ & \textbf{Rel. gap} & \textbf{Build [s]} & \textbf{Solve [s]} & $\mathbf{[\underline{V},\overline{V}]}$ & \textbf{Rel. gap} & \textbf{Build [s]} & \textbf{Solve [s]} & \cellcolor{white}$\mathbf{[\underline{V},\overline{V}]}$ & \cellcolor{white}\textbf{Rel. gap} & \cellcolor{white}\textbf{Build [s]} & \cellcolor{white}\textbf{Solve [s]} \\
\midrule
\global\def\PairSpacerColor{gray!10}\multirow[c]{3}{*}{Aircraft} & (50, 10) & $[0.67,\,0.814]$ & 0.19 & 0.30 & 0.43 & $[0.618,\,0.828]$ & 0.28 & 0.95 & 0.60 & $[0.599,\,0.835]$ & 0.32 & 0.50 & 0.50 & $[0.719,\,0.767]$ & 0.06 & 1.13 & 0.60 & $[0.71,\,0.78]$ & 0.09 & 1.86 & 0.63 & $[0.725,\,0.761]$ & \textbf{0.05} & 28.02 & 0.54 \\
 & (100, 20) & $[0.745,\,0.906]$ & 0.19 & 2.02 & 16.54 & $[0.679,\,0.918]$ & 0.29 & 6.56 & 27.81 & $[0.678,\,0.919]$ & 0.29 & 3.14 & 14.67 & $[0.805,\,0.865]$ & 0.07 & 7.47 & 19.30 & $[0.805,\,0.866]$ & 0.07 & 28.12 & 28.47 & $[0.814,\,0.858]$ & \textbf{0.05} & 217.95 & 20.55 \\
 & (200, 40) & $[0.628,\,0.92]$ & 0.36 & 15.71 & 282.38 & $[0.55,\,0.928]$ & 0.47 & 54.06 & 327.19 & $[0.575,\,0.926]$ & 0.44 & 22.96 & 266.26 & $[0.754,\,0.85]$ & 0.12 & 61.25 & 253.13 & $[0.768,\,0.844]$ & 0.09 & 1667.3 & 344.75 & $[0.769,\,0.838]$ & \textbf{0.09} & 1438.33 & 754.25 \\
\rule{0pt}{4pt} & \rule{0pt}{4pt} & \rule{0pt}{4pt} & \rule{0pt}{4pt} & \rule{0pt}{4pt} & \rule{0pt}{4pt} & \cellcolor{gray!10}\rule{0pt}{4pt} & \cellcolor{gray!10}\rule{0pt}{4pt} & \cellcolor{gray!10}\rule{0pt}{4pt} & \cellcolor{gray!10}\rule{0pt}{4pt} & \cellcolor{gray!10}\rule{0pt}{4pt} & \cellcolor{gray!10}\rule{0pt}{4pt} & \cellcolor{gray!10}\rule{0pt}{4pt} & \cellcolor{gray!10}\rule{0pt}{4pt} & \rule{0pt}{4pt} & \rule{0pt}{4pt} & \rule{0pt}{4pt} & \rule{0pt}{4pt} & \rule{0pt}{4pt} & \rule{0pt}{4pt} & \rule{0pt}{4pt} & \rule{0pt}{4pt} & \cellcolor{gray!10}\rule{0pt}{4pt} & \cellcolor{gray!10}\rule{0pt}{4pt} & \cellcolor{gray!10}\rule{0pt}{4pt} & \cellcolor{gray!10}\rule{0pt}{4pt} \\[-0.35em]
\multirow[c]{3}{*}{\shortstack[c]{Betting\\Game}} & (50, 0.55) & $[12.1,\,83.7]$ & 2.65 & 0.42 & 0.76 & $[25.2,\,28.5]$ & \textbf{0.12} & 0.90 & 1.17 & $[24.6,\,29.5]$ & 0.18 & 0.58 & 1.42 & $[25.2,\,28.5]$ & \textbf{0.12} & 1.80 & 1.04 & $[24.6,\,29.5]$ & 0.18 & 25.42 & 1.30 & $[25.2,\,28.5]$ & \textbf{0.12} & 0.91 & 0.50 \\
 & (100, 0.55) & $[16.5,\,162]$ & 3.24 & 1.48 & 5.36 & $[41.8,\,47.7]$ & \textbf{0.13} & 4.36 & 6.70 & $[40.9,\,49.5]$ & 0.19 & 2.53 & 8.04 & $[41.8,\,47.7]$ & \textbf{0.13} & 7.96 & 11.16 & $[40.9,\,49.5]$ & 0.19 & 584.54 & 6.64 & $[41.8,\,47.7]$ & \textbf{0.13} & 4.43 & 4.74 \\
 & (150, 0.55) & $[17.5,\,250]$ & 3.91 & 2.40 & 12.99 & $[54.7,\,63.3]$ & \textbf{0.15} & 9.34 & 9.67 & $[53.6,\,66.1]$ & 0.21 & 5.26 & 15.28 & $[54.7,\,63.3]$ & \textbf{0.15} & 15.75 & 14.86 & $[53.6,\,66.1]$ & 0.21 & 1916.81 & 5.58 & $[54.7,\,63.3]$ & \textbf{0.15} & 9.16 & 8.65 \\
\rule{0pt}{4pt} & \rule{0pt}{4pt} & \rule{0pt}{4pt} & \rule{0pt}{4pt} & \rule{0pt}{4pt} & \rule{0pt}{4pt} & \cellcolor{gray!10}\rule{0pt}{4pt} & \cellcolor{gray!10}\rule{0pt}{4pt} & \cellcolor{gray!10}\rule{0pt}{4pt} & \cellcolor{gray!10}\rule{0pt}{4pt} & \cellcolor{gray!10}\rule{0pt}{4pt} & \cellcolor{gray!10}\rule{0pt}{4pt} & \cellcolor{gray!10}\rule{0pt}{4pt} & \cellcolor{gray!10}\rule{0pt}{4pt} & \rule{0pt}{4pt} & \rule{0pt}{4pt} & \rule{0pt}{4pt} & \rule{0pt}{4pt} & \rule{0pt}{4pt} & \rule{0pt}{4pt} & \rule{0pt}{4pt} & \rule{0pt}{4pt} & \cellcolor{gray!10}\rule{0pt}{4pt} & \cellcolor{gray!10}\rule{0pt}{4pt} & \cellcolor{gray!10}\rule{0pt}{4pt} & \cellcolor{gray!10}\rule{0pt}{4pt} \\[-0.35em]
\multirow[c]{3}{*}{Engagement} & (100, 0.30) & $[38.4,\,49.3]$ & 0.26 & 0.07 & 2.40 & $[16.6,\,5422]$ & 127.35 & 0.10 & 1.58 & $[15.3,\,9740]$ & 229.13 & 0.10 & 2.05 & $[39.8,\,45]$ & \textbf{0.12} & 0.13 & 0.46 & $[35.6,\,53.2]$ & 0.41 & 0.13 & 0.49 & $[39.8,\,45]$ & \textbf{0.12} & 0.74 & 0.31 \\
 & (300, 0.30) & $[38.6,\,46.8]$ & 0.19 & 0.16 & 5.92 & $[18.7,\,3615]$ & 84.74 & 0.24 & 6.57 & $[15.7,\,9110]$ & 214.27 & 0.20 & 6.90 & $[40.8,\,45.3]$ & \textbf{0.11} & 0.28 & 3.34 & $[36.7,\,50.6]$ & 0.33 & 0.26 & 3.04 & $[40.8,\,45.3]$ & \textbf{0.11} & 0.86 & 2.28 \\
 & (1000, 0.30) & $[39.1,\,45.4]$ & 0.15 & 0.50 & 20.16 & $[24,\,687]$ & 15.63 & 0.67 & 20.46 & $[16,\,7857]$ & 184.74 & 0.45 & 20.89 & $[41.3,\,43.8]$ & \textbf{0.06} & 0.69 & 20.60 & $[38.5,\,47.7]$ & 0.22 & 0.56 & 19.95 & $[41.3,\,43.8]$ & \textbf{0.06} & 1.28 & 40.77 \\
\rule{0pt}{4pt} & \rule{0pt}{4pt} & \rule{0pt}{4pt} & \rule{0pt}{4pt} & \rule{0pt}{4pt} & \rule{0pt}{4pt} & \cellcolor{gray!10}\rule{0pt}{4pt} & \cellcolor{gray!10}\rule{0pt}{4pt} & \cellcolor{gray!10}\rule{0pt}{4pt} & \cellcolor{gray!10}\rule{0pt}{4pt} & \cellcolor{gray!10}\rule{0pt}{4pt} & \cellcolor{gray!10}\rule{0pt}{4pt} & \cellcolor{gray!10}\rule{0pt}{4pt} & \cellcolor{gray!10}\rule{0pt}{4pt} & \rule{0pt}{4pt} & \rule{0pt}{4pt} & \rule{0pt}{4pt} & \rule{0pt}{4pt} & \rule{0pt}{4pt} & \rule{0pt}{4pt} & \rule{0pt}{4pt} & \rule{0pt}{4pt} & \cellcolor{gray!10}\rule{0pt}{4pt} & \cellcolor{gray!10}\rule{0pt}{4pt} & \cellcolor{gray!10}\rule{0pt}{4pt} & \cellcolor{gray!10}\rule{0pt}{4pt} \\[-0.35em]
\multirow[c]{3}{*}{Mars Rover} & (50, 50) & $[0,\,0.797]$ & 1.59 & 3.22 & 46.72 & $[0.177,\,0.833]$ & 1.31 & 6.26 & 44.63 & $[0.174,\,0.837]$ & 1.32 & 5.03 & 46.19 & $[0.488,\,0.515]$ & \textbf{0.05} & 7.07 & 42.73 & $[0.476,\,0.525]$ & 0.10 & 23.26 & 43.82 & -- & -- & TO & TO \\
 & (75, 75) & $[0,\,0.895]$ & 1.42 & 6.35 & 240.78 & $[0.0904,\,0.9]$ & 1.28 & 19.94 & 232.96 & $[0.0801,\,0.901]$ & 1.30 & 15.01 & 216.61 & $[0.619,\,0.647]$ & \textbf{0.04} & 21.99 & 235.26 & $[0.605,\,0.658]$ & 0.08 & 108.24 & 217.49 & -- & -- & TO & TO \\
 & (125, 125) & $[0,\,0.877]$ & 1.42 & 15.65 & 1279.81 & $[0.0441,\,0.922]$ & 1.42 & 114.78 & 1326.19 & $[0.0422,\,0.922]$ & 1.43 & 57.16 & 1590.10 & $[0.599,\,0.635]$ & \textbf{0.06} & 116.29 & 1384.91 & -- & -- & TO & TO & -- & -- & TO & TO \\
\rule{0pt}{4pt} & \rule{0pt}{4pt} & \rule{0pt}{4pt} & \rule{0pt}{4pt} & \rule{0pt}{4pt} & \rule{0pt}{4pt} & \cellcolor{gray!10}\rule{0pt}{4pt} & \cellcolor{gray!10}\rule{0pt}{4pt} & \cellcolor{gray!10}\rule{0pt}{4pt} & \cellcolor{gray!10}\rule{0pt}{4pt} & \cellcolor{gray!10}\rule{0pt}{4pt} & \cellcolor{gray!10}\rule{0pt}{4pt} & \cellcolor{gray!10}\rule{0pt}{4pt} & \cellcolor{gray!10}\rule{0pt}{4pt} & \rule{0pt}{4pt} & \rule{0pt}{4pt} & \rule{0pt}{4pt} & \rule{0pt}{4pt} & \rule{0pt}{4pt} & \rule{0pt}{4pt} & \rule{0pt}{4pt} & \rule{0pt}{4pt} & \cellcolor{gray!10}\rule{0pt}{4pt} & \cellcolor{gray!10}\rule{0pt}{4pt} & \cellcolor{gray!10}\rule{0pt}{4pt} & \cellcolor{gray!10}\rule{0pt}{4pt} \\[-0.35em]
\multirow[c]{3}{*}{Glider} & (21, 17) & $[41.1,\,44.6]$ & 0.08 & 0.07 & 0.04 & $[42.4,\,42.9]$ & \textbf{0.01} & 0.15 & 0.05 & $[42,\,43.3]$ & 0.03 & 0.23 & 0.04 & $[42.5,\,42.8]$ & \textbf{0.01} & 0.30 & 0.04 & $[42.2,\,43.1]$ & 0.02 & 2.15 & 0.04 & $[42.5,\,42.8]$ & \textbf{0.01} & 21.90 & 0.03 \\
 & (51, 47) & $[89.8,\,22443]$ & 221.40 & 0.17 & 11.20 & $[100.66,\,101.21]$ & 0.01 & 0.82 & 0.27 & $[100,\,102]$ & 0.01 & 0.34 & 0.28 & $[100.76,\,101.12]$ & \textbf{0.00} & 5.97 & 0.27 & $[100.57,\,101.32]$ & 0.01 & 11.47 & 0.27 & $[100.76,\,101.12]$ & \textbf{0.00} & 5782.72 & 0.15 \\
 & (71, 67) & $[87,\,30071]$ & 288.64 & 0.11 & 20.00 & $[103,\,105]$ & \textbf{0.01} & 1.82 & 0.60 & $[102,\,105]$ & 0.03 & 0.66 & 0.65 & $[103,\,104]$ & \textbf{0.01} & 21.50 & 0.61 & $[103,\,105]$ & 0.02 & 49.26 & 0.50 & -- & -- & TO & TO \\
\rule{0pt}{4pt} & \rule{0pt}{4pt} & \rule{0pt}{4pt} & \rule{0pt}{4pt} & \rule{0pt}{4pt} & \rule{0pt}{4pt} & \cellcolor{gray!10}\rule{0pt}{4pt} & \cellcolor{gray!10}\rule{0pt}{4pt} & \cellcolor{gray!10}\rule{0pt}{4pt} & \cellcolor{gray!10}\rule{0pt}{4pt} & \cellcolor{gray!10}\rule{0pt}{4pt} & \cellcolor{gray!10}\rule{0pt}{4pt} & \cellcolor{gray!10}\rule{0pt}{4pt} & \cellcolor{gray!10}\rule{0pt}{4pt} & \rule{0pt}{4pt} & \rule{0pt}{4pt} & \rule{0pt}{4pt} & \rule{0pt}{4pt} & \rule{0pt}{4pt} & \rule{0pt}{4pt} & \rule{0pt}{4pt} & \rule{0pt}{4pt} & \cellcolor{gray!10}\rule{0pt}{4pt} & \cellcolor{gray!10}\rule{0pt}{4pt} & \cellcolor{gray!10}\rule{0pt}{4pt} & \cellcolor{gray!10}\rule{0pt}{4pt} \\[-0.35em]
\multirow[c]{3}{*}{Parallel Betting} & (5, 0.55) & $[24,\,29.4]$ & 0.20 & 0.14 & 0.03 & $[25.1,\,28.2]$ & 0.12 & 0.41 & 0.03 & $[23.1,\,30.6]$ & 0.28 & 0.35 & 0.03 & $[26.1,\,27.2]$ & \textbf{0.04} & 0.49 & 0.03 & $[24.9,\,28.6]$ & 0.14 & 1.71 & 0.03 & $[26.1,\,27.2]$ & \textbf{0.04} & 6.84 & 0.03 \\
 & (10, 0.55) & $[26.4,\,40.6]$ & 0.44 & 1.06 & 0.44 & $[29.7,\,34.5]$ & 0.15 & 2.59 & 0.44 & $[26.9,\,37.9]$ & 0.34 & 1.79 & 0.45 & $[31.1,\,32.9]$ & 0.06 & 3.12 & 0.44 & $[29.3,\,34.9]$ & 0.17 & 25.91 & 0.21 & $[31.1,\,32.8]$ & \textbf{0.05} & 73.99 & 0.21 \\
 & (20, 0.55) & $[13,\,110]$ & 3.39 & 5.56 & 4.81 & $[25.4,\,31.7]$ & \textbf{0.22} & 898.14 & 4.48 & $[22.6,\,35.8]$ & 0.46 & 21.96 & 3.22 & -- & -- & TO & TO & -- & -- & TO & TO & -- & -- & TO & TO \\
\bottomrule
\end{tabular}
} 
\label{tab:bigtable}
\end{table}
\end{landscape}

\end{document}